%%%%%%%% ICML 2023 EXAMPLE LATEX SUBMISSION FILE %%%%%%%%%%%%%%%%%
%\documentclass[nohyperref]{article}
\documentclass{article}

% Recommended, but optional, packages for figures and better typesetting:
\usepackage{microtype}
\usepackage{graphicx}
\usepackage{subfigure}
\usepackage{booktabs} % for professional tables

% hyperref makes hyperlinks in the resulting PDF.
% If your build breaks (sometimes temporarily if a hyperlink spans a page)
% please comment out the following usepackage line and replace
% \usepackage{icml} with \usepackage[nohyperref]{icml} above.
\usepackage{hyperref}

% Attempt to make hyperref and algorithmic work together better:

% Use the following line for the initial blind version submitted for review:
% \usepackage{icml}
\usepackage[accepted]{icml2023}

% If accepted, instead use the following line for the camera-ready submission:
% \usepackage[accepted]{icml}

% For theorems and such
\usepackage{amsmath}
\usepackage{amssymb}
\usepackage{mathtools}
\usepackage{amsthm}

\usepackage{wrapfig,booktabs}
\usepackage{paralist}
\usepackage{bm}
\usepackage{multirow}
\usepackage{caption}

% if you use cleveref..
\usepackage[capitalize,noabbrev]{cleveref}
\graphicspath{{./figs/}}

%%%%%%%%%%%%%%%%%%%%%%%%%%%%%%%%
% THEOREMS
%%%%%%%%%%%%%%%%%%%%%%%%%%%%%%%%
\theoremstyle{plain}
\newtheorem{theorem}{Theorem}[section]

\newtheorem{corollary}[theorem]{Corollary}
\theoremstyle{definition}
\newtheorem{definition}[theorem]{Definition}

\theoremstyle{remark}

\usepackage[textsize=tiny]{todonotes}

\newcommand{\eat}[1]{}
\newcommand{\vsa}{\vspace*{-0.28cm}}
\newcommand{\vsb}{\vspace*{-0.19cm}}

\newcommand{\cbit}{\begin{compactitem}}
	\newcommand{\ceit}{\end{compactitem}}
\newcommand{\cben}{\begin{compactenum}}
	\newcommand{\ceen}{\end{compactenum}}
\newcommand{\defeq}{\overset{\mathrm{def}}{=\joinrel=}}
\DeclareMathOperator*{\argmin}{argmin}

\newcommand{\normlone}{L^1}

% ====== reduce spaces
\setlength{\textfloatsep}{10pt plus 1pt minus 1pt}         % set space between float and text
\setlength{\floatsep}{10pt plus 1pt minus 1pt}             % set space between two floats
\setlength{\intextsep}{4pt plus 1pt minus 1pt}             % set space between text and float
\setlength{\abovecaptionskip}{2pt}                         % set space between figure and caption
\setlength{\belowcaptionskip}{2pt}                         % set space between figure and caption
\setlength{\columnsep}{14pt}                               % set space between columns
% ====== reduce space around equations
\setlength{\belowdisplayskip}{4pt} \setlength{\belowdisplayshortskip}{4pt}
\setlength{\abovedisplayskip}{4pt} \setlength{\abovedisplayshortskip}{4pt}

% The \icmltitle you define below is probably too long as a header.
% Therefore, a short form for the running title is supplied here:

\icmltitlerunning{Do Not Train It: A Linear Neural Architecture Search of Graph Neural Networks}

\begin{document}

\twocolumn[
\icmltitle{
    Do Not Train It: A Linear Neural Architecture Search of Graph Neural Networks
}

% It is OKAY to include author information, even for blind
% submissions: the style file will automatically remove it for you
% unless you've provided the [accepted] option to the icml2023
% package.

% List of affiliations: The first argument should be a (short)
% identifier you will use later to specify author affiliations
% Academic affiliations should list Department, University, City, Region, Country
% Industry affiliations should list Company, City, Region, Country

% You can specify symbols, otherwise they are numbered in order.
% Ideally, you should not use this facility. Affiliations will be numbered
% in order of appearance and this is the preferred way.
% \icmlsetsymbol{equal}{*}

\icmlsetsymbol{equal}{*}

\begin{icmlauthorlist}
    \icmlauthor{Peng Xu}{cuhk,equal}
    \icmlauthor{Lin Zhang}{idea,equal}
    \icmlauthor{Xuanzhou Liu}{sigs}
    \icmlauthor{Jiaqi Sun}{sigs}
    \icmlauthor{Yue Zhao}{cmu}
    \icmlauthor{Haiqin Yang}{idea}
    \icmlauthor{Bei Yu}{cuhk}
\end{icmlauthorlist}

\icmlaffiliation{sigs}{Shenzhen International Graduate School, Tsinghua University, China}
\icmlaffiliation{idea}{International Digital Economy Academy, China}
\icmlaffiliation{cuhk}{The Chinese University of Hong Kong, China}
\icmlaffiliation{cmu}{Carnegie Mellon University, USA}

\icmlcorrespondingauthor{Bei Yu}{byu@cse.cuhk.edu.hk}
\icmlcorrespondingauthor{Lin Zhang}{linzhang0529@gmail.com}

% \begin{icmlauthorlist}
% \icmlauthor{Firstname1 Lastname1}{equal,yyy}
% \icmlauthor{Firstname2 Lastname2}{equal,yyy,comp}
% \icmlauthor{Firstname3 Lastname3}{comp}
% \icmlauthor{Firstname4 Lastname4}{sch}
% \icmlauthor{Firstname5 Lastname5}{yyy}
% \icmlauthor{Firstname6 Lastname6}{sch,yyy,comp}
% \icmlauthor{Firstname7 Lastname7}{comp}
% %\icmlauthor{}{sch}
% \icmlauthor{Firstname8 Lastname8}{sch}
% \icmlauthor{Firstname8 Lastname8}{yyy,comp}
% %\icmlauthor{}{sch}
% %\icmlauthor{}{sch}
% \end{icmlauthorlist}

% \icmlaffiliation{yyy}{Department of XXX, University of YYY, Location, Country}
% \icmlaffiliation{comp}{Company Name, Location, Country}
% \icmlaffiliation{sch}{School of ZZZ, Institute of WWW, Location, Country}

% \icmlcorrespondingauthor{Firstname1 Lastname1}{first1.last1@xxx.edu}
% \icmlcorrespondingauthor{Firstname2 Lastname2}{first2.last2@www.uk}

% You may provide any keywords that you
% find helpful for describing your paper; these are used to populate
% the "keywords" metadata in the PDF but will not be shown in the document
\icmlkeywords{Machine Learning, ICML}

\vskip 0.3in
]

% this must go after the closing bracket ] following \twocolumn[ ...

% This command actually creates the footnote in the first column
% listing the affiliations and the copyright notice.
% The command takes one argument, which is text to display at the start of the footnote.
% The \icmlEqualContribution command is standard text for equal contribution.
% Remove it (just {}) if you do not need this facility.

%\printAffiliationsAndNotice{}  % leave blank if no need to mention equal contribution
\printAffiliationsAndNotice{\icmlEqualContribution} % otherwise use the standard text.

\global\long\def\real{\mathbb{R}}

\begin{abstract}
Neural architecture search (NAS) for Graph neural networks (GNNs), called NAS-GNNs, has achieved significant performance over manually designed GNN architectures. However, these methods inherit issues from the conventional NAS methods, such as high computational cost and optimization difficulty. More importantly, previous NAS methods have ignored the uniqueness of GNNs, where GNNs possess expressive power without training.
With the randomly-initialized weights, we can then seek the optimal architecture parameters via the sparse coding objective and derive a novel NAS-GNNs method, namely neural architecture coding (NAC). Consequently, our NAC holds a no-update scheme on GNNs and can efficiently compute in linear time. Empirical evaluations on multiple GNN benchmark datasets demonstrate that our approach leads to state-of-the-art performance, which is up to $200\times$ faster and $18.8\%$ more accurate than the strong baselines.
\end{abstract}

\vsa 
\section{Introduction} 
Remarkable progress of graph neural networks (GNNs) has boosted research in various domains, such as traffic prediction, recommender systems, etc., as summarized in~\citep{2021GNNs}. The central paradigm of GNNs is to generate node embeddings through the message-passing mechanism~\citep{gnn2020_Hamilton}, including passing, transforming, and aggregating node features across the input graph. 
Despite its effectiveness, designing GNNs requires laborious efforts to choose and tune neural architectures for different tasks and datasets~\citep{you2020design}, which limits the usability of GNNs. To automate the process, researchers have made efforts to leverage neural architecture search (NAS)~\citep{liu2019darts,zhang2021automated} 
for GNNs,
% to frame the designing of GNNs as an architecture searching problem, 
including GraphNAS~\citep{gao2020graph}, Auto-GNN~\citep{zhou2019autognn}, {PDNAS~\citep{zhao2020probabilistic}} and SANE~\citep{zhao2021search}.
In this work, we refer the problem of NAS for GNNs as \textit{NAS-GNNs}.
%, which is generally known as neural architecture search (NAS)~\citep{liu2019darts,zhang2021automated}.

% Recent work has studied how to wrap GNNs into the NAS framework, such as GraphNAS~\citep{gao2020graph}, Auto-GNN~\citep{zhou2019autognn}, and SANE~\citep{zhao2021search},  and thus enables learning an adaptive GNN architecture with less human effort.

\begin{figure}
    \centering
    % \caption{Caption}
    \label{fig:my_label}
    \centering
    \includegraphics[trim={0.2cm 0.cm 0.cm 0.2cm},clip,width=0.35\textwidth]{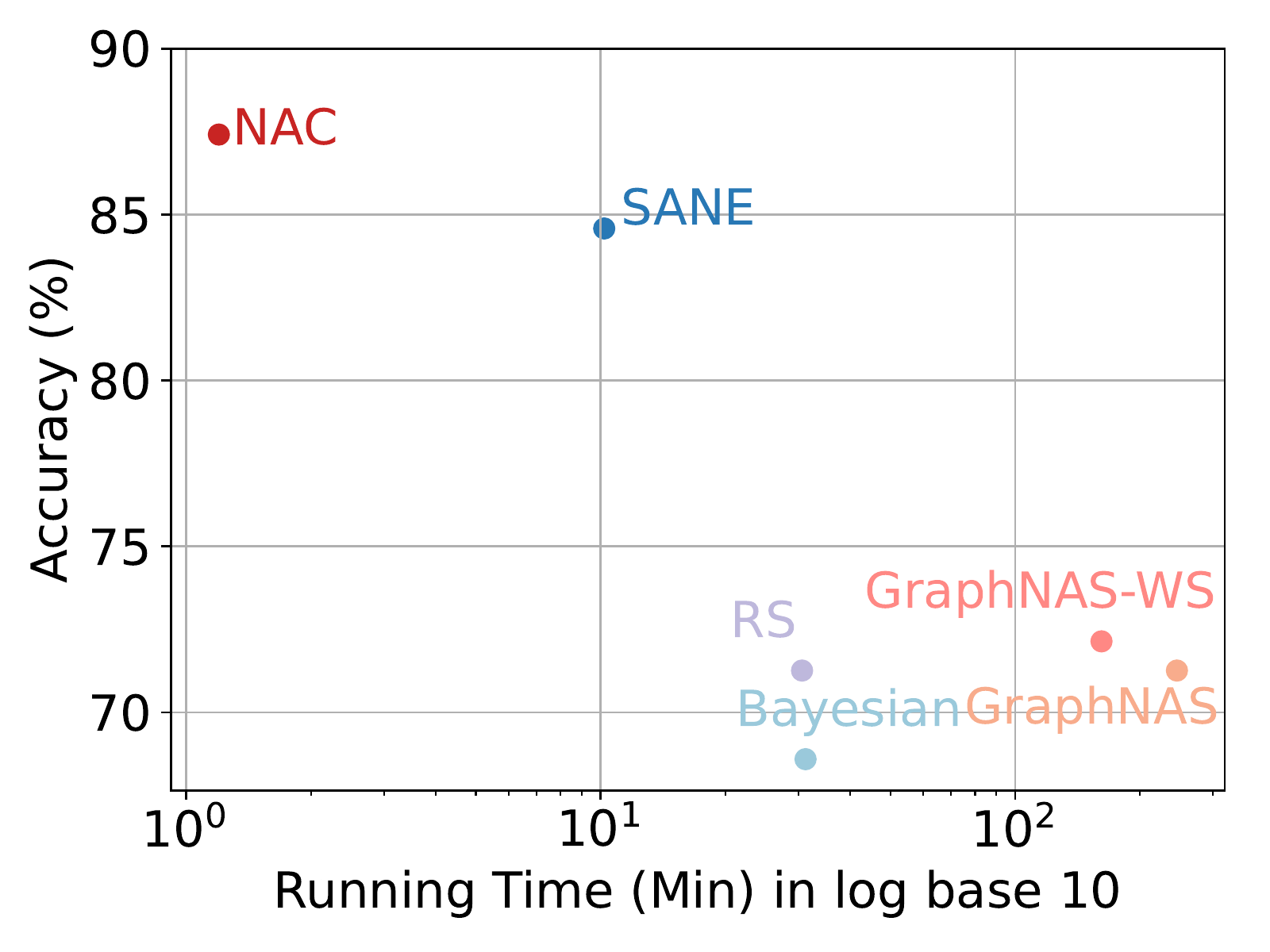}
    \caption{ \footnotesize
     Accuracy vs. running time on Cora. NAC (ours) outperforms the leading methods significantly in both accuracy and speed (in minutes).
    }
    \label{fig:overview}  \vsb
\end{figure}

While NAS-GNNs have shown promising results, they inherit issues from general NAS methods and fail to account for the unique properties of GNN operators. It is important to understand the difficulty in general NAS training (e.g., architecture searching and weight evaluation). Based on the searching strategy, NAS methods can be categorized into three types: reinforcement learning-based methods~\citep{DBLP:conf/iclr/ZophL17}, evolutionary algorithms-based methods~\citep{jozefowicz2015empirical}, and differential-based methods~\citep{liu2019darts,wu2019fbnet}. Both reinforcement learning-based and evolutionary algorithm-based methods suffer from high computational costs due to the need to re-train sampled architectures from scratch. On the contrary, the weight-sharing differential-based paradigm reuses the neural weights to reduce the search effort and produces the optimal sub-architecture directly without excessive processes, such as sampling, leading to significant computational cost reduction and becoming the new frontier of NAS.

However, the weight sharing paradigm requires the neural weights to reach {optimality} so as to obtain the optimal sub-architecture based on its bi-level optimization (BLO) strategy~\citep{liu2019darts}, which alternately optimizes the network weights (outputs of operators) and architecture parameters (importance of operators). First, it is hard to achieve the optimal neural weights in general due to the curse of dimensionality in deep learning, leading to unstable searching results, also called the optimization gap~\citep{xie2020weightsharing}. Second, this paradigm often shows a sloppy gradient estimation~\citep{bi2020stabilizing,bi2020goldnas,guo2020single} due to the alternating optimization, softmax-based estimation, and unfairness in architecture updating. This type of work suffers from slow convergence during training and is sensitive to initialization due to the wide use of early termination. If not worse, it is unclear why inheriting weights for a specific architecture is still efficient---the weight updating and sharing lack interpretability. 

\textit{Is updating the GNN weights necessary? Or, does updating weights contribute to optimal GNN architecture searching?} Existing NAS-GNN methods rely on updating the weights, and in fact, all these issues raised are due to the need to update weights to the optimum. Unlike other deep learning structures, graph neural networks behave almost linearly, so they can be simplified as linear networks while maintaining superior performance~\citep{wu2019sgc}. Inspired by this, we find that the untrained GNN model is nearly optimal in theory. \textbf{Note that the first paper on modern GNNs, i.e., GCN~\citep{kipf2016gcn}, already spotted this striking phenomenon in the experiment, but gained little attention.} To the best of our knowledge, our work is the first to unveil this and provide theoretical proof. The issues mentioned before may not be as much of a concern given no weight update is needed, making NAS-GNN much simpler.

In this paper, we formulate the NAS-GNN problem as a sparse coding problem by leveraging the untrained GNNs, called neural architecture coding (NAC). We prove that untrained GNNs have built-in orthogonality, making the output dependent on the linear output layer. With no-update scheme, we only need to optimize the architecture parameters, resulting in a single-level optimization strategy as opposed to the bi-level optimization in the weight-sharing paradigm, which reduces the computational cost significantly and improves the optimization stability. Much like the sparse coding problem~\citep{zhang2015survey}, our goal is also to learn a set of sparse coefficients for selecting operators when treating these weights collectively as a dictionary, making sharing weights straightforward and understandable. Through extensive experiments on multiple challenging benchmarks, we demonstrate that our approach is competitive with the state-of-the-art baselines,
while decreasing the computational cost significantly, as shown in \Cref{fig:overview}.

In summary, our main contributions are:{
\begin{compactitem}[-]
\item {\bf Problem Formulation}: We present (to our best knowledge) the first linear complexity NAS algorithm for GNNs, namely NAC, which is solved by sparse coding.
\item {\bf Theoretical Analysis}: Our NAC holds a no-update scheme, which  is theoretically justified by the built-in model linearity in GNNs and orthogonality in the model weights. 
\item {\bf Effectiveness and Efficiency}: We compare NAC with state-of-the-art baselines and show superior performance in both accuracy and speed.  Especially, NAC brings up to $18.8\%$ improvement in terms of accuracy and is $200\times$ faster than baselines. 
\end{compactitem}
}%\todo{change data to model weights}
\eat{\\
\noindent{\bf $\bullet$ Problem Formulation:}
We 
% show how to 
reformulate the NAS-GNN problem as a sparse coding problem,
% . Based on this, we 
proposed (to our best knowledge) {the first linear complexity NAS algorithm for GNNs}, called NAC.\\
\noindent{\bf $\bullet$ Theoretical Analysis:}
%We prove the soundness of {untrained} GNN models, providing theoretical guarantees for achieving optimal or near-optimal solution with minimal effort. This finding potentially paves a new way for NAS-GNN research. 
 We prove an untrained GNN can approximate the optimal output due to the built-in linearity and orthogonality in high dimensional data. \\
\noindent{\bf $\bullet$ Effectiveness and Efficiency:}
% Our proposed method, 
NAC shows superior performance in both accuracy and speed when compared with state-of-the-art baselines. 
NAC brings up to $18.8\%$ improvement in terms of accuracy and is $200\times$ faster than baselines. %\\
}
% \textit{Our code will be made publicly available when accepted.}

\section{Related work and Preliminaries}  
\label{sec:related_work}
% In this section, we provide background of graph neural networks (\S \ref{subsec:gnn}) and neural architecture search (\S \ref{subsec:nas}). 
%\eat{
%\subsection{Graph Neural Networks} 
% \label{subsec:gnn}
% \subsubsection{Background}
 
{\bf Graph Neural Networks} (GNNs)
% Graph Neural Networks (GNNs) 
are powerful representation learning techniques \citep{xu2019powerful} with many key applications \citep{hamilton2018inductive}. Early GNNs are motivated from the spectral perspective, such as Spectral GNN~\citep{bruna2014spectral} that applies the Laplacian operators directly. ChebNet~\citep{defferrard2017convolutional} approximates these operators using summation instead to avoid a high computational cost. GCN~\citep{kipf2017semisupervised} further simplifies ChebNet by using its first order, and reaches the balance between efficiency and effectiveness, revealing the message-passing mechanism of modern GNNs.
Concretely, recent GNNs aggregate node features from neighbors and stack multiple layers to capture long-range dependencies.
For instance, GraphSAGE~\citep{hamilton2018inductive} concatenates nodes features with mean/max/LSTM pooled neighbouring information. GAT~\citep{Petar2018graph} aggregates neighbor information using learnable attention weights. GIN~\citep{xu2019powerful} converts the aggregation as a learnable function based on the Weisfeiler-Lehman test instead of prefixed ones as other GNNs, aiming to maximize the power of GNNs. %\textcolor{red}{we could shorten this if out of space.}
% Due to its importance and complexity, we choose to demonstrate the use of NAC for GNNs architecture search in this work, while it is also suited for other neural network applications.

\eat{
% \subsubsection{GNNs Formulation}
{\bf GNNs Formulation:}
It is useful to provide a layout of modern GNNs. Let $\mathcal{G}=(\mathcal{V}, \mathcal{E})$ denote an undirected graph having node feature vectors $\boldsymbol{x}_{i}$ for all $i \in \mathcal{V}$, and its adjacency matrix is  ${A} \in\{0,1\}^{|\mathcal{V}| \times|\mathcal{V}|}$. 
% We impose an graph on these nodeS based on their node features, denoting as  $\mathcal{G}^F=(\mathcal{V}, \mathcal{E}^F)$, where the adjacency matrix is written as  ${A}^F \in\{0,1\}^{|\mathcal{V}| \times|\mathcal{V}|}$. 
% To address potential numerical issues, we often apply the renormalization trick that adding self-loops for all nodes, also called as augmented normalized adjacency matrix, 
% which is written as $\tilde{{D}}^{-1 / 2} \tilde{{A}} \tilde{{D}}^{-1 / 2}$ with $\tilde{{A}}={A}+{I}$ and $\tilde{{D}}={D}+{I}$. 
The corresponding normalized Laplacian matrix
is defined as $L=I-{{D}}^{-1 / 2} {{A}} {{D}}^{-1 / 2} = U \operatorname{diag}(\boldsymbol{\lambda}) U^{T}$.
}

GNNs consist of two major components, 
% i.e., aggregation and combination. 
% More specifically, 
where the {\em aggregation} step aggregates node features of target nodes' neighbors and the {\em combination} step passes previous 
aggregated features to networks to generate node embeddings. 
Mathematically, we can update node $v$'s embedding at the $l$-th layer by 
%$\mathbf{h}_{v}^{l}=\phi\left[ {W}^{l} \cdot \mathbf{A G G}_{\text {node }}\left(\left\{\mathbf{h}_{u}^{l-1}, \forall u \in {N}(v)\right\}\right)\right]$,
\begin{equation}\label{eq:GNN}
\bm{h}_{v}^{l}=\phi\left( \bm{W}^{l} \cdot o\left(\left\{\bm{h}_{v}^{l-1}, \forall u \in {N}(v)\right\}\right)\right),
\end{equation}
%$\mathbf{h}_{v}^{l}=\phi\left[ {W}^{l} \cdot o\left(\left\{\mathbf{h}_{u}^{l-1}, \forall u \in {N}(v)\right\}\right)\right]$,
where ${N}(v)$ denotes the neighbours of $v$. $\bm{W}^{l}$ denotes the trainable weight shared by all nodes at the $l$-layer.  $\phi$ is an activation function (with ReLU as default). 
{A main difference of different GNNs lies in the design of the the aggregation functions, $o(\cdot)$.  As GNNs have a large design space~\citep{you2020design}, this makes it challenging for the architecture search.
}
\eat{The difference among GNNs mostly lies in 
% the definition of 
aggregation functions, $o(\cdot)$, also known as operators.
% It is easy to see 
GNNs have a large design space \citep{you2020design}, making its architecture design challenging.}
% of GNN architectures a challenging task. 
% \textcolor{red}{It is easy to see GNNs have a large design space \citep{you2020design}, which makes the design of GNN architectures a challenging task.}
% See \citep{hamilton2018inductive} for more details.
% }

% \subsection{Neural Architecture Search} 
% \label{subsec:nas}
{\bf Neural Architecture Search} 
% \subsubsection{Background} 
% {\bf Background:}
~\citep{xie2017genetic} generally includes three major parts, i.e., search spaces, search strategies, and evaluation methods. 
% Notably, new NAS approaches are mainly focused on search strategies and evaluation methods, which are often mutually influenced/dependent.
% % In the early stages, the architecture search is implemented 
% Early NAS approaches mostly sample possible architectures, and then train on all to pick the best.
% Methods in this category often use reinforcement learning~\citep{DBLP:conf/iclr/ZophL17,baker2017designing}  or evolutionary algorithm~\citep{real2017largescale,so2019evolved}.
% Clearly, these methods suffer from high computational costs by excessive evaluation.
% because of the enumeration and individual evaluation. 
Recent advances adopt weight-sharing~\citep{xie2020weightsharing} to avoid (re-)training from scratch as in previous methods, where any architecture can share the weights directly for evaluation without re-training.
In particular, making the weights and architectural parameters differentiable~\citep{liu2019darts,luo2019neural} further improves the training efficiency.
Although efficient, weight-sharing NAS has genetic issues that are caused by its bi-level optimization, including over-fitting and inaccurate gradient estimation~\citep{xie2020weightsharing}, making these methods collapse unexpectedly. A variety of methods are proposed to address this from different perspectives, while mainly adding regularization towards weights, 
such as using $k$-shot~\citep{su2021kshot},  paths dropout~\citep{bender18a}, sampling paths~\citep{guo2020powering}, sparsity~\citep{zhang2018search}.
However, these cures are still within the bi-level optimization framework, which can at best alleviate the problem but not {directly tackle it.}  \eat{address it.}
In this work, we propose NAC, a linear (i.e., single-level) optimization paradigm, to address the issues in weight-sharing NAS.

Concretely, consider WS-NAS  defined by $\left \{\mathcal{A}, \mathcal{W}  \right \}$, where $\mathcal{A}$ and $\mathcal{W} $ denote the search space and weights of neural networks, respectively.
WS-NAS addresses two subproblems of these two space alternatively using a bi-level optimization framework~\citep{liu2019darts}, including weight optimization and architecture optimization as shown in the following.
% \begin{equation}~\label{darts}
%     \begin{aligned}
% \begin{cases}
% w=\underset{w}{\operatorname{argmin}} \mathcal{L}_{\text {train }}(\boldsymbol{\alpha}, w) \\ 
% \boldsymbol{\alpha}^{*} =\underset{\boldsymbol{\alpha}}{\operatorname{argmax}}\mathcal{M}_{va l}\left(w^{*}(\boldsymbol{\alpha}), \boldsymbol{\alpha}\right),
% \end{cases}
% \end{aligned}
% \end{equation}
%\textcolor{blue}
% \begin{equation}~\label{darts}
%     \begin{aligned}
% \begin{cases}
% \boldsymbol{\alpha}^{*} =\underset{\boldsymbol{\alpha}}{\operatorname{argmax}}~\mathcal{M}_{va l}\left(w^{*}(\boldsymbol{\alpha}), \boldsymbol{\alpha}\right)\\ 
% w^{*}(\boldsymbol{\alpha}) =\underset{w}{\operatorname{argmin}}~\mathcal{L}_{\text {train }}(\boldsymbol{\alpha}, w),
% \end{cases}
% \end{aligned}
% \end{equation}
\begin{equation}~\label{darts}
    \begin{aligned}
\boldsymbol{\alpha}^{*} &= \underset{\boldsymbol{\alpha}}{\operatorname{argmax}}~\mathcal{M}_{val}\left(w^{*}(\boldsymbol{\alpha}), \boldsymbol{\alpha}\right),\\
\quad \mbox{s.t.} \quad & \boldsymbol{w}^{*}(\boldsymbol{\alpha}) =\underset{\boldsymbol{w}}{\operatorname{argmin}}~\mathcal{L}_{\text {train }}(\boldsymbol{\alpha}, \boldsymbol{w}),
\end{aligned}
\end{equation}
where $\mathcal{L}$ denotes a loss function in the training set.  $\mathcal{M}$ denotes an evaluation metric, such as accuracy, in the validation set. 
Optimizing separately,  WS-NAS avoids the potential 
biased caused by imbalance dimensionality between $w$ and $\boldsymbol{\alpha}$, i.e., the neural weights and the architecture parameters.  DARTS~\citep{liu2019darts} further
converts the discrete search space into a differentiable one with the help of the softmax function that is writing as $ \frac{\exp \left(\alpha_{o}^{(i, j)}\right)}{\sum_{o^{\prime} \in \mathcal{O}} \exp \left(\alpha_{o^{\prime}}^{(i, j)}\right)}$,
% i.e. $\frac{\exp \left(\alpha_{o}^{(i, j)}\right)}{\sum_{o^{\prime} \in \mathcal{O}} \exp \left(\alpha_{o^{\prime}}^{(i, j)}\right)}$, 
% \begin{equation}
%     \begin{aligned}
%          \frac{\exp \left(\alpha_{o}^{(i, j)}\right)}{\sum_{o^{\prime} \in \mathcal{O}} \exp \left(\alpha_{o^{\prime}}^{(i, j)}\right)}, 
%     \end{aligned}
% \end{equation}
where 
$\mathcal{O}$ represents the search space consisting of all operators. 
%\textcolor{red}{$\boldsymbol{\alpha}$ is not explained}
% \begin{equation}
% \footnotesize
%     \begin{aligned}
%     \bar{o}^{i j}(x)=\sum_{o \in \mathcal{O}} \frac{\exp \left({\boldsymbol{\alpha}}_{o}^{i j}\right)}{\sum_{o^{\prime} \in \mathcal{O}} \exp \left({\boldsymbol{\alpha}}_{o^{\prime}}^{i j}\right)} o(x)
%     \end{aligned}
% \end{equation},
Thus, we can get the importance of each operator in the format of probability.
However, it is difficult to reach the optimal state for the objectives defined in \Cref{darts} simultaneously, because achieving optimality in deep neural networks is still an open question, and the gradient w.r.t. $\boldsymbol{\alpha}$ is intractable~\citep{xie2020weightsharing}.
In practice, researchers often apply early stopping to tackle it, which leads to unexpected model collapse~\citep{zela2020understanding} and initialization sensitivity~\citep{xie2020weightsharing}.
More related work can be found at \Cref{appendix:related_work}.
% and thus lead inferior performance dominated by operators like skip-connection
% because bi-level optimization is the strongly NP-hard problem~\citep{Hansen1992} and often involves uncertainties in the solution~\citep{Sinha2018}. 
\eat{
Another line of work~\citep{chen2021neural}~\citep{shu2021nasi}  uses neural tangent kernels (NTK) to search the network structure, called training-free NAS and focuses on  CNN architectures. 
These hold strong assumptions when analyzing due to the need for infinite width of networks, and thus far from reality.
In contrast, we do not have such an assumption by taking advantage of the built-in linearity in GNNs to get untrained GNNs to work.
}

{\bf NAS for Graph Neural Networks:}
% \textcolor{red}{
% With so many potential operators to use, GNNs' performance highly depends on careful design architecture. 
% Indeed, 
Many NAS methods have been proposed and adapted for GNNs, %including [], [], [], [].
% Recent work has studied how to wrap GNN into the NAS framework, 
such as GraphNAS~\citep{gao2020graph}, Auto-GNN~\citep{zhou2019autognn}, and SANE~\citep{zhao2021search},  and thus enable learning an adaptive GNN architecture with less human effort.
% GraphNAS\todo{add here }.
% \textcolor{blue}{ 
% GraphNAS enables automatic design of the graph neural architecture based on reinforcement learning framework. 
Methods like Auto-GNN and GraphNAS adopt reinforcement learning (RL) for searching, and thus suffer from the expensive computational cost.
Thanks to the weight-sharing policy, SANE~\citep{zhao2021search} avoids heavy retraining processes in RL methods, which leads to superior performance in general.
However, this type of work also faces  issues from the weight-sharing differential framework, such as bias optimization and instability due to bi-level optimization. In this work, our proposed method NAC focuses on the NAS for GNNs, and we use GraphNAS and SANE as our main baselines.

\eat{
\subsection{Graph Neural Networks (GNNs)}
\todo{LIN: MERGE THIS}

{\bf Background.} $\quad$ Let $\mathcal{G}=(\mathcal{V}, \mathcal{E})$ denote an undirected graph having node feature vectors $\boldsymbol{x}_{i}$ for all $i \in \mathcal{V}$, and its adjacency matrix is  ${A} \in\{0,1\}^{|\mathcal{V}| \times|\mathcal{V}|}$. 
% We impose an graph on these nodeS based on their node features, denoting as  $\mathcal{G}^F=(\mathcal{V}, \mathcal{E}^F)$, where the adjacency matrix is written as  ${A}^F \in\{0,1\}^{|\mathcal{V}| \times|\mathcal{V}|}$. 
% To address potential numerical issues, we often apply the renormalization trick that adding self-loops for all nodes, also called as augmented normalized adjacency matrix, 
% which is written as $\tilde{{D}}^{-1 / 2} \tilde{{A}} \tilde{{D}}^{-1 / 2}$ with $\tilde{{A}}={A}+{I}$ and $\tilde{{D}}={D}+{I}$. 
The corresponding normalized Laplacian matrix
is defined as $L=I-{{D}}^{-1 / 2} {{A}} {{D}}^{-1 / 2} = U \operatorname{diag}(\boldsymbol{\lambda}) U^{T}$.

Graph neural networks (GNNs) often consist of two major components, including aggregation and combination. More specifically, the aggregation step is to aggregate node features of target nodes' neighbors, and combination step is to  pass previous 
aggregated features to networks to generate node embedding. 
Mathematically, we have the updates of node $v$'s embedding at $l$th layer as follows,

\begin{equation}~\label{gnns}
    \begin{aligned}
\mathbf{h}_{v}^{l}=\phi\left[ {W}^{l} \cdot \mathbf{A G G}_{\text {node }}\left(\left\{\mathbf{h}_{u}^{l-1}, \forall u \in {N}(v)\right\}\right)\right],
    \end{aligned}
\end{equation}
where $ {N}(v)$ denotes the neighbours; $\bm{W}^{l}$ denotes trainable weight shared by all nodes;
$\phi$ is an activation function, taking ReLU as default. See \citep{hamilton2018inductive} for more details.

}

%%%%%%%%
%%%%%%%%%
%%%%%%%

{\bf Sparse Coding}~\citep{zhang2015survey} is a well-studied topic, and its goal is to learn an informative representation as a linear combination from a collection of basis or atom, which is called a \textit{dictionary} collectively.
%, and a subset of atoms is often named as a \textit{subdictionary}.
%  also called \textit{atoms}
% The transformed GNNs by Corollary~\ref{coro:gnn_dict} leads to a well-known research called  dictionary learning (DL) and 
% sparse coding (SC)~\cite{aharon2006ksvd}.
% leads to an equivalence between GNNs and sparse coding.
The standard SC formulation is written as follows:% 
% \todo{introduce over-complete?}
% \small
\begin{equation}
\begin{aligned}
\footnotesize
    \min_{\bm{\mathcal{D}},\bm{\Gamma}}~~ \mathcal{L}(\bm{\mathcal{V}},\bm{\mathcal{D}},\bm{\Gamma})&=
    \lVert \bm{\mathcal{V}} - \bm{\mathcal{D}} \bm{\Gamma}\rVert_F^2,  \\
   \mbox{s.t.}~~ \forall i, ||\bm{\Gamma}_{\cdot i}||_0\leq \tau_0; 
   %  s.t. \forall i, \mathcal{S}(\bm{\Gamma}_{\cdot i})\leq \tau_0;
    &\forall j, \lVert \bm{\mathcal{D}}_{\cdot j}\rVert_2=1,
\end{aligned}
\label{eq:sparsecoding}
\end{equation}
where $\bm{\Gamma}$ is the sparse coefficient vector, $\bm{\mathcal{D}}$ is the dictionary, $\tau_0$ is a value to control the sparsity. The first constraint is to encourage a sparse representation on $\bm{\Gamma}$ and the second constraint is to normalize the atoms in the dictionary. \eat{The first regularization is to encourage a sparse representation of the target $\bm{\mathcal{V}}$, and the second is to normalize the atoms, a.k.a the atom normalization.}NAS aims to find a small portion of operators from a large set of operators, making it a sparse coding problem in nature.  We, therefore, reformulate the NAS-GNN problem as a sparse coding problem. \eat{Due to this connection, we reformulate the NAS-GNN problem as a sparse coding problem.}
\section{Network Architecture Coding (NAC) for NAS-GNN} 
\label{sec:nac}
To find the best-performing architecture, we introduce a novel NAS-GNN  paradigm, Neural Architecture Coding (NAC), based on sparse Coding, which requires no update on the neural weights during training.
Technical motivation for NAC stems from the observation that an untrained GNN performs well, and training it might not provide extra benefit.
{In the remainder of this section, we provide a theoretical analysis of why no-update GNNs are preferred (\Cref{subsec:theory}) and propose a new NAS-GNN paradigm based on our theorems by leveraging untrained GNNs (\Cref{subsec:nac}).}

\subsection{Analysis of No-update Scheme in GNNs} 
\label{subsec:theory}
In the performance analysis of GNNs, the nonlinear activation function $\phi$ defined in \Cref{eq:GNN} is often skipped. Following~\citep{wu2019sgc}, we simplify a GNN as $\bm{F}=\bm{A}\,\dots\,\bm{A}(\bm{A}\bm{X}\bm{W}_1)\bm{W}_2\dots\,\bm{W}_L=\bm{A}^L\bm{X}\prod_{l=1}^{L}\bm{W}_l$,
where $\bm{A}$ is the adjacency matrix, $\bm{W}_l$ is the neural weight at the $l$-th layer from a total number of layers, $L$.  We then obtain the final output as $Out = \bm{F}\bm{W}_o$ by passing a linear layer $\bm{W}_o$ . 
At the $t$-th iteration, we denote the GNN weight at the $l$-th layer as $\bm{W}_l(t)$ and the theoretical output layer weight as $\tilde{\bm{W}}_{o}(t)$.

By initializing the network for all layers, i.e., $[\bm{W}_{1:L}, \bm{W}_o]$, we aim to train the network to get the optimal weights, denoted as $[\bm{W}_{1:L}^*,\bm{W}_o^*]$, where we assume the optimal weight can be obtained at $+\infty$-th iteration: $\bm{W}_l^*=\bm{W}_l(+\infty)$.
We use gradient descent as the default optimizer in GNNs.
\eat{By default, we perform gradient descent to attain the optimizer in GNNs.}
\eat{By default, we adopt gradient descent as the optimizer in GNNs.}

We now provide our first main theorem to investigate why an untrained GNN model can attain the same performance as the optimal one.

\begin{theorem}
\label{theo:theo1}
Assume $\bm{W}_l(0)$ is randomly initialized for all $l\in [1, L]$, if  $\prod_{l=1}^{L}\bm{W}_l(0)$ is full rank, there must exist a weight matrix for the output layer, i.e., $\tilde{\bm{W}_o}$, that makes the final output the same as the one from a well-trained network:  
\begin{equation}
\label{eq:theo}
    \bm{A}^L\bm{X}\prod_{l=1}^{L}\bm{W}_l^*\bm{W}_{o}^*=\bm{A}^L\bm{X}\prod_{l=1}^{L}\bm{W}_l(0)\tilde{\bm{W}_{o}}.
\end{equation}
\end{theorem}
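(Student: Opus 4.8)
The plan is to prove the statement by explicit construction, since it only asserts the \emph{existence} of a suitable output weight $\tilde{\bm{W}_o}$. Write $\bm{P}_0 \defeq \prod_{l=1}^{L}\bm{W}_l(0)$ for the product of the randomly-initialized layer weights and $\bm{P}^{*} \defeq \prod_{l=1}^{L}\bm{W}_l^{*}$ for the trained product. With this notation the claimed identity \Cref{eq:theo} reads $\bm{A}^L\bm{X}\,\bm{P}_0\,\tilde{\bm{W}_o}=\bm{A}^L\bm{X}\,\bm{P}^{*}\,\bm{W}_o^{*}$. Since both sides carry the common left factor $\bm{A}^L\bm{X}$, it suffices to match the effective linear maps acting on the feature dimension, i.e.\ to find $\tilde{\bm{W}_o}$ with $\bm{P}_0\,\tilde{\bm{W}_o}=\bm{P}^{*}\,\bm{W}_o^{*}$; any solution of this smaller system automatically satisfies \Cref{eq:theo}.

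Next I would invoke the full-rank hypothesis on $\bm{P}_0$. In the standard setting where all hidden widths coincide, $\bm{P}_0$ is square, so ``full rank'' means invertible and I can simply set
\[
\tilde{\bm{W}_o}=\bm{P}_0^{-1}\,\bm{P}^{*}\,\bm{W}_o^{*}.
\]
Substituting back gives $\bm{P}_0\tilde{\bm{W}_o}=\bm{P}_0\bm{P}_0^{-1}\bm{P}^{*}\bm{W}_o^{*}=\bm{P}^{*}\bm{W}_o^{*}$, which verifies the identity and establishes existence. I would also remark that the hypothesis is generic: each Gaussian-initialized $\bm{W}_l(0)$ is invertible almost surely, and a product of invertible matrices is invertible, so $\bm{P}_0$ is full rank with probability one, which legitimizes the assumption. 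For the rectangular case of unequal widths, where $\bm{P}_0$ has shape $d_0\times d_L$, I would reinterpret ``full rank'' as full row rank and replace the inverse by the Moore--Penrose right inverse $\bm{P}_0^{+}$ obeying $\bm{P}_0\bm{P}_0^{+}=\bm{I}$, taking $\tilde{\bm{W}_o}=\bm{P}_0^{+}\,\bm{P}^{*}\,\bm{W}_o^{*}$; the same substitution then closes the argument.

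The main obstacle is precisely this dimensional bookkeeping. The clean square/invertible case is immediate, but the rectangular regime needs care: if $\bm{P}_0$ has full \emph{column} rank rather than full row rank, the reduced system $\bm{P}_0\tilde{\bm{W}_o}=\bm{P}^{*}\bm{W}_o^{*}$ may be inconsistent, so I would either restrict to the full-row-rank regime (the relevant one when the hidden dimension is at least the feature dimension) or argue separately that the target $\bm{P}^{*}\bm{W}_o^{*}$ lies in the range of $\bm{P}_0$. Making the meaning of ``full rank'' explicit, and confirming that the constructed $\tilde{\bm{W}_o}$ genuinely reproduces the trained output rather than merely approximating it, is where the real work lies.
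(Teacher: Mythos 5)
Your proposal is correct and follows essentially the same route as the paper: the paper's proof likewise reads ``full rank'' as invertible, defines $\tilde{\bm{W}_o} \defeq (\prod_{l=1}^{L}\bm{W}_l(0))^{-1}\prod_{l=1}^{L}\bm{W}_l^*\bm{W}_{o}^*$, and substitutes back. Your additional discussion of the rectangular/full-row-rank case and the genericity of the hypothesis is careful bookkeeping the paper does not bother with, but it does not change the argument.
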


{Please see \Cref{theo:no-update_scheme} for more details.} This theorem implies that the output layer alone can ensure the optimality of the network even when all previous layers have no updates. Next, we show that training in the standard setting of GNNs is guaranteed to reach the desired weight of the output layer, i.e. $\tilde{\bm{W}_{o}}$, under mild conditions. We begin by providing the theorem from~\citep{NEURIPS2018Suriya}.

\begin{theorem}[From~\citep{NEURIPS2018Suriya}]
Assume we have the following main conditions: 1) the loss function is either the exponential loss or the log-loss; 2) gradient descent is employed to seek the optimal solution; and 3) data is linearly separable.

When defining a linear neural network as $ \bm{Y}=\bm{X}\bm{W}_1\bm{W}_2.... .\bm{W}_L=\bm{X}\prod_{l=1}^{L}{\bm{W}_l} = \bm{X} \bm{\beta}$, we always have the same optimal weight regardless of the number of layers when using gradient descent,
\begin{equation}
    \begin{aligned}
    \footnotesize
    \scriptsize
    \bm{\beta}^* &= \argmin_{\bm{\beta}}{\Vert\bm{\beta}\Vert^2}, \quad
    \mbox{s.t.} \quad \bm{X}\bm{\beta} \odot \bm{s}\ge \bm{1},
    \end{aligned}
\end{equation}
where $\bm{s}$ is the ground-truth label vectors with elements as $1$ or $-1$; $\bm{1}$ denotes a vector of all ones; $\odot$ denotes the element-wise product.
\label{theo:theo2}
\end{theorem}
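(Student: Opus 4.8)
The plan is to read this as a statement about the \emph{implicit bias} of gradient descent rather than about a finite minimizer: since the data are linearly separable and the loss is exponential (or log, whose tail is asymptotically exponential), $\mathcal{L}(\bm{\beta})=\sum_i \exp(-s_i\,\bm{x}_i^\top\bm{\beta})$ has infimum $0$ attained only as $\lVert\bm{\beta}\rVert\to\infty$. Hence $\bm{\beta}^*$ is not a genuine finite optimum; the real content is that the \emph{direction} $\bm{\beta}(t)/\lVert\bm{\beta}(t)\rVert$ converges to the hard-margin separator, which I would then identify with the stated quadratic program via SVM duality. So the first step is to record that gradient descent with a suitable step size drives $\mathcal{L}\to 0$ and $\lVert\bm{\beta}\rVert\to\infty$, and that the gradient $\nabla\mathcal{L}=-\sum_i s_i\,\bm{x}_i\exp(-s_i\,\bm{x}_i^\top\bm{\beta})$ is a nonnegative mixture of the signed data points whose weights concentrate exponentially on the examples of smallest margin.

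Next I would analyze the asymptotics of the effective predictor $\bm{\beta}=\prod_{l=1}^L\bm{W}_l$ using the ansatz $\bm{\beta}(t)=\hat{\bm{w}}\,\log t+\bm{\rho}(t)$, where $\hat{\bm{w}}$ is the (scaled) max-margin direction and $\bm{\rho}(t)$ is to be shown bounded. Substituting, the per-example weights decay like $t^{-s_i\,\bm{x}_i^\top\hat{\bm{w}}}$, so only the active constraints $s_i\,\bm{x}_i^\top\hat{\bm{w}}=1$ survive in the limit. The residual equation for $\bm{\rho}(t)$ then admits a bounded solution precisely when $\hat{\bm{w}}$ lies in the conic hull of these support vectors with appropriate coefficients — which is exactly the stationarity and complementary-slackness (KKT) system of the QP $\min\lVert\bm{\beta}\rVert^2$ subject to $s_i\,\bm{x}_i^\top\bm{\beta}\ge 1$. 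This pins the limiting direction to the max-margin solution in the single-layer picture.

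The crux — and the step I expect to be the main obstacle — is justifying the phrase \emph{regardless of the number of layers}, i.e. that the non-convex product parametrization $\bm{\beta}=\prod_l\bm{W}_l$ produces the same $L^2$ max-margin bias as a single layer rather than some depth-dependent reweighting. For this I would invoke the conservation laws of the gradient flow: the quantities $\bm{W}_{l+1}\bm{W}_{l+1}^\top-\bm{W}_l^\top\bm{W}_l$ are invariant along the flow, so the layers remain \emph{balanced}. Because the end-to-end gradient with respect to $\bm{\beta}$ still points along the data directions $-\nabla\mathcal{L}(\bm{\beta})$, balancedness forces the product to grow and align along the Euclidean geometry, so its limiting direction coincides with the single-layer max-margin separator; depth only rescales magnitudes. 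Controlling this joint multi-layer dynamics, and ruling out a depth-induced change of the effective norm, is the delicate part and would require care with the initialization and with the existence of the directional limit.

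Finally I would close the loop by SVM duality: the max-margin separator is, up to positive scaling, the unique minimizer of $\lVert\bm{\beta}\rVert^2$ over the feasible polytope $\{\bm{\beta}:s_i\,\bm{x}_i^\top\bm{\beta}\ge 1\}$, which is nonempty by separability and on which the strictly convex objective has a unique minimizer. Identifying this minimizer with the limiting direction of $\bm{\beta}(t)$ then yields $\bm{\beta}^*$ as stated, independent of $L$.
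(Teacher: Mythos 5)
The paper does not prove this statement at all: it is imported verbatim from the cited reference \citep{NEURIPS2018Suriya} and used as a black box in the proof of \Cref{eq:theo2}, so there is no internal proof to compare your attempt against. Judged on its own terms, your sketch is a faithful outline of how the result is actually established in the implicit-bias literature: the reinterpretation of $\bm{\beta}^*$ as a limiting \emph{direction} rather than a finite minimizer (since the exponential/log loss has infimum $0$ attained only at infinity), the $\bm{\beta}(t)=\hat{\bm{w}}\log t+\bm{\rho}(t)$ ansatz with exponentially concentrating per-example weights, and the identification of the surviving active constraints with the KKT system of the hard-margin QP are exactly the single-layer argument of Soudry et al., and the reduction of the deep case to the shallow one is the content of the cited work.

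The one place where your proposal has a genuine gap --- which you partly acknowledge --- is the ``regardless of the number of layers'' step. Two issues: first, the conservation law $\bm{W}_{l+1}\bm{W}_{l+1}^\top-\bm{W}_l^\top\bm{W}_l=\text{const}$ holds exactly only for gradient \emph{flow}; for discrete gradient descent (which is what both the theorem statement and the paper's downstream use assume) it is only approximately conserved, and the cited result instead proceeds by characterizing the limit direction of the end-to-end predictor through stationarity of the loss in the \emph{parameters} $\{\bm{W}_l\}$, showing that any such limit must satisfy the KKT conditions of $\min\lVert\bm{\beta}\rVert^2$ because the fully-connected product parametrization is surjective and its induced norm on $\bm{\beta}$ is still the Euclidean norm. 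Second, ``balancedness forces the product to align along the Euclidean geometry'' is precisely the claim that needs proof, not a premise: for other linear parametrizations (e.g.\ convolutional) the same balancedness reasoning would wrongly predict an $L^2$ bias, whereas the actual limit is a different norm. So to close the argument you must use the specific structure of the fully-connected product --- namely that the induced ``cost'' of realizing a given $\bm{\beta}$ is $\lVert\bm{\beta}\rVert^{2/L}$ up to constants, whose minimizers over the feasible set coincide with those of $\lVert\bm{\beta}\rVert^2$ for every $L$ --- rather than balancedness alone. With that substitution your outline matches the cited proof.
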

This theorem suggests that the optimal weights obtained by the gradient descent are the same regardless of how many layers are in the network, which is equivalent to finding a max-margin separating hyperplane.

Following this, we prove that $\tilde{\bm{W}_o}$ can be obtained based on the above conditions:
\begin{theorem}
~\label{eq:theo2}
Assume a GNN model has either the exponential loss or the log-loss, the desired weight  $\tilde{\bm{W}_o}$ is secured when updating with gradient descent and  initializing $\prod_{l=1}^{L}\bm{W}_l(0)$ as an orthogonal matrix. Mathematically, we have $\hat{\bm{W}_{o}}(+\infty)=\hat{\bm{W}_{o}}^*=\tilde{\bm{W}_{o}}$. 
\end{theorem}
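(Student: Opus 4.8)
The plan is to chain \Cref{theo:theo1} (existence of $\tilde{\bm{W}_o}$) with \Cref{theo:theo2} (the implicit bias of gradient descent toward the max-margin solution), using the orthogonality of $\prod_{l=1}^{L}\bm{W}_l(0)$ as the bridge between them. Write $\bm{Q}\defeq\prod_{l=1}^{L}\bm{W}_l(0)$, which by hypothesis is orthogonal, so $\bm{Q}^{-1}=\bm{Q}^{T}$ and $\|\bm{Q}\bm{v}\|_2=\|\bm{v}\|_2$ for all $\bm{v}$; in particular $\bm{Q}$ is full rank, so the hypothesis of \Cref{theo:theo1} is met and the matrix $\tilde{\bm{W}_o}$ it produces exists. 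Throughout I would treat the fixed propagation $\bm{A}^L\bm{X}$ as the ``data'' fed to the linear classifier.

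First I would pin down exactly what $\tilde{\bm{W}_o}$ is. By \Cref{theo:theo1} the trained and no-update outputs coincide, i.e. $\bm{A}^L\bm{X}\,\bm{Q}\,\tilde{\bm{W}_o}=\bm{A}^L\bm{X}\prod_{l}\bm{W}_l^*\bm{W}_o^*$; cancelling the common factor $\bm{A}^L\bm{X}$ (full column rank) gives $\bm{Q}\,\tilde{\bm{W}_o}=\prod_l\bm{W}_l^*\bm{W}_o^*\defeq\bm{\beta}^*$, hence $\tilde{\bm{W}_o}=\bm{Q}^{-1}\bm{\beta}^*=\bm{Q}^{T}\bm{\beta}^*$. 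Applying \Cref{theo:theo2} to the \emph{full} network with data $\bm{A}^L\bm{X}$, gradient descent drives the end-to-end product to the max-margin point $\bm{\beta}^*=\argmin\|\bm{\beta}\|^2$ subject to $(\bm{A}^L\bm{X}\bm{\beta})\odot\bm{s}\ge\bm{1}$, which identifies the $\bm{\beta}^*$ above.

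Next I would analyze the no-update network, in which only $\hat{\bm{W}_o}$ is trained while the front layers stay frozen at $\bm{Q}$, so the effective features are $\bm{A}^L\bm{X}\bm{Q}$. This is again a linear network, so \Cref{theo:theo2} applies and gradient descent converges to $\hat{\bm{W}_o}^*=\argmin\|\bm{W}\|^2$ subject to $(\bm{A}^L\bm{X}\bm{Q}\bm{W})\odot\bm{s}\ge\bm{1}$. The crux is to show $\hat{\bm{W}_o}^*=\tilde{\bm{W}_o}$: I would substitute $\bm{\beta}=\bm{Q}\bm{W}$, under which the constraint becomes identical to the full-network constraint, and because $\bm{Q}$ is orthogonal the objective is invariant, $\|\bm{W}\|^2=\|\bm{Q}^{T}\bm{\beta}\|^2=\|\bm{\beta}\|^2$. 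Thus the two margin programs are the same problem up to this norm-preserving change of variables, their minimizers obey $\bm{\beta}^*=\bm{Q}\,\hat{\bm{W}_o}^*$, and therefore $\hat{\bm{W}_o}^*=\bm{Q}^{T}\bm{\beta}^*=\tilde{\bm{W}_o}$. Combining with the convergence statements gives $\hat{\bm{W}_o}(+\infty)=\hat{\bm{W}_o}^*=\tilde{\bm{W}_o}$, as claimed.

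The step I expect to be the main obstacle is this change-of-variables equivalence of the two max-margin programs: it goes through cleanly only because orthogonality makes $\bm{Q}$ an $L^2$-isometry, so this is precisely where the orthogonal-initialization hypothesis is indispensable---a generic full-rank $\bm{Q}$ would distort the norm and displace the minimizer, breaking $\hat{\bm{W}_o}^*=\tilde{\bm{W}_o}$. A secondary point to handle carefully is the cancellation of $\bm{A}^L\bm{X}$ when extracting $\tilde{\bm{W}_o}$ from \Cref{theo:theo1}, which requires $\bm{A}^L\bm{X}$ to have full column rank (otherwise the argument should be restricted to its row space); I would state this explicitly as a working assumption inherited from the setup.
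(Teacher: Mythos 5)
Your proof is correct and follows essentially the same route as the paper's: both apply the implicit-bias result (Theorem 3.2) to the fully trained network and to the network with frozen front layers, and both use the orthogonality of $\prod_{l=1}^{L}\bm{W}_l(0)$ to identify the two max-margin solutions via $\bm{O}\hat{\bm{W}_{o}}^{*}=\bm{\beta}^{*}$. The only difference is that you justify the key invariance step with an explicit norm-preserving change of variables (and you correctly flag the full-column-rank condition on $\bm{A}^{L}\bm{X}$ needed to extract $\tilde{\bm{W}_{o}}$ from the output equality), whereas the paper simply asserts that the max-margin hyperplane is preserved under orthogonal transformations and defines $\tilde{\bm{W}_{o}}$ explicitly as $(\prod_{l=1}^{L}\bm{W}_l(0))^{-1}\prod_{l=1}^{L}\bm{W}_l^{*}\bm{W}_{o}^{*}$.
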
 

{Please see \Cref{theo:no-update_scheme} for more details.} In summary, our proposed theorems prove that a GNN with randomly initialized weights can make the final output as good as a well-trained network, where one needs to update the overall network with gradient descent and initialize networks with orthogonal weights. Apart from the optimality, another immediate benefit of this no-update scheme is that the computational cost will be significantly reduced. This is of particular importance when the budget is limited, as the main computational cost of NAS comes from this update.

\eat{In this work, we treat the entire NAS seach space as a general GNN model where each layer is a mixture of multiple operators.} 

{The above theorems show that one can approximate the optimal output by using the network without training, where we only need to update the final linear layer of the search space. This finding motivates us to learn architectures by taking advantage of randomly initialized networks. 
We want to emphasize that real-world scenarios can break the optimal conditions, such as the high complexity of data and the early stopping of the training. Even though, our strategy still provides near-optimal performance, if not the best, in experiments. 
}

\subsection{Architecture Searching via Sparse Coding}  
\label{subsec:nac}
{In this work, we treat the entire NAS search space of GNNs as a general GNN model where each layer is a mixture of multiple operators. Inspired by~\citet{balcilar2021bridge}, we propose a unified expression for each GNN layer in the search space in the following theorem, whose proof is given in \Cref{proof:fixed}.
\begin{theorem}
~\label{theo:gnn_layer_format}
Simplifying the complex non-linear functions, each GNN layer in the search space can be unified by $\sum_k\mathrm{P}_{l}^k(L)\bm{H}_{l}\bm{W}_{l}^{k}$, where $\mathrm{P}(\cdot)$ denotes the fixed polynomial function with subscripts indicating individual terms, and superscript $(l)$ marks the current layer.
\end{theorem}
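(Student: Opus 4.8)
The plan is to follow the spectral--spatial bridge of \citet{balcilar2021bridge} and prove the claim constructively: for every operator in the search space I would exhibit an explicit polynomial of the Laplacian that realizes it. First I would strip the activation $\phi$ from \Cref{eq:GNN} (this is precisely what ``simplifying the complex non-linear functions'' means), so that a layer reduces to a linear map on the features. Recalling that the symmetric normalized adjacency satisfies $\bm{D}^{-1/2}\bm{A}\bm{D}^{-1/2} = \bm{I} - L$, any structural matrix built from powers and products of $\bm{A}$, $\bm{D}$ and $\bm{I}$ is itself a polynomial in $L$. This single algebraic fact drives the whole argument: it lets me move freely between adjacency-based spatial descriptions and $L$-based polynomial descriptions.

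Next comes a case analysis. Every GNN layer in the search space can be written in the convolution-support form $\bm{H}_{l+1} = \sum_s \bm{C}_l^{(s)}\, \bm{H}_l\, \bm{W}_l^{(s)}$, where each support $\bm{C}_l^{(s)}$ depends only on the graph. I would verify operator by operator that each support is a fixed polynomial $\mathrm{P}_l^s(L)$: GCN uses the single support $\widehat{\bm{A}} = \bm{I} - L$ (degree one); SGC and $k$-hop aggregators use $\widehat{\bm{A}}^k = (\bm{I}-L)^k$; ChebNet uses the Chebyshev polynomials $T_k(\widetilde{L})$ with $\widetilde{L} = 2L/\lambda_{\max} - \bm{I}$, which are polynomials in $L$ by construction; and the mean/self-loop aggregators of GraphSAGE and GIN contribute the supports $\bm{I}$ and $\widehat{\bm{A}}$, again low-degree polynomials. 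Collecting the per-operator supports across $s$ and relabelling the index $(s,l)\to k$ then yields exactly the claimed $\sum_k \mathrm{P}_l^k(L)\,\bm{H}_l\,\bm{W}_l^k$.

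The genuinely hard step is the attention-based operator (GAT), whose support is \emph{data- and parameter-dependent} and in general neither symmetric nor commuting with $L$, so it is not literally a polynomial in $L$. Here I would invoke the ``simplify/fix'' clause of the statement: once the softmax attention coefficients are frozen at their (random) initialization, the aggregation becomes a fixed matrix whose sparsity pattern is contained in the edge set. Adopting the graph-signal-processing viewpoint, I would describe its action through a frequency response $\Psi^{(s)}(\lambda)$ evaluated on the finite spectrum of $L$, and invoke Lagrange interpolation over those finitely many points to obtain a polynomial $p^{(s)}$ with $p^{(s)}(\lambda_i)=\Psi^{(s)}(\lambda_i)$. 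This gives $\bm{C}_l^{(s)} = \Phi\,\operatorname{diag}(\Psi^{(s)}(\lambda))\,\Phi^{\top} = p^{(s)}(L) = \mathrm{P}_l^s(L)$, which is exact for any support that shares the eigenbasis of $L$ (since $L$ has at most $n$ distinct eigenvalues, a degree-$(n{-}1)$ polynomial suffices).

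I expect reconciling the non-commuting attention support with a polynomial of the symmetric $L$ to be the main obstacle, and it is the reason the theorem is phrased only ``simplifying the non-linear functions.'' My intended resolution is to replace the frozen attention support by its symmetrized spectral counterpart before interpolating, so that the identity $\bm{C}_l^{(s)} = \mathrm{P}_l^s(L)$ holds exactly; the general case is then an approximation/unification rather than an exact equality, which I would flag explicitly rather than overclaim. Once each support is pinned to such a polynomial, summing over operators and carrying the layer weights $\bm{W}_l^k$ through gives the unified expression.
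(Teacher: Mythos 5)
Your overall route is the same as the paper's: both arguments pass to the convolution-support form $\bm{H}_{l+1}=\sum_s \bm{C}_l^{(s)}\bm{H}_l\bm{W}_l^{(s)}$ after stripping the activation, invoke the spectral--spatial bridge of \citet{balcilar2021bridge} to identify each support with a polynomial in (a rescaled) Laplacian, absorb the rescaling by composing polynomials, and relabel indices to obtain $\sum_k\mathrm{P}_l^k(L)\bm{H}_l\bm{W}_l^k$. Where you differ is in two places. First, you verify the supports operator by operator (GCN, ChebNet, SAGE, GIN, $k$-hop powers of $\bm{I}-L$), whereas the paper simply cites Table~1 of \citet{balcilar2021bridge} and works one generic quadratic example; your version is more self-contained but proves the same thing. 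Second, and more substantively, the treatment of GAT diverges. The paper's proof in \Cref{proof:fixed} handles GAT by deleting the attention factor outright: it writes the layer as $\sigma((\bm{H}_l\bm{W}^{att}\bm{H}_l^{\top}\odot\bm{A})\bm{H}_l\bm{W}_{l+1})$, drops both $\sigma$ and the Hadamard term, and is left with $\bm{H}_l\bm{W}'_{l+1}$, i.e.\ the trivial support $\mathrm{P}^{(s)}=\bm{I}$. You instead freeze the attention coefficients, observe (correctly) that the resulting support need not commute with $L$ and hence is not literally a polynomial in $L$, and propose symmetrizing it onto the eigenbasis of $L$ followed by Lagrange interpolation over the finite spectrum. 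Your route is more honest about what is being lost --- it makes explicit that the GAT case is an approximation rather than an identity, and it would recover a nontrivial (data-dependent) frequency response rather than collapsing GAT to an MLP-like layer --- but it is also heavier machinery than the theorem's ``simplifying the complex non-linear functions'' clause actually demands, and the interpolated polynomial would no longer be a \emph{fixed} polynomial of $L$ (it depends on the frozen attention weights), which sits slightly at odds with the ``fixed base'' needed downstream in \Cref{theo:search_space_format}. The paper's cruder deletion is exactly what keeps the dictionary fixed. Neither argument is wrong given the stated simplification; yours buys a tighter description of attention at the cost of the fixedness the corollary relies on, and you would need to either accept the paper's identity-support reduction for GAT or argue separately that a data-dependent atom is admissible in the dictionary.
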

This theorem suggests that all GNN layers are unified as $\sum_k\mathrm{P}_{l}^k(L)\bm{H}_{l}\bm{W}_{l}^{k}$, as the output of each GNN operator shares the same expression. Recall that the non-linear activation functions $\sigma(\cdot)$ are often neglected in the performance analysis, and even removed in some cases. Following this, we propose the following corollary and provide the details in \Cref{proof:fixed}.
\begin{corollary}
~\label{theo:search_space_format}
The search space of GNNs can be unified as $\sum_{k=0}^K\mathrm{P}^k(\bm{L})\bm{X}\bm{W}^k=\bm{D}\bm{W}$ when removing the activation function,  where $\bm{D}=\Vert\{\mathrm{P}^k(\bm{L})\bm{X}\}$ is the fixed base, and $\bm{W}=\Vert\{\bm{W}^k\}^T$ is the trainable parameters. Here, $\Vert$ stands for concatenating a set of matrices horizontally.
\end{corollary}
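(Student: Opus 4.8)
The plan is to obtain the corollary as a direct consequence of \Cref{theo:gnn_layer_format} by unrolling the layerwise recursion and collecting terms. First I would set $\bm{H}_0=\bm{X}$ and write the activation-free update as $\bm{H}_l=\sum_k \mathrm{P}_l^k(\bm{L})\bm{H}_{l-1}\bm{W}_l^k$, the form guaranteed by \Cref{theo:gnn_layer_format}. The goal is to show, by induction on the layer index, that the output $\bm{H}_L$ of the stacked network is again a finite sum of the shape $\sum_{k}\mathrm{P}^k(\bm{L})\bm{X}\bm{W}^k$, i.e. a fixed polynomial in $\bm{L}$ applied to $\bm{X}$ on the left, multiplied by a trainable matrix on the right.

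The inductive step rests on two algebraic facts. The first is that the family of polynomials in the single matrix $\bm{L}$ is closed under multiplication: because any two powers of $\bm{L}$ commute, the product $\mathrm{P}_l^{k}(\bm{L})\,\mathrm{P}_{l-1}^{k'}(\bm{L})$ is itself a polynomial in $\bm{L}$, so composing layers never leaves the polynomial class and only raises the maximal degree by a controlled amount. The second is that the weight matrices sitting to the right of every Laplacian factor can be pushed past those factors (they act on feature/channel indices, not on the node index on which $\bm{L}$ acts) and multiplied together. Distributing the product of the per-layer sums therefore yields a finite collection of terms, each of the form (polynomial in $\bm{L}$)$\times\bm{X}\times$(product of layer weight blocks).

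Next I would group these terms by the basis element $\mathrm{P}^k(\bm{L})$ they carry, letting $k$ range over $0,\dots,K$ with $K$ bounded by $L$ times the largest per-layer degree. For each $k$, all the accumulated weight products are summed into a single effective matrix $\bm{W}^k$; this is legitimate precisely because the Laplacian factor is common across those terms and can be factored out. The result is exactly $\sum_{k=0}^K \mathrm{P}^k(\bm{L})\bm{X}\bm{W}^k$. Finally, I would recognise this sum as one block-matrix product: horizontally concatenating the fixed, graph-and-data-dependent bases into $\bm{D}=\Vert\{\mathrm{P}^k(\bm{L})\bm{X}\}$ and vertically stacking the blocks into $\bm{W}=\Vert\{\bm{W}^k\}^{T}$, block multiplication gives $\bm{D}\bm{W}=\sum_{k}\mathrm{P}^k(\bm{L})\bm{X}\bm{W}^k$, with $\bm{D}$ independent of any trainable parameter.

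The main obstacle I anticipate is combinatorial bookkeeping rather than deep mathematics: making the induction precise requires tracking how the distributed cross-terms from different layers line up under a common index $k$ and verifying that the absorption of weight products into each $\bm{W}^k$ is well-defined and dimension-consistent (each $\mathrm{P}^k(\bm{L})\bm{X}$ is $N\times d$ and each $\bm{W}^k$ is $d\times c$, so the concatenations have compatible shapes $N\times (K+1)d$ and $(K+1)d\times c$). I would also state explicitly that removing the nonlinearities is what permits adjacent layers to merge into a single polynomial factor, and that closure under multiplication is what keeps the basis finite; without either, the clean $\bm{D}\bm{W}$ factorization would fail.
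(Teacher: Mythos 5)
Your proposal is correct and follows essentially the same route as the paper's proof: unroll the layerwise recursion from \Cref{theo:gnn_layer_format}, use associativity to merge products of polynomials in $\bm{L}$ on the left and products of weight matrices on the right, distribute the cross-terms, and collect the resulting atoms into the block product $\bm{D}\bm{W}$. The only difference is presentational --- the paper works out the two-layer case explicitly (\Cref{eq_18}) and lets the general case be implied, whereas you phrase it as a full induction on the layer index, which is a slightly more careful version of the same argument.
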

This corollary derives a unified mathematical expression of the search space of GNNs, implying the natural connection between NAS in GNNs and the Sparse Coding.}

Building on the above theorems, we present a novel NAS-GNN paradigm in which an untrained GNN model can also yield optimal outputs under mild conditions in the following. This optimality allows us to waive the effort to design complicated models for updating weights and makes us focus on architecture updates. At a high level, the ultimate goal of NAS is to assign the coefficients of unimportant operators as zeros and the important ones as non-zeros. Towards this, {searching the optimal architecture  is equivalent  to finding the optimal sparse combination of GNN operators}, i.e., $\boldsymbol{\alpha}$. Notably, one of the most successful algorithms for this purpose is sparse coding~\citep{OLSHAUSEN1997}, which has strong performance guarantees.

{\em What matters most when we reformulate the NAS problem as a sparse coding problem over a {dictionary} defined on GNN operators?}
According to the research in sparse coding~\citep{Aharon2006,Tropp2004,Candes2006}, its performance depends on the {quality} of the dictionary. In particular, the most desirable property is mutual coherence or orthogonality. High incoherence in the dictionary tends to avoid ambiguity in the solution and improve optimization stability, resulting in optimal or at least sub-optimal solutions. 
We prove a dictionary that is defined on deep neural networks has built-in orthogonality, as shown in the theorem below. % 
\begin{theorem}
Let the neural weights of each operator in a deep neural network be an atom and stack them column-wisely, we can guarantee the existence of an orthogonal dictionary.
\label{theo_linear}
\end{theorem}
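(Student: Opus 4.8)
The plan is to prove the statement by explicit construction, supplemented by a concentration argument that justifies the \emph{built-in} nature of the orthogonality. First I would fix notation: let each operator $k$ carry a (vectorized) weight $\bm{d}_k \in \real^{d}$, treat it as an atom, and stack these column-wise into a dictionary $\bm{D} = [\bm{d}_1, \dots, \bm{d}_m] \in \real^{d \times m}$, where $d$ is the high feature dimension and $m$ is the number of operators. In the regime relevant to deep GNNs we have $d \geq m$, so the dictionary is at most complete rather than strictly overcomplete.

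The core of the argument is that the initialization is unconstrained and that any full-column-rank collection of atoms can be orthogonalized without leaving the admissible weight space. Concretely, I would (i) note that drawing each $\bm{d}_k$ i.i.d.\ from a continuous distribution (e.g.\ Gaussian initialization) makes $\bm{D}$ full column rank with probability one; (ii) apply the QR / Gram--Schmidt decomposition $\bm{D} = \bm{Q}\bm{R}$, where $\bm{Q} \in \real^{d \times m}$ has orthonormal columns and $\bm{R}$ is upper triangular; and (iii) identify $\bm{Q}$ as the desired dictionary. Its columns are mutually orthogonal and of unit norm, so they satisfy both the incoherence requirement discussed just before the theorem and the atom-normalization constraint $\lVert \bm{\mathcal{D}}_{\cdot j}\rVert_2 = 1$ of \Cref{eq:sparsecoding}. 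This establishes existence; equivalently, since initialization is free, one may simply initialize the operator weights as an orthogonal matrix, which is exactly the hypothesis invoked in \Cref{eq:theo2}.

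To support the claim that this orthogonality is intrinsic rather than merely attainable, I would add a concentration-of-measure step: for two independent random atoms $\bm{d}_i, \bm{d}_j \in \real^{d}$, the normalized inner product $\langle \bm{d}_i, \bm{d}_j \rangle / (\lVert \bm{d}_i \rVert \lVert \bm{d}_j \rVert)$ concentrates around $0$ with fluctuations of order $1/\sqrt{d}$. Hence the mutual coherence of $\bm{D}$ vanishes as $d \to \infty$, so in the high-dimensional setting typical of deep networks a randomly initialized dictionary is already nearly orthogonal before any orthogonalization, and the QR step above only makes this exact.

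The main obstacle I anticipate is twofold. First, exact orthogonality can be guaranteed only when $d \geq m$; if the operator set is overcomplete ($m > d$) no exactly orthogonal dictionary exists, and the statement must be read in the approximate sense supplied by the coherence bound. Second, the orthogonalized dictionary $\bm{Q}$ must be reconciled with the unified representation of \Cref{theo:search_space_format} so that no expressive power is lost: since $\bm{Q}$ spans the same column space as $\bm{D}$, any representation formerly attainable remains attainable after a linear reparametrization of the sparse coefficients through $\bm{R}$. Verifying this equivalence carefully---in particular that the downstream sparse-coding step still operates over the intended operator set and that replacing $\bm{D}$ by $\bm{Q}$ does not silently destroy sparsity---rather than the orthogonalization itself, is where the real work lies.
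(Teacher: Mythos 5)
Your proposal is correct, and its second half coincides with what the paper actually does: the proof in the appendix (Section A.4) is precisely the concentration-of-measure argument, showing via the central limit theorem and the weak law of large numbers that the normalized inner product $\langle \bm{h}_i/\lVert\bm{h}_i\rVert_2,\ \bm{h}_j/\lVert\bm{h}_j\rVert_2\rangle$ of two random atoms concentrates at $0$ as the ambient dimension grows, so the mutual coherence of the randomly initialized dictionary tends to its minimum. The paper never performs your step (ii)--(iii): there is no QR/Gram--Schmidt orthogonalization, and the "orthogonal dictionary" whose existence is claimed is only the asymptotically/approximately orthogonal one formed by the raw random weights themselves. This difference matters for how the theorem is used downstream: NAC keeps the untrained weights as the dictionary, so the relevant property is that orthogonality is \emph{built in} to the random initialization, not that an orthogonal basis could be manufactured from it --- your own caveat about reconciling $\bm{Q}$ with the operator set and re-parametrizing the sparse coefficients through $\bm{R}$ is exactly the complication the paper sidesteps by settling for near-orthogonality of $\bm{D}$ itself. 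What your QR construction buys in exchange is exact orthogonality and a genuine existence proof valid at any finite dimension (given full column rank, which holds almost surely and is easily satisfied here since the number of operators per layer is small relative to the atom dimension); the paper's statement, read literally as guaranteeing an exactly orthogonal dictionary, is only justified in the high-dimensional limit, and your version of the concentration step (fluctuations of order $1/\sqrt{d}$) is also stated more carefully than the paper's. Your observation that exact orthogonality is impossible in the overcomplete regime is a real limitation of the literal claim that the paper does not address, though it is moot in the regime the paper works in.
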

Please see \Cref{theo_linear_proof} for more details. This theorem implies that the obtained weights are qualified to be a dictionary due to the built-in high dimensionality in deep neural networks.  

Taking inspiration from this, we propose a new NAS paradigm called \textit{neural architecture coding} as follows: 
\begin{equation}
\label{nac_obj}
\begin{aligned}
\footnotesize
 &\min_{\boldsymbol{\alpha}}\quad \mathcal{L}\left (\bm{y}, f(\mathbf{z}) \right) + \|\mathbf{z} - \mathbf{h}^{L}\|_F^2 + \rho\|\boldsymbol{\alpha}\|_{1},\\
  & \mbox{where}
  \begin{cases}
  \footnotesize
\mathbf{h}_{v}^{l}=\phi\left[ {\bm{W}}^{l} \cdot \bar{o}^{l}\left(\left\{\mathbf{h}_{u}^{l-1}, \forall u \in {N}(v)\right\}\right)\right] \\ 
\bar{o}^{l}(x) = \bm{o}^{l}\hat{\boldsymbol{\alpha}_{l}} =  \sum_{k=1}^{K} \frac{\alpha_{lk}}{|| \bm{\alpha_{l}}||_{2}}o^{lk}(x)\\
\bm{o}^{l} = [o^{l1}(x), o^{l2}(x), \dots, o^{lK}(x)]
\end{cases}
\end{aligned}
\end{equation}
where $\rho$ is the sparsity hyperparameter, $f(\cdot)$ is a linear function as the final output layer, e.g., MLP, $\mathbf{h}$ represents all nodes' embeddings.
%, and  $\mathcal{L}_{CE}$ defines the Cross-Entropy loss function. 
Here, we set the number of GNN operators to $K$ and the number of hidden layers to $L$.
$o^{lk}$ denotes the $k$-th operator at the $l$-th layer and its output is a vector; $\alpha_{lk}$ denotes the scalar coefficient for operator $k$ at $l$th-layer and thus $\boldsymbol{\alpha}_{l}\in \mathbb{R}^{k}$; $\bm{o}^{l}$ is the weight vector from all operators at the $l$-th layer.
$\mathbf{h}_{v}^{l}$ denotes the embedding of the node $v$ at the {$l$-th layer}, where its neighborhood aggregation process is performed as the weighted summation of the outputs from all operators as $\bar{o}^{l}(x)$. { Here, $\bf y$ is the node label.}

Notice that a standard sparse coding model uses an l2-norm-based objective for optimization, while a cross-entropy objective is preferred as we focus on a classification task. To connect the gap, we prove that optimizing the Cross-Entropy loss function in GNNs satisfies a sufficient condition for the l2-norm-based objective in sparse coding, as shown in the following theorem.
% To optimize our model, we employ the cross-entropy function as the loss function and take gradient descent for optimization.
% We begin by stating our theorem of optimizing the $l2$-norm objective below:
\begin{theorem}
\label{theo:ce_for_l2}
The gradient from Cross-Entropy loss function is feasible to optimize an $l2$-norm objective in a gradient descent manner, such that the initial $z^0$ converges geometrically to the optimal $z^*$ with a tiny error, e.g., 
\begin{equation}
    \lVert z^{s+1}-z^* \rVert \leq (1-2\alpha \eta)^s\lVert z^0-z^* \rVert + 2\gamma/\alpha,
\end{equation}
where $s$ signifies the current iteration, and $\gamma, \alpha,\eta$ define the numerical bound for the error.
\end{theorem}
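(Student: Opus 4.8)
The plan is to recognize the stated inequality as the standard geometric-convergence guarantee for \emph{approximate} gradient descent, in the style of the provable sparse-coding analyses, and to show that the Cross-Entropy gradient supplies a legitimate approximate gradient for the $l_2$ objective. Concretely, I would run the update $z^{s+1}=z^s-\eta\,g^s$ with $g^s=\nabla_z\mathcal{L}(\bm{y},f(z^s))$ the Cross-Entropy gradient, and measure progress toward $z^*$, the minimizer of the $l_2$ (sparse-coding) term $\|\mathbf{z}-\mathbf{h}^L\|_F^2$. Expanding the one-step distance gives the exact identity
$$\|z^{s+1}-z^*\|^2=\|z^s-z^*\|^2-2\eta\langle g^s,\,z^s-z^*\rangle+\eta^2\|g^s\|^2,$$
so everything reduces to controlling the cross term $\langle g^s,\,z^s-z^*\rangle$.

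The key lemma, and the step I expect to be the main obstacle, is to prove that the Cross-Entropy gradient is \emph{correlated} with the true error direction, i.e. that there exist constants $\alpha,\beta>0$ and an error $\gamma$ with
$$\langle g^s,\,z^s-z^*\rangle\ \ge\ \alpha\|z^s-z^*\|^2+\beta\|g^s\|^2-\gamma.$$
This is precisely the ``sufficient condition for the $l_2$ objective'' alluded to before the theorem. I would establish it from the logistic/softmax structure of Cross-Entropy: its gradient with respect to the logits equals the residual between predicted probabilities and labels, which under linear separability (\Cref{theo:theo2}) and the local strong convexity of Cross-Entropy (whose Hessian $\mathrm{diag}(p)-pp^\top$ is positive semidefinite and bounded below near $z^*$) points into the half-space of descent toward $z^*$. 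The curvature lower bound yields the $\alpha\|z^s-z^*\|^2$ term, smoothness supplies the $\beta\|g^s\|^2$ term, and $\gamma$ absorbs the mismatch between the nonlinear Cross-Entropy gradient and the idealized $l_2$ gradient. The delicate part is quantifying $\gamma$ so that it is small and independent of $s$; this is where the orthogonality established in \Cref{theo_linear} enters, since incoherence of the dictionary is what keeps the cross-operator interference, and hence the bias, bounded.

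Granting the correlation lemma, the remainder is routine. Substituting the inequality into the one-step identity gives
$$\|z^{s+1}-z^*\|^2\le(1-2\alpha\eta)\|z^s-z^*\|^2-(2\eta\beta-\eta^2)\|g^s\|^2+2\eta\gamma,$$
and choosing the step size $\eta\le 2\beta$ kills the middle term, leaving the contraction $\|z^{s+1}-z^*\|^2\le(1-2\alpha\eta)\|z^s-z^*\|^2+2\eta\gamma$. Finally I would unroll this recursion and sum the geometric series $\sum_{j\ge 0}(1-2\alpha\eta)^j=\tfrac{1}{2\alpha\eta}$, which converts the accumulated per-step error into an additive floor $\tfrac{2\eta\gamma}{2\alpha\eta}=\gamma/\alpha$; tracking the constants, and passing from the squared norm to the norm (which is where the loose factor of two surfaces), reproduces the claimed bound $\|z^{s+1}-z^*\|\le(1-2\alpha\eta)^s\|z^0-z^*\|+2\gamma/\alpha$. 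Geometric convergence up to an error floor set by $\gamma$ then follows, establishing that the Cross-Entropy gradient is feasible for optimizing the $l_2$ objective.
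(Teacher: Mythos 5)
Your overall strategy coincides with the paper's: both reduce the claim to showing that the Cross-Entropy gradient is $(\alpha,\beta,\gamma)$-correlated with $z^*$ in the sense of the provable sparse-coding literature (Arora et al.; Nguyen et al.), and then invoke the standard convergence guarantee for approximate gradient descent. Your final step of substituting the correlation inequality into the one-step expansion, choosing $\eta\le 2\beta$, and unrolling the recursion is precisely the convergence theorem the paper quotes, so that part is sound.

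The step that would fail is your proposed route to the correlation inequality itself. You want to extract the $\alpha\lVert z^s-z^*\rVert^2$ term from ``local strong convexity of Cross-Entropy near $z^*$,'' citing the softmax Hessian $\mathrm{diag}(p)-pp^{\top}$. That matrix is only positive semidefinite (it annihilates the all-ones direction), and its smallest nonzero eigenvalue tends to zero exactly as the predicted probabilities saturate toward a vertex of the simplex --- which is what happens in the linearly separable regime you invoke via \Cref{theo:theo2}, where the loss has no finite minimizer and the iterates converge only in direction. So there is no uniform curvature lower bound available near the optimum from the Hessian. The paper takes a different (admittedly heuristic) route: it establishes strong convexity by directly lower-bounding $\langle \nabla_{W_k}g-\nabla_{W_j}g,\,W_k-W_j\rangle$ under the assumption that features and weights have Gaussian entries, replacing the label indicator $[Y_{ij}\neq Y_{ik}]$ by its expectation $2(c-1)/c^2$ and deriving an explicit admissible range for $\alpha$; smoothness follows from the bound $|s_i-[Y_{ij}=1]|\le (c-1)/c$ on the softmax residual, giving the Lipschitz constant $\frac{c-1}{mc}\lVert F\rVert$. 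It then concludes $(\alpha^*/2,\,1/(2\beta^*),\,0)$-correlation, i.e.\ $\gamma=0$, so your other departure --- using the dictionary incoherence of \Cref{theo_linear} to control $\gamma$ --- plays no role in the paper's argument and is not needed. To salvage your Hessian-based argument you would have to confine the iterates to a region where the probabilities stay bounded away from $0$ and $1$, which conflicts with the separability assumption you lean on.
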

Please see \Cref{proof:equal} for more details. 
Therefore, we reformulate the objective in \Cref{nac_obj} as follows,
\begin{equation}
\label{nac_obj_ce}
    \begin{aligned}
    \min_{\boldsymbol{\alpha}}\quad \mathcal{L}_{CE}\left (\bm{y}, f(\mathbf{z}) \right) + \|\mathbf{z} - \mathbf{h}^{L}\|_F^2 + \rho\|\boldsymbol{\alpha}\|_{1},
    \end{aligned}
\end{equation}
where $\mathcal{L}_{CE}$ defines the Cross-Entropy loss function.

\begin{algorithm}[!ht]
\caption{The NAC algorithm }
~\label{alg_nac1}
\begin{algorithmic}[1]
\footnotesize
% \scriptsize
\REQUIRE The search space $\mathcal{A}$;
\ENSURE The architecture $\boldsymbol{\alpha}$\\
Randomly initializing $\bm{W}^l$, for $l=1, \ldots, L$;
set $\boldsymbol{\alpha}=\bf{1}$;
\WHILE{$t = 1, \dots, T$}
    \STATE Performing the feature aggregation at each layer as $\bar{o}^{l}(x) = \bm{o}^{l}\hat{\boldsymbol{\alpha}_{l}} = \sum_{k=1}^{K} \frac{\alpha_{lk}}{|| \bm{\alpha_{l}}||_{2}}o^{lk}(x)$;
    \STATE Computing  $\mathbf{h}_{v}^{l}=\phi\left[ {\bm{W}}^{l} \cdot \bar{o}^{l}\left(\left\{\mathbf{h}_{u}^{l-1}, \forall u \in {N}(v)\right\}\right)\right]$;
    \STATE Optimizing $\boldsymbol{\alpha}$ based on the objective function in \Cref{nac_obj_ce} w.r.t. $\boldsymbol{\alpha}$ under the fixed dictionary $\bm{o}$ ;% as in Eq.7; 
    \STATE Updating $\bm{W}_o$ based on the objective function in \Cref{nac_obj_ce} w.r.t.  $\bm{W}_o$ under fixed $\boldsymbol{\alpha}$;
\ENDWHILE
% \State Obtain the final architecture $\{\boldsymbol{\alpha}^{*}\}$ from the trained $\boldsymbol{\alpha}$;
\STATE Obtain the final architecture $\{\boldsymbol{\alpha}^{*}\}$ from the trained $\boldsymbol{\alpha}$ via an argmax operation at each layer;
\end{algorithmic}
\end{algorithm}

It is important to note that, rather than computing the optimal neural weight for $o^{lk}$ like previous NAS methods explicitly,
we use randomly initialized weights instead and focus on optimizing the combination coefficient $\boldsymbol{\alpha}$, i.e. architecture parameters. The desired optimal embedding $\mathbf{z}$ is then used for downstream applications, such as node classification.
In practice, we often let $\mathbf{z} = \mathbf{h}^{L}$ for computational convenience, which makes the loss of the second term to be zero.

The sparsity regularization $||\boldsymbol{\alpha}||_{1}$ allows us to rank the importance of the operator directly, alleviating inaccurate gradient estimation such as softmax~\citep{garg2021alternate}. We want to emphasize that this regularization does not break the requirement of using gradient descent. Since the architecture, i.e., $\boldsymbol{\alpha}$,  is fixed when updating the output layer weights, we can use gradient descent for updating as required, so the theorems in \Cref{subsec:theory} hold.

\subsection{Implementation Details and Complexity Analysis}  
The overall algorithm is presented at \Cref{alg_nac1}. Computing NAC consists of two major parts: the forward pass and the backward pass. Given the search space, the computation of the forward is then fixed and regarded as a constant. Therefore, the computational complexity mainly focuses on the backward pass in the NAC algorithm.

In summary, the algorithmic complexity of NAC  is  $O(T*| \boldsymbol{\alpha}|)$, where $| \boldsymbol{\alpha}|$ is the size of $ \boldsymbol{\alpha}$. 
% Note the dimension of $w$ is often much larger than $\boldsymbol{\alpha}$.
{Please refer to \Cref{subsec:impl} for more details about the implementation.}

\begin{table*}[!th]
\centering
\caption{Experimental results on the compared methods: our NAC attains superior performance in both accuracy (\%) and efficiency (in minutes).
\eat{(RS: random search; BO: Bayesian Optimization; RL: reinforcement learning; WS: weight sharing.)}
}

\scalebox{0.9}{
\begin{tabular}{l|cccccccc}
\toprule
  %~&
  \multirow{2}{*}{} &
  \multicolumn{2}{c}{CiteSeer} &
  \multicolumn{2}{c}{Cora} &
  \multicolumn{2}{c}{PubMed} &
  \multicolumn{2}{c}{ Computers}
  \\
  ~ & %~& 
  {Accuracy} & {Time} & {Accuracy} & {Time} & {Accuracy} & {Time} & {Accuracy} & {Time}\\
  \midrule
    RS %& Random Search 
    & 70.12${_{\pm2.36}}$ & 14.4 & 71.26${_{\pm4.68}}$ & 30.6 & 86.75${_{\pm0.82}}$ & 187.8 & 77.84${_{\pm1.35}}$ & 8.75\\
    BO %& Bayesian      
    & 70.95${_{\pm1.62}}$ & 18   & 68.59${_{\pm6.66}}$ & 31.2 & 87.42${_{\pm0.68}}$ & 189.6 & 77.46${_{\pm2.02}}$ & 17.65\\
    %\multirow{2}{*}{RL} & 
    GraphNAS     & 68.69${_{\pm1.30}}$ & 253.8 & 71.26${_{\pm4.90}}$ & 245.4 & 86.07${_{\pm0.51}}$ & 1363.8 & 73.97${_{\pm1.79}}$ & 86.37 \\
                    %    & 
    GraphNAS-WS  & 65.35${_{\pm5.13}}$ & 80.4  & 72.14${_{\pm2.59}}$ & 161.4 & 85.71${_{\pm1.05}}$ & 965.4  & 72.99${_{\pm3.44}}$ & 42.47 \\
    %WS-NAS & 
    SANE & 71.84${_{\pm1.33}}$ & 4.2  & 84.58${_{\pm0.53}}$ & 10.2 & 87.55${_{\pm0.78}}$ & 107.4 & 90.70${_{\pm0.89}}$ & 0.72\\ \midrule
    %\multirow{1}{*}{NAC }   & 
    NAC      & \textbf{74.62${_{\pm0.38}}$} &\textbf{1.2} & \textbf{87.41${_{\pm0.92}}$} & \textbf{1.2} & {88.04${_{\pm1.06}}$} &\textbf{9.0} & \textbf{91.64}${_{\pm0.14}}$& \textbf{0.23} \\
                    %    ~   & 
                        NAC-updating       & 74.17${_{\pm1.18}}$ & 4.2 & 86.62${_{\pm1.14}}$ & 3.6 & \textbf{88.10}${_{\pm0.86}}$ & 25.8 & 90.89${_{\pm1.10}}$ & 0.70 \\
     \bottomrule
\end{tabular}
}
\label{Table_main_comparison111}
\end{table*}

\section{Experiments}\label{sec:4}
\eat{We evaluate the performance and properties of the proposed NAC to answer the following research questions:}
We conduct experiments to address the following issues:
\textbf{(1)} \textit{How does NAC perform in comparison to the leading baselines?}
\textbf{(2)} \textit{How does the no-update scheme affect other methods?}
\textbf{(3)} \textit{Is NAC robust?}
\Cref{sec:main_results}-\Cref{subsec:convergence} answer the above questions accordingly.
\eat{
For (1), we compare NAC with five strong baselines. \todo{several is replaced to the number}
For (2), we perform the no-update scheme on the best-performing baseline, SANE, and see if this would improve its performance.
For (3), we investigate NAC on different initializations, including random seeds, sparsity parameters, and linear output layer training. \todo{different is replaced to three?}
}

%%%%%%%%%%%%%%%%%%%%%%%%%%%%%%%%%%%%%%%%%%%%%%%%%%%%

\subsection{Experiment Setup} \label{subsec:setup}
\noindent\textbf{Datasets.}
We performed experiments on transductive learning for node classification~\citep{zhao2021search}.
% In this task, only a subset of nodes in one graph is used for training, while other nodes are used for validation and test, respectively. \todo{in one graph is necessary? } 
For this setting, we use four benchmark datasets: CiteSeer~\citep{Citeseer}, Cora~\citep{Cora}, PubMed~\citep{PubMed}, and Computers~\citep{mcauley2015image}. Also, we follow the data partition setting (training/validation/testing) as in ~\citep{zhao2021search}. {For more details about transductive learning task and introduction of each dataset, please refer to \Cref{subsec:dataset} for more details.}

{We also conduct additional experiments on inductive task like PPI \Cref{appendix:exp_ognb} dataset\cite{hamilton2018inductive} and the Open Graph Benchmark datasets\citep{hu2020open} in \Cref{appendix:exp_ognb}.}

\noindent\textbf{Environment}. 
{ For datasets of Citeseer, Cora, and PubMed, we run experiments on a CPU server with a $48$-core Intel Xeon Platinum 8260L CPU using  PyTorch~\citep{NEURIPS2019_9015} (version 1.10.2+cpu). Due to the large size of Amazon Computers, we experiment on a GPU server with four NVIDIA 3090 GPUs (24G) with PyTorch (version 1.10.2+gpu). 
% \eat{Speed-related experiments on Citeseer, Cora, and PubMed were measured using PyTorch (version 1.4+cpu) on a CPU server with a ten-core Intel Xeon Platinum 8255C CPU, 40G RAM, and 500G DRAM.} \eat{Speed-related experiments on Amazon Computers were measured using PyTorch (version 1.10.2+gpu) on a GPU server with four NVIDIA 3090 GPUs (24G).} 
All operators used in the experiments are from the built-in functions of PyG  (version 2.0.2)~\citep{Fey/Lenssen/2019}.}

\textbf{Methods.}
% In order to have a comprehensive comparison, 
{Following~\citep{zhao2021search}, we compare our NAC with the following strong baselines: 1) Random Search (RS)~\citep{Bengio2012hpo} and 2) Bayesian Optimization (BO)~\citep{Jones1998}, 3) GraphNAS~\citep{gao2020graph}: a popular reinforcement learning-based (RL) method, 4) GraphNAS-WS~\citep{zhao2021search}: a variant of GraphNAS with weight sharing, and 5) SANE~\citep{zhao2021search}.  Among all baselines, the first two are hyperparameter optimization (HPO) methods~\citep{yu2020hpo}.  GraphNAS and GraphNAS-WS are two popular methods following the weight sharing paradigm while SANE is the most recent work on automated GNNs using the weight-sharing differential paradigm, which is the closest one to our work.  We also evaluate NAC-updating, i.e., NAC with weight updates, to compare it with the proposed one with no-update scheme.  
}
\eat{We include diverse NAS approaches with different search strategies (i.e., reinforcement learning methods, SOTA WS-NAS methods, and widely used methods in hyperparameter tuning) in our experiment. \todo{need rephrase}
}
\eat{Specifically, the baselines include a popular {reinforcement learning-based} method,  GraphNAS~\citep{gao2020graph}, and its variation GraphNAS-WS~\citep{zhao2021search} that uses weight sharing for evaluation. Additionally, our baselines also include SANE~\citep{zhao2021search}---the most recent work on automated GNNs using {the weight-sharing differential paradigm}, which is the closest to our work.

Due to the similarity between NAS and hyperparameter optimization (HPO)~\citep{yu2020hpo}, researchers often include some {HPO methods} as baselines in NAS as well, such as Random Search (RS)~\citep{Bengio2012hpo} and Bayesian optimization methods~\citep{Jones1998}, so do we.
In our experiments, we also follow the setting in~\citep{zhao2021search}, using RS and Bayesian for comparison. We also include a variation of  NAC by updating its weigths, called NAC-updating, to demonstrate the superiority of the non-updating scheme.
Note that all results of baselines are reported under the best effort of parameter tuning. 
}

\textbf{Search space.}
We select a list of node aggregators (i.e., operators) in GNNs to form the search space.
More specifically, they include the following seven aggregators: multi-layer perceptrons (MLP), GCN~\citep{kipf2017semisupervised}, GAT~\citep{Petar2018graph}, GIN~\citep{xu2019powerful}, Geniepath~\citep{liu2018geniepath}, Graphsage~\citep{hamilton2018inductive}, and ChebNet~\citep{defferrard2017convolutional}.
Note that MLP is a special aggregator with fixed frequency when viewing GNNs from a spectrum perspective~\citep{balcilar2020analyzing}.
Besides, Graphsage and GAT contain different aggregation functions, i.e., maximum and LSTM.   We apply all these variations as operators in the experiments. 
Note that we do not include the layer aggregators like skip-connection in our search space.
The underlying rationale is that recent research~\citep{zela2020understanding,chen2021stabilizing,liang2020darts} finds that NAS models converge to skip-connection much faster than other operators due to its parameter-free nature.  This introduces biased searching and yields poor generalization. \textit{Our experiments ensure that the search space is the same for all baselines.}

We do not include layer-wise aggregators like JKNets~\citep{pmlr-v80-xu18c} in the main experiment. As we show in \Cref{subsec:theory}, our theory is built upon the classical GNN framework, which equals matrix multiplication. JKNets cannot be represented in this framework, which contradicts our assumptions.

\textbf{Evaluation Metrics.}
The proposed approaches aim to boost the ability to learn efficacious architecture for downstream tasks.  The classification accuracy and the runtime (i.e., wall-clock time) are two widely used metrics in NAS research to measure the model performance and efficiency, respectively~\citep{zhang2018search}. The reported results are the average of the best results in 4 runs from different random seeds.

\eat{Thus, we report the classification accuracy and the runtime (i.e., wall-clock time), which are widely used in NAS research~\citep{zhang2018search}.
% More specifically, we use classification accuracy as the evaluation metric. 
Note that all reported results are the average of the best results from 
% from the  top-$1$ ranked architectures 
different random seeds.}

We set the runtime of NAC, SANE, and GraphNAS as the time they take to obtain the final architectures with $100$ epochs\footnote{The running time for the Computers dataset is measured on a single GPU, which is different from all other three datasets.}. The runtime of RS and Bayesian is fixed to the time of $100$ times trial-and-error processes.

\subsection{Comparison Results}%with Leading NAS Methods} 
\label{sec:main_results}
Results in \Cref{Table_main_comparison111} and \Cref{fig:overview} show that NAC attains superior performance than all baselines regarding both \textit{accuracy} and \textit{efficiency}. More specifically, we observe the following:
\begin{compactitem}[-]
\item In terms of model performance, our NAC beats all baselines and attains up to $2.83\%$%  $0.55\%$-$2.83\%$
improvement over the best baseline, i.e., SANE, while attaining %$0.82\%$-$18.8\%$ 
up to $18.8\%$ 
improvement over the Bayesian method, the best HPO method.
Thanks to our non-updating scheme, it prompts the outputs near the optimal to make the selection of the best-performing architecture reliable,
as opposed to biased optimization in others.
% While existing methods often struggle to train the network, suffering either biased optimization and high computational costs.
% and resulting in inferior performance to NAC.
We notice that in one case, NAC is slightly worse than NAC-updating, about $0.06\%$.
This case reminds us that the optimal condition can get compromised due to the high complexity of the data.
Still, NAC holds for near-best performance, demonstrating its robustness.
Empirical results corroborate our theories in \Cref{subsec:theory} that non-updating scheme is preferred.
The underlying assumption of RL-based and BO-based methods is to obtain an accurate estimation of the distribution of the search space. 
However, these methods rely on random sampling to perform the estimation,
%and the sampled data can be very different with the distribution of the search space,
thus have no guarantee of the search quality due to the high complexity in the search space.
For instance, RL-based methods are sometimes even worse than RS-based methods, pure randomly sampled, implying the unreliability of such estimation under a limited budget.
% RL-based and BO-based methods suffer from the inaccurate estimation due to random sampling.
WS-based methods couple the architecture search and the weight training, so they struggle to train the network to be optimal and obtain a biased network due to oscillations between the two components.
\item In terms of model efficiency, our NAC achieves superior performance, around $10\times$ faster than SANE and {up to $200\times$ time faster than} GraphNAS.  
Our non-updating scheme requires nearly no weight update, thus giving NAC an incomparable advantage in reducing the computation cost.
All previous NAS methods need to update neural weights, for example, RL-based costs the most due to retraining the network from scratch at each time step, and WS-based costs the least by reusing neural weights.
\end{compactitem}

\subsection{No-update Scheme at Work}
\label{sec:non-updating-exp}
To further validate our no-update scheme, we evaluate its effect on other weight-sharing methods. Since SANE attains the best performance among all baselines, we test the no-update scheme on SANE as $\text{SANE}^{*}$, i.e., SANE with fixed neural weights.  Results in \Cref{table: sane-updating} show that $\text{SANE}^{*}$ outperforms the one with updates.  This result implies that we can improve the performance of NAS-GNN methods by simply fixing the weights with random initialization.  This yields a much lower computational cost in the training.

\begin{figure*}[!thbp]
    \begin{minipage}[!thbp]{.55\linewidth}
        \centering
        \captionof{table}{ \footnotesize Performance comparison between SANE, NAC$^{+}$ and NAC.}
        \label{table:linearupdates}
        %{{{
        \resizebox{.78\linewidth}{!}
        {
            \begin{tabular}{c|cccccc}
                \toprule
                \multirow{1}{*}{} &
                \multicolumn{1}{c}{CiteSeer} &
                \multicolumn{1}{c}{Cora} &
                \multicolumn{1}{c}{PubMed} &
                \multicolumn{1}{c}{Computers}
                \\
                ~& {Acc(\%)} & {Acc(\%)}  & {Acc(\%)}  & {Acc(\%)}\\
                \midrule
                \multirow{1}{*}{SANE}  & 71.84$_{\pm 1.33}$ & 84.58$_{\pm 0.53}$ & 87.55$_{\pm 0.78}$ & 90.70$_{\pm 0.89}$\\
                \midrule 
                \multirow{1}{*}{NAC$^{+}$} & 71.76$_{\pm 2.08}$ & 87.27$_{\pm 1.20}$ & 87.59$_{\pm 0.22}$ & 91.07$_{\pm 0.71}$\\
                \midrule
                \multirow{1}{*}{NAC} & \textbf{74.62}$_{\pm 0.38}$ &\textbf{87.41}$_{\pm 0.92}$ & \textbf{88.04}$_{\pm 1.06}$ & \textbf{91.64}$_{\pm 0.14}$\\
                \bottomrule
            \end{tabular}
        }
        %}}}
        \captionof{table}{ \footnotesize Comparison between SANE and $\text{SANE}^{*}$ (w/o.~weight updates).}
        \label{table: sane-updating}
        %{{{
        \resizebox{.78\linewidth}{!}
        {
            \begin{tabular}{c|ccccc}
                \toprule
                \multirow{1}{*}{} &
                \multicolumn{1}{c}{CiteSeer} &
                \multicolumn{1}{c}{Cora} &
                \multicolumn{1}{c}{Pubmed} &
                \multicolumn{1}{c}{Computers}
                \\
                ~ & {Acc(\%)} & {Acc(\%)} & {Acc(\%)} & {Acc(\%)}\\
                \midrule
                SANE  & 71.84${_{\pm1.33}}$ & 84.58${_{\pm0.53}}$ & 87.55${_{\pm0.78}}$ & 90.70${_{\pm0.89}}$\\
                SANE$^{*}$  & \textbf{71.95}${_{\pm1.32}}$ & \textbf{85.46}${_{\pm0.76}}$ & \textbf{88.12}${_{\pm0.35}}$ & \textbf{90.86}${_{\pm0.80}}$\\
                \bottomrule
            \end{tabular}
        }
        %}}}
        \captionof{table}{ \footnotesize Comparison of NAC with different initializations.}
        \label{table:initial}
        %{{{
        \resizebox{.918\linewidth}{!}
        {
            \begin{tabular}{c|cccccc}
                \toprule
                \multirow{1}{*}{} &
                \multicolumn{1}{c}{CiteSeer} &
                \multicolumn{1}{c}{Cora} &
                \multicolumn{1}{c}{PubMed} &
                \multicolumn{1}{c}{Computers}
                \\
                ~& {Acc(\%)} & {Acc(\%)}  & {Acc(\%)} & {Acc(\%)}\\
                \midrule
                \multirow{1}{*}{Kaiming Normal}  & 71.12$_{\pm 2.45}$ & 86.85$_{\pm 0.78}$ & 87.52$_{\pm 0.72}$ & 91.05$_{\pm 0.80}$\\
                \midrule 
                \multirow{1}{*}{Kaiming Uniform} & 72.06$_{\pm 2.24}$ & 86.99$_{\pm 1.03}$ & 87.68$_{\pm 0.97}$ & 88.52$_{\pm 3.01}$ \\
                \midrule
                \multirow{1}{*}{Orthogonal} & \textbf{74.62}$_{\pm 0.38}$ &\textbf{87.41}$_{\pm 0.92}$ & \textbf{88.04}$_{\pm 1.06}$ & \textbf{91.64}$_{\pm 0.14}$\\
                \bottomrule
            \end{tabular}
        }
        %}}}
    \end{minipage}
    \begin{minipage}[!tbhp]{.38\linewidth}
        \centering
        \includegraphics[width=.82\linewidth]{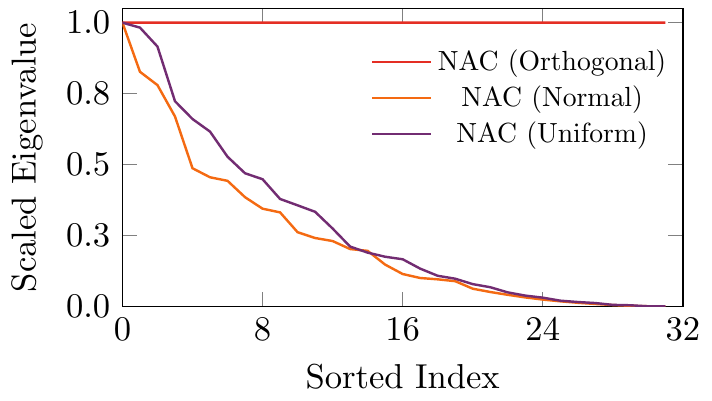}
        \caption{\footnotesize Visualization of the sorted eigenvalues of weights of NAC with different initialization. 
        }
        \label{fig: visal_svd}
        \includegraphics[width=.82\linewidth]{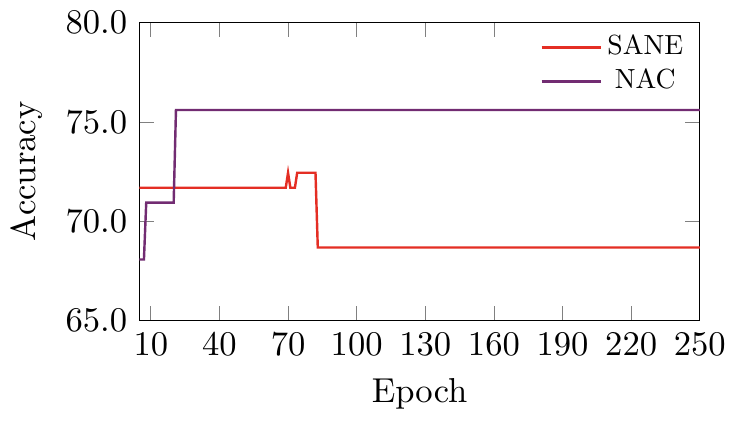}
        \caption{
            \footnotesize
            Convergence for SANE and NAC in terms of accuracy. NAC converges much faster than SNAE in only around $20$ epochs.  }
        \label{fig_convergence}
    \end{minipage}
\end{figure*}

%\begin{figure*}[!htbp]
%\begin{minipage}[!htbp]{.60\linewidth}
%    \centering
%    \scriptsize
%\end{minipage}
%% \hspace{0.1in}
%\begin{minipage}[!hbtp]{.368\linewidth}
%    \centering
%% 	\includegraphics[scale=0.616]{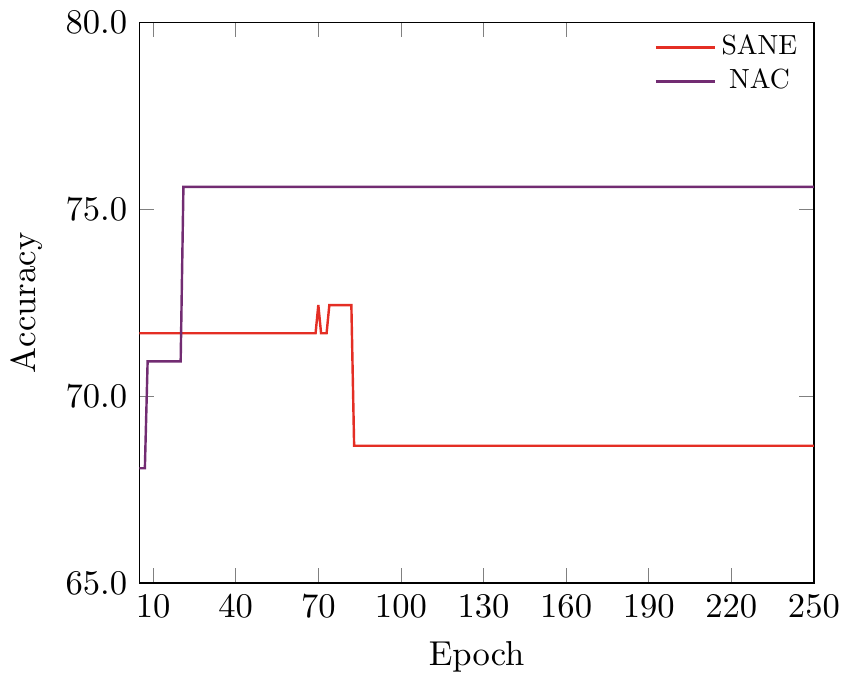}
%\end{minipage}
%\end{figure*}

\subsection{Analysis of Convergence}  
\label{subsec:convergence}
A notable benefit of the NAC framework is its guaranteed convergence from the sparse coding perspective.  To verify this, \Cref{fig_convergence} offers a convergence comparison between NAC and SANE from the Pubmed dataset by showing the accuracy on the retrained network acquired at each training epoch.
{NAC shows substantially faster convergence than SANE, which take around $20$ and $80$ epochs, respectively.
Due to the BLO strategy, SANE suffers strong oscillation when optimizing two variables, i.e., the architecture and neural weights, which consumes more epochs to converge. }

\subsection{Ablation Studies} 
\label{sec:ablation_studies}
{\bf Initialization:}
{Recall that in \Cref{subsec:theory}, a prerequisite is the orthogonality on the neural weights matrix.  This makes the initialization critical to the final performance.
To investigate the effect of initialization, we try three types of initialization methods, including normal, uniform, and orthogonal.  \Cref{table:initial} shows that NAC with orthogonal initialization outperforms other two cases of initialization.  The results again confirm our theory.

To get a better understanding of the reason, we visualize the eigenvalues of $\prod_{l=1}^L \bm{W}_l$ on different initialization cases in \Cref{fig: visal_svd}.  The results show that the eigenvalues of NACs with normal and uniform initialization decay accordingly.  This makes the input feature tends to project to the subspace spanned by the top-ranked eigenvectors, which results in feature distortion and leads to performance degeneration. 
In contrast, when NAC with orthogonal initialization, the eigenvalues are distributed uniformly, which ensures the spanned space does not bias toward any particular eigenvectors. 

In summary, NAC with orthogonal initialization attains the best performance and confirms our theory.  In addition, results from non-orthogonal initializations are still better than other baselines, demonstrating the robustness of NAC.

}
\eat{
Recall that the correctness of our finding in \Cref{subsec:theory} requires orthogonality of neural weights, making initialization critical to the final performance.
To explore the effect of initialization, we try initializing the network with different methods, including normal, uniform and orthogonal.
In \Cref{table:initial}, we show the performance of different kinds of initialization.
Among them, the ones with orthogonal initialization show better performance than the others, which is consistent with our theory.

To get better understanding, we visualize the eigenvalues of of $ \prod_{l=1}^L \bm{W}_l$ for different initialization methods, as shown in \Cref{fig: visal_svd}.
One can see that the eigenvalues of the normal and uniform initializations have a strong decay tendency, making the input feature transform towards the subspace spanned by the top-ranked eigenvectors. Therefore, this will result in feature distortion and leads to performance degeneration.
% Under this setting, the updated weights  corrupt the desired orthogonality condition and eventually make the optimality condition invalid.
In contrast, the orthogonal initialization  has  eigenvalues uniformly distributed, which  ensures the spanned space does not bias toward any particular eigenvectors. 
% Therefore, NAC with the orthogonal initialization achieves the best performance.

In summary, NAC with orthogonal initialization is the best, since this initialization fits our theory perfectly. In addition, results from all initializations are better than other baselines, demonstrating the robustness of NAC.
}

{\bf The linear output layer:}
Recall that for a given linear output layer, there exists the optimal weight, i.e. $\tilde{W_{o}}$,  to secure the optimal output according to \Cref{theo:theo1}.
Nevertheless, obtaining optimal weights in a deep learning algorithm is an open question due to its high dimensionality.
Often, in practice, we train the network with early stopping, especially in NAS methods.
Therefore,  obtaining the optimal weights for the output layer is not possible.
\textit{Is training the output layer necessary? } 
This experiment attempts to answer this by comparing NAC with and without training the output layer.
Note that we fix all other layers with random weights. 
By default, NAC is the one without training any layers in the whole paper, including the output layer.
On the contrary, $\text{NAC}^{+}$ is the one with training the output layer. Here, we set the number of epochs as $100$.
% Here, we name the NAC by training a linear layer with $100$ epochs as $NAC^{+}$.

In \Cref{table:linearupdates}, we can find that the performance of NAC$^{+}$ is in the middle between SANE and NAC. On the one hand, training the linear layer may not be worthwhile because it is impossible to get the desired optimal weights. On the other hand, results with and without training the output layer are always better than SANE, suggesting the superiority of the no-update scheme in general. To further demonstrate this, we implement multiple experiments by training the last layer with different epochs as shown in \Cref{subsubsec:traininglinear}.

\eat{
To further demonstrate this, we implement multiple experiments by training the last layer with different epochs.
%, as shown in \Cref{fig:random_linear_updating}.
\eat{as shown in \Cref{subsubsec:traininglinear}.}
{We notice that the model with untrained weights outperforms the one with trained weights that involves multiple epochs on the final linear layer, as shown in \Cref{fig:random_linear_updating}.
This inspires us to omit the training of the final linear layer in the main experiment, aligning with our proposed theorems in \Cref{subsec:theory}.
% We noticed that most of the time, the untrained weights in the initial state can often already exceed the accuracy that can be obtained from the weights after multiple epochs of training the final linear layer, as shown in \Cref{fig:random_linear_updating}.  
% Therefore, we further omit the training of the final linear layer. It is important to note that this approximation is based on our proposed theorems in which most of the intermediate layers do not require training.
}

\begin{figure}[!htbp]
    \centering
    \includegraphics[width=.35\textwidth]{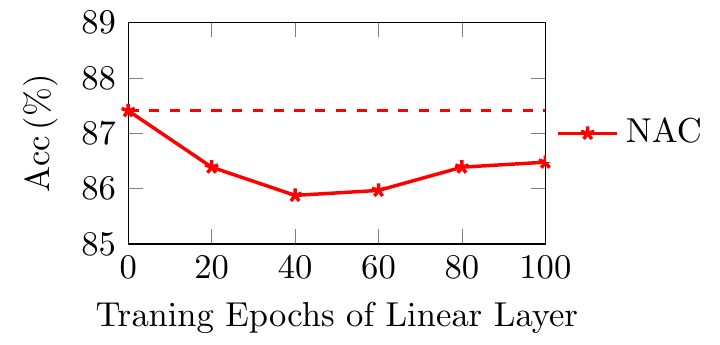}
    \caption{
        The effect of training final linear layer of NAC, where x-axis denotes the training epochs for the final linear layer and y-axis denotes the corresponding accuracy of acquired architecture $\bm{\alpha}$. % using the corresponding weights.
    }
    \label{fig:random_linear_updating}
\end{figure}
}

{\bf Sparsity effect: } {
%We also conduct experiments to test how sparsity and random seeds affect the search quality of NAC.
Our model uses the hyperparameter $\rho$ to control the sparsity of the architecture parameter $\bm{\alpha}$.
We investigate the effect of this hyperparameter by varying its value in a considerably large range, such as $\left [ 0.001,10 \right ]$. We find that the results vary little in a considerably wide range, this indicates NAC is insensitive to the sparsity hyperparameter.}
\eat{We also conduct experiments to test how sparsity affect the search quality of NAC.
The hyperparameter $\rho$ is to control the sparsity of the architecture parameter $\bm{\alpha}$.  We test the effect by varying its value in a considerably large range, i.e., $\left[0.001, 10\right]$. The results show that the performance varies little w.r.t. $\rho$.  This indicates that NAC is insensitive to the sparsity hyperparameter. Please refer to \Cref{subsubsec:theeffectofsparsity} for more details.
}

{\bf Random seeds: }
Random seeds often play an importance role in NAS methods as they also affect the initialization significantly. By default, researchers report the average or the best result from different random seeds, this often lead to poor reproducibility. We now present the effect of random seeds on this topic. In particular, we implement the experiment on multiple random seeds and observe a stable performance of NAC. The experimental results show that NAC is robust to both sparse hyperparameter selection and random seeds. Please refer to \Cref{subsubsec:randomSeeds} for more details.

%\todo{Need rewrite, what is BLO?}

\section{Conclusion }  
We present the first linear complexity NAS algorithm for GNNs, namely NAC.  Our NAC is solved by sparse coding and holds no-update scheme for model weights in GNNs because GNNs embed the built-in model linearity and orthogonality for the model weights.  Extensive experiments demonstrate that our NAC can achieve higher accuracy (up to 18.8\%) and much faster convergence (up to 200$\times$) than SOTA NAS-GNNs baselines.  

Several promising directions can be considered. We can further explore more deep neural networks that satisfy the mild condition of our NAC to extent its usability.  We can investigate more on the efficiency of different subgradient methods for solving the sparse coding objective. We also intend to investigate how to jointly learn the search space and architectural representation to further enhance the expressive ability of searched architectures.

\eat{
{To address the difficulty of applying NAS in challenging networks like GNNs, we propose a novel Neural Architecture Coding (NAC) framework, which can be solved by sparse coding. Moreover, we provide theoretical justification that an untrained GNN contains built-in orthogonality, which makes the output depends only on the linear output layer.  Our NAC can hold the no-update scheme to overcome critical challenges, e.g., biased optimization and inaccurate gradient estimation, in current SOTA weight-sharing differential methods.  The optimal architecture parameters can then be sought via solving a sparse coding problem, which yields linear time complexity.  Extensive experiments demonstrate that our NAC can achieve higher accuracy (up to 18.8\%) and much faster convergence (up to 200$\times$) than SOTA NAS-GNNs methods.}  
}

%Indeed, we introduce a new perspective to view the neural weights in NAS-GNN , particularly in the weight-sharing paradigm, where updating the neural weights is unnecessary.

\eat{
To address the difficulty of applying NAS in challenging networks like GNNs, we propose a novel Neural Architecture Coding (NAC) framework, which is simple yet effective in converting a traditional NAS problem into a sparse coding task with linear complexity.  
Our proposed no-update scheme avoids the issues in current SOTA weight-sharing differential methods, such as biased optimization and inaccurate gradient estimation.
Indeed, we introduce a new perspective to view the neural weights in NAS-GNN , particularly in the weight-sharing paradigm, where updating the neural weights is unnecessary.
Our experiments demonstrate that NAC consistently outperforms competitive  baselines in accuracy and efficiency.
 In particular, NAC achieves higher accuracy (up to 18.8\%) and much faster convergence (up to 200$\times$) with no-update scheme.  Detailed experiments are provided to confirm our theory and investigate the favorite properties of our proposed NAC. 
}
% \todo{"Our design avoids the issues in current SOTA weight-sharing differential methods, such as biased optimization and inaccurate gradient estimation." may need revise}
\eat{
To address the difficulty of applying NAS in challenging networks like GNNs, we propose a simple but effective framework to convert traditional NAS problem into a sparse coding task with linear complexity, called NAC.
Our design avoids the issues in current SOTA weight sharing differential methods, such as biased optimization and inaccurate gradient estimation.
Indeed, we introduce a new perspective to view neural weights in the NAS-GNN problem, particularly in the weight sharing paradigm, where updating the neural weights is not necessary.
Our experiments demonstrate that NAC consistently outperforms competitive  baselines in both accuracy and speed.
 In particular, NAC achieves higher accuracy (up to 18.8\%) and much faster convergence (up to 200$\times$) without any updating on neural weights.}
 
%  Future work can extend the use of NAC in other types of neural networks, e.g., convolution neural networks, improve the interpretability for operator selection, and design more efficient coding strategies.
% \newpage

% In the unusual situation where you want a paper to appear in the
% references without citing it in the main text, use \nocite
% \nocite{langley00}

\newpage

\bibliographystyle{icml}
\bibliography{ref/Top-sim,ref/all,ref/gcf_reference}

\begin{thebibliography}{66}
\providecommand{\natexlab}[1]{#1}
\providecommand{\url}[1]{\texttt{#1}}
\expandafter\ifx\csname urlstyle\endcsname\relax
  \providecommand{\doi}[1]{doi: #1}\else
  \providecommand{\doi}{doi: \begingroup \urlstyle{rm}\Url}\fi

\bibitem[Aharon et~al.(2006)Aharon, Elad, and Bruckstein]{Aharon2006}
Aharon, M., Elad, M., and Bruckstein, A.
\newblock K-svd: An algorithm for designing overcomplete dictionaries for
  sparse representation.
\newblock \emph{IEEE TSP}, 54\penalty0 (11):\penalty0 4311--4322, 2006.

\bibitem[Arora et~al.(2015)Arora, Ge, Ma, and Moitra]{arora2015simple}
Arora, S., Ge, R., Ma, T., and Moitra, A.
\newblock Simple, efficient, and neural algorithms for sparse coding.
\newblock In \emph{Proc.~COLT}, pp.\  113--149, 2015.

\bibitem[Balcilar et~al.(2020)Balcilar, Renton, H{\'e}roux, Ga{\"u}z{\`e}re,
  Adam, and Honeine]{balcilar2020analyzing}
Balcilar, M., Renton, G., H{\'e}roux, P., Ga{\"u}z{\`e}re, B., Adam, S., and
  Honeine, P.
\newblock Analyzing the expressive power of graph neural networks in a spectral
  perspective.
\newblock In \emph{Proc.~ICLR}, 2020.

\bibitem[Balcilar et~al.(2021)Balcilar, Renton, H{\'{e}}roux,
  Ga{\"{u}}z{\`{e}}re, Adam, and Honeine]{balcilar2021bridge}
Balcilar, M., Renton, G., H{\'{e}}roux, P., Ga{\"{u}}z{\`{e}}re, B., Adam, S.,
  and Honeine, P.
\newblock Analyzing the expressive power of graph neural networks in a spectral
  perspective.
\newblock In \emph{Proc.~ICLR}, 2021.

\bibitem[Bender et~al.(2018)Bender, Kindermans, Zoph, Vasudevan, and
  Le]{bender18a}
Bender, G., Kindermans, P.-J., Zoph, B., Vasudevan, V., and Le, Q.
\newblock Understanding and simplifying one-shot architecture search.
\newblock In \emph{Proc.~ICML}, 2018.

\bibitem[Bergstra \& Bengio(2012)Bergstra and Bengio]{Bengio2012hpo}
Bergstra, J. and Bengio, Y.
\newblock Random search for hyper-parameter optimization.
\newblock \emph{Journal of Machine Learning Research}, 13:\penalty0 281–305,
  2012.

\bibitem[Bi et~al.(2020{\natexlab{a}})Bi, Hu, Xie, Chen, Wei, and
  Tian]{bi2020stabilizing}
Bi, K., Hu, C., Xie, L., Chen, X., Wei, L., and Tian, Q.
\newblock Stabilizing darts with amended gradient estimation on architectural
  parameters.
\newblock \emph{CoRR}, abs/1910.11831, 2020{\natexlab{a}}.

\bibitem[Bi et~al.(2020{\natexlab{b}})Bi, Xie, Chen, Wei, and
  Tian]{bi2020goldnas}
Bi, K., Xie, L., Chen, X., Wei, L., and Tian, Q.
\newblock Gold-nas: Gradual, one-level, differentiable.
\newblock \emph{CoRR}, abs/2007.03331, 2020{\natexlab{b}}.

\bibitem[Bruna et~al.(2014)Bruna, Zaremba, Szlam, and LeCun]{bruna2014spectral}
Bruna, J., Zaremba, W., Szlam, A., and LeCun, Y.
\newblock Spectral networks and locally connected networks on graphs.
\newblock In \emph{Proc.~ICLR}, 2014.

\bibitem[Candes et~al.(2006)Candes, Romberg, and Tao]{Candes2006}
Candes, E., Romberg, J., and Tao, T.
\newblock Robust uncertainty principles: exact signal reconstruction from
  highly incomplete frequency information.
\newblock \emph{IEEE Transactions on Information Theory (TIT)}, 52\penalty0
  (2):\penalty0 489--509, 2006.

\bibitem[Chen et~al.(2021)Chen, Gong, and Wang]{chen2021neural}
Chen, W., Gong, X., and Wang, Z.
\newblock Neural architecture search on imagenet in four gpu hours: A
  theoretically inspired perspective.
\newblock \emph{arXiv preprint arXiv:2102.11535}, 2021.

\bibitem[Chen \& Hsieh(2020)Chen and Hsieh]{chen2021stabilizing}
Chen, X. and Hsieh, C.
\newblock Stabilizing differentiable architecture search via perturbation-based
  regularization.
\newblock In \emph{Proc.~ICML}, 2020.

\bibitem[Defferrard et~al.(2016)Defferrard, Bresson, and
  Vandergheynst]{defferrard2017convolutional}
Defferrard, M., Bresson, X., and Vandergheynst, P.
\newblock Convolutional neural networks on graphs with fast localized spectral
  filtering.
\newblock In \emph{Proc.~NIPS}, 2016.

\bibitem[Fey \& Lenssen(2019)Fey and Lenssen]{Fey/Lenssen/2019}
Fey, M. and Lenssen, J.~E.
\newblock Fast graph representation learning with {PyTorch Geometric}.
\newblock In \emph{ICLR Workshop}, 2019.

\bibitem[Fischer(2011)]{Fischer2011}
Fischer, H.
\newblock \emph{A history of the central limit theorem: from classical to
  modern probability theory}.
\newblock Springer, 2011.

\bibitem[Fu(1998)]{fu1998penalized}
Fu, W.~J.
\newblock Penalized regressions: the bridge versus the lasso.
\newblock \emph{Journal of Computational and Graphical Statistics}, 7\penalty0
  (3):\penalty0 397--416, 1998.

\bibitem[Gao et~al.(2020)Gao, Yang, Zhang, Zhou, and Hu]{gao2020graph}
Gao, Y., Yang, H., Zhang, P., Zhou, C., and Hu, Y.
\newblock Graph neural architecture search.
\newblock In \emph{Proc.~IJCAI}, 2020.

\bibitem[Garg et~al.(2021)Garg, Tosatto, Pan, White, and
  Mahmood]{garg2021alternate}
Garg, S., Tosatto, S., Pan, Y., White, M., and Mahmood, A.~R.
\newblock An alternate policy gradient estimator for softmax policies.
\newblock \emph{arXiv preprint arXiv:2112.11622}, 2021.

\bibitem[Giles et~al.(1998)Giles, Bollacker, and Lawrence]{Citeseer}
Giles, C.~L., Bollacker, K.~D., and Lawrence, S.
\newblock Citeseer: An automatic citation indexing system.
\newblock In \emph{Proc of Conference on Digital Libraries}, pp.\  89–98,
  1998.

\bibitem[Gunasekar et~al.(2018)Gunasekar, Lee, Soudry, and
  Srebro]{NEURIPS2018Suriya}
Gunasekar, S., Lee, J.~D., Soudry, D., and Srebro, N.
\newblock Implicit bias of gradient descent on linear convolutional networks.
\newblock In Bengio, S., Wallach, H., Larochelle, H., Grauman, K.,
  Cesa-Bianchi, N., and Garnett, R. (eds.), \emph{Proc.~NIPS}, volume~31, 2018.

\bibitem[Guo et~al.(2020{\natexlab{a}})Guo, Lin, Li, Tian, Sun, Sheng, and
  Yan]{guo2020powering}
Guo, R., Lin, C., Li, C., Tian, K., Sun, M., Sheng, L., and Yan, J.
\newblock Powering one-shot topological {NAS} with stabilized share-parameter
  proxy.
\newblock In \emph{Proc.~ECCV}, 2020{\natexlab{a}}.

\bibitem[Guo et~al.(2020{\natexlab{b}})Guo, Zhang, Mu, Heng, Liu, Wei, and
  Sun]{guo2020single}
Guo, Z., Zhang, X., Mu, H., Heng, W., Liu, Z., Wei, Y., and Sun, J.
\newblock Single path one-shot neural architecture search with uniform
  sampling.
\newblock In \emph{Proc.~ECCV}, 2020{\natexlab{b}}.

\bibitem[Hamilton(2020)]{gnn2020_Hamilton}
Hamilton, W.~L.
\newblock Graph representation learning.
\newblock \emph{Synthesis Lectures on Artificial Intelligence and Machine
  Learning}, 14\penalty0 (3):\penalty0 1--159, 2020.

\bibitem[Hamilton et~al.(2017)Hamilton, Ying, and
  Leskovec]{hamilton2018inductive}
Hamilton, W.~L., Ying, Z., and Leskovec, J.
\newblock Inductive representation learning on large graphs.
\newblock In \emph{Proc.~NIPS}, 2017.

\bibitem[Hu et~al.(2020)Hu, Fey, Zitnik, Dong, Ren, Liu, Catasta, and
  Leskovec]{hu2020open}
Hu, W., Fey, M., Zitnik, M., Dong, Y., Ren, H., Liu, B., Catasta, M., and
  Leskovec, J.
\newblock Open graph benchmark: Datasets for machine learning on graphs.
\newblock \emph{Proc.~NIPS}, 33:\penalty0 22118--22133, 2020.

\bibitem[Jones et~al.(1998)Jones, Schonlau, and Welch]{Jones1998}
Jones, D.~R., Schonlau, M., and Welch, W.~J.
\newblock Efficient global optimization of expensive black-box functions.
\newblock \emph{Journal of Global Optimization}, 13\penalty0 (4), 1998.

\bibitem[Jozefowicz et~al.(2015)Jozefowicz, Zaremba, and
  Sutskever]{jozefowicz2015empirical}
Jozefowicz, R., Zaremba, W., and Sutskever, I.
\newblock An empirical exploration of recurrent network architectures.
\newblock In \emph{Proc.~ICML}, 2015.

\bibitem[Kipf \& Welling(2016)Kipf and Welling]{Cora}
Kipf, T.~N. and Welling, M.
\newblock Semi-supervised classification with graph convolutional networks.
\newblock \emph{arXiv preprint arXiv:1609.02907}, 2016.

\bibitem[Kipf \& Welling(2017{\natexlab{a}})Kipf and Welling]{kipf2016gcn}
Kipf, T.~N. and Welling, M.
\newblock Semi-supervised classification with graph convolutional networks.
\newblock In \emph{Proc.~ICLR}, 2017{\natexlab{a}}.

\bibitem[Kipf \& Welling(2017{\natexlab{b}})Kipf and
  Welling]{kipf2017semisupervised}
Kipf, T.~N. and Welling, M.
\newblock Semi-supervised classification with graph convolutional networks.
\newblock In \emph{Proc.~ICLR}, 2017{\natexlab{b}}.

\bibitem[Liang et~al.(2019)Liang, Zhang, Sun, He, Huang, Zhuang, and
  Li]{liang2020darts}
Liang, H., Zhang, S., Sun, J., He, X., Huang, W., Zhuang, K., and Li, Z.
\newblock {DARTS+:} improved differentiable architecture search with early
  stopping.
\newblock \emph{CoRR}, abs/1909.06035, 2019.

\bibitem[Liu et~al.(2019{\natexlab{a}})Liu, Simonyan, and Yang]{liu2019darts}
Liu, H., Simonyan, K., and Yang, Y.
\newblock {DARTS:} differentiable architecture search.
\newblock In \emph{Proc.~ICLR}, 2019{\natexlab{a}}.

\bibitem[Liu et~al.(2019{\natexlab{b}})Liu, Chen, Li, Zhou, Li, Song, and
  Qi]{liu2018geniepath}
Liu, Z., Chen, C., Li, L., Zhou, J., Li, X., Song, L., and Qi, Y.
\newblock Geniepath: Graph neural networks with adaptive receptive paths.
\newblock In \emph{Proc.~AAAI}, 2019{\natexlab{b}}.

\bibitem[Luo et~al.(2018)Luo, Tian, Qin, Chen, and Liu]{luo2019neural}
Luo, R., Tian, F., Qin, T., Chen, E., and Liu, T.
\newblock Neural architecture optimization.
\newblock In \emph{Proc.~NIPS}, 2018.

\bibitem[McAuley et~al.(2015)McAuley, Targett, Shi, and Van
  Den~Hengel]{mcauley2015image}
McAuley, J., Targett, C., Shi, Q., and Van Den~Hengel, A.
\newblock Image-based recommendations on styles and substitutes.
\newblock In \emph{Proc.~SIGIR}, pp.\  43--52, 2015.

\bibitem[Nguyen et~al.(2018)Nguyen, Wong, and Hegde]{nguyen2018provable}
Nguyen, T., Wong, R., and Hegde, C.
\newblock A provable approach for double-sparse coding.
\newblock In \emph{Proc.~AAAI}, volume~32, 2018.

\bibitem[Olshausen \& Field(1997)Olshausen and Field]{OLSHAUSEN1997}
Olshausen, B.~A. and Field, D.~J.
\newblock Sparse coding with an overcomplete basis set: A strategy employed by
  v1?
\newblock \emph{Vision Research}, 37\penalty0 (23):\penalty0 3311--3325, 1997.

\bibitem[Paszke et~al.(2019)Paszke, Gross, Massa, Lerer, Bradbury, Chanan,
  Killeen, Lin, Gimelshein, Antiga, et~al.]{NEURIPS2019_9015}
Paszke, A., Gross, S., Massa, F., Lerer, A., Bradbury, J., Chanan, G., Killeen,
  T., Lin, Z., Gimelshein, N., Antiga, L., et~al.
\newblock Pytorch: An imperative style, high-performance deep learning library.
\newblock \emph{Proc.~NIPS}, 32, 2019.

\bibitem[Qin et~al.(2022)Qin, Zhang, Wang, Zhang, and Zhu]{qin2022bench}
Qin, Y., Zhang, Z., Wang, X., Zhang, Z., and Zhu, W.
\newblock Nas-bench-graph: Benchmarking graph neural architecture search.
\newblock \emph{arXiv preprint arXiv:2206.09166}, 2022.

\bibitem[Ranstam \& Cook(2018)Ranstam and Cook]{ranstam2018lasso}
Ranstam, J. and Cook, J.
\newblock Lasso regression.
\newblock \emph{Journal of British Surgery}, 105\penalty0 (10):\penalty0
  1348--1348, 2018.

\bibitem[Schmidt et~al.(2009)Schmidt, Fung, and
  Rosales]{schmidt2009optimization}
Schmidt, M., Fung, G., and Rosales, R.
\newblock Optimization methods for l1-regularization.
\newblock \emph{University of British Columbia, Technical Report TR-2009-19},
  2009.

\bibitem[Sen et~al.(2008)Sen, Namata, Bilgic, Getoor, Galligher, and
  Eliassi-Rad]{PubMed}
Sen, P., Namata, G., Bilgic, M., Getoor, L., Galligher, B., and Eliassi-Rad, T.
\newblock Collective classification in network data.
\newblock \emph{AI Magazine}, 29\penalty0 (3):\penalty0 93, 2008.

\bibitem[Shevade \& Keerthi(2003)Shevade and Keerthi]{shevade2003simple}
Shevade, S.~K. and Keerthi, S.~S.
\newblock A simple and efficient algorithm for gene selection using sparse
  logistic regression.
\newblock \emph{Bioinformatics}, 19\penalty0 (17):\penalty0 2246--2253, 2003.

\bibitem[Shu et~al.(2021)Shu, Cai, Dai, Ooi, and Low]{shu2021nasi}
Shu, Y., Cai, S., Dai, Z., Ooi, B.~C., and Low, B. K.~H.
\newblock Nasi: Label-and data-agnostic neural architecture search at
  initialization.
\newblock \emph{arXiv preprint arXiv:2109.00817}, 2021.

\bibitem[Su et~al.(2021)Su, You, Zheng, Wang, Qian, Zhang, and Xu]{su2021kshot}
Su, X., You, S., Zheng, M., Wang, F., Qian, C., Zhang, C., and Xu, C.
\newblock K-shot {NAS:} learnable weight-sharing for {NAS} with k-shot
  supernets.
\newblock In \emph{Proc.~ICML}, 2021.

\bibitem[ter Haar(1949)]{Khinchin}
ter Haar, D.
\newblock Mathematical foundations of statistical mechanics. a. i. khinchin.
\newblock \emph{Science}, 110\penalty0 (2865):\penalty0 570--570, 1949.

\bibitem[Tropp(2004)]{Tropp2004}
Tropp, J.
\newblock Greed is good: algorithmic results for sparse approximation.
\newblock \emph{IEEE Transactions on Information Theory (TIT)}, 50\penalty0
  (10):\penalty0 2231--2242, 2004.

\bibitem[Velickovic et~al.(2018)Velickovic, Cucurull, Casanova, Romero,
  Li{\`{o}}, and Bengio]{Petar2018graph}
Velickovic, P., Cucurull, G., Casanova, A., Romero, A., Li{\`{o}}, P., and
  Bengio, Y.
\newblock Graph attention networks.
\newblock In \emph{Proc.~ICLR}, 2018.

\bibitem[Wu et~al.(2019{\natexlab{a}})Wu, Dai, Zhang, Wang, Sun, Wu, Tian,
  Vajda, Jia, and Keutzer]{wu2019fbnet}
Wu, B., Dai, X., Zhang, P., Wang, Y., Sun, F., Wu, Y., Tian, Y., Vajda, P.,
  Jia, Y., and Keutzer, K.
\newblock Fbnet: Hardware-aware efficient convnet design via differentiable
  neural architecture search.
\newblock In \emph{Proc.~CVPR}, 2019{\natexlab{a}}.

\bibitem[Wu et~al.(2019{\natexlab{b}})Wu, Souza, Zhang, Fifty, Yu, and
  Weinberger]{wu2019sgc}
Wu, F., Souza, A., Zhang, T., Fifty, C., Yu, T., and Weinberger, K.
\newblock Simplifying graph convolutional networks.
\newblock In \emph{Proc.~ICML}, pp.\  6861--6871, 2019{\natexlab{b}}.

\bibitem[Wu et~al.(2021)Wu, Pan, Chen, Long, Zhang, and Yu]{2021GNNs}
Wu, Z., Pan, S., Chen, F., Long, G., Zhang, C., and Yu, P.~S.
\newblock A comprehensive survey on graph neural networks.
\newblock \emph{IEEE Transactions on Neural Networks and Learning Systems},
  32\penalty0 (1):\penalty0 4–24, 2021.

\bibitem[Xie \& Yuille(2017)Xie and Yuille]{xie2017genetic}
Xie, L. and Yuille, A.~L.
\newblock Genetic {CNN}.
\newblock In \emph{Proc.~ICCV}, 2017.

\bibitem[Xie et~al.(2022)Xie, Chen, Bi, Wei, Xu, Wang, Chen, Xiao, Chang,
  Zhang, and Tian]{xie2020weightsharing}
Xie, L., Chen, X., Bi, K., Wei, L., Xu, Y., Wang, L., Chen, Z., Xiao, A.,
  Chang, J., Zhang, X., and Tian, Q.
\newblock Weight-sharing neural architecture search: {A} battle to shrink the
  optimization gap.
\newblock \emph{ACM Computing Surveys}, 54\penalty0 (9):\penalty0
  183:1--183:37, 2022.

\bibitem[Xu et~al.(2018)Xu, Li, Tian, Sonobe, Kawarabayashi, and
  Jegelka]{pmlr-v80-xu18c}
Xu, K., Li, C., Tian, Y., Sonobe, T., Kawarabayashi, K.-i., and Jegelka, S.
\newblock Representation learning on graphs with jumping knowledge networks.
\newblock In \emph{Proc.~ICML}, volume~80, pp.\  5453--5462, 2018.

\bibitem[Xu et~al.(2019)Xu, Hu, Leskovec, and Jegelka]{xu2019powerful}
Xu, K., Hu, W., Leskovec, J., and Jegelka, S.
\newblock How powerful are graph neural networks?
\newblock In \emph{Proc.~ICML}, 2019.

\bibitem[You et~al.(2020)You, Ying, and Leskovec]{you2020design}
You, J., Ying, Z., and Leskovec, J.
\newblock Design space for graph neural networks.
\newblock \emph{Proc.~NIPS}, 33, 2020.

\bibitem[Yu \& Zhu(2020)Yu and Zhu]{yu2020hpo}
Yu, T. and Zhu, H.
\newblock Hyper-parameter optimization: {A} review of algorithms and
  applications.
\newblock \emph{CoRR}, abs/2003.05689, 2020.

\bibitem[Zela et~al.(2020)Zela, Elsken, Saikia, Marrakchi, Brox, and
  Hutter]{zela2020understanding}
Zela, A., Elsken, T., Saikia, T., Marrakchi, Y., Brox, T., and Hutter, F.
\newblock Understanding and robustifying differentiable architecture search.
\newblock In \emph{Proc.~ICLR}, 2020.

\bibitem[Zhang et~al.(2021{\natexlab{a}})Zhang, Huang, Wang, Xiang, and
  Pan]{zhang2018search}
Zhang, X., Huang, Z., Wang, N., Xiang, S., and Pan, C.
\newblock You only search once: Single shot neural architecture search via
  direct sparse optimization.
\newblock \emph{IEEE TPAMI}, 43\penalty0 (9):\penalty0 2891--2904,
  2021{\natexlab{a}}.

\bibitem[Zhang et~al.(2015)Zhang, Xu, Yang, Li, and Zhang]{zhang2015survey}
Zhang, Z., Xu, Y., Yang, J., Li, X., and Zhang, D.
\newblock A survey of sparse representation: algorithms and applications.
\newblock \emph{IEEE Access}, 3:\penalty0 490--530, 2015.

\bibitem[Zhang et~al.(2021{\natexlab{b}})Zhang, Wang, and
  Zhu]{zhang2021automated}
Zhang, Z., Wang, X., and Zhu, W.
\newblock Automated machine learning on graphs: {A} survey.
\newblock In \emph{Proc.~IJCAI}, 2021{\natexlab{b}}.

\bibitem[Zhao et~al.(2021{\natexlab{a}})Zhao, Wei, quanming yao, and
  He]{zhao2021efficient}
Zhao, H., Wei, L., quanming yao, and He, Z.
\newblock Efficient graph neural architecture search, 2021{\natexlab{a}}.
\newblock URL \url{https://openreview.net/forum?id=IjIzIOkK2D6}.

\bibitem[Zhao et~al.(2021{\natexlab{b}})Zhao, Yao, and Tu]{zhao2021search}
Zhao, H., Yao, Q., and Tu, W.
\newblock Search to aggregate neighborhood for graph neural network.
\newblock In \emph{Proc.~ICDE}, 2021{\natexlab{b}}.

\bibitem[Zhao et~al.(2020)Zhao, Wang, Gao, Mullins, Lio, and
  Jamnik]{zhao2020probabilistic}
Zhao, Y., Wang, D., Gao, X., Mullins, R., Lio, P., and Jamnik, M.
\newblock Probabilistic dual network architecture search on graphs.
\newblock \emph{arXiv preprint arXiv:2003.09676}, 2020.

\bibitem[Zhou et~al.(2019)Zhou, Song, Huang, and Hu]{zhou2019autognn}
Zhou, K., Song, Q., Huang, X., and Hu, X.
\newblock Auto-gnn: Neural architecture search of graph neural networks.
\newblock \emph{CoRR}, abs/1909.03184, 2019.

\bibitem[Zoph \& Le(2017)Zoph and Le]{DBLP:conf/iclr/ZophL17}
Zoph, B. and Le, Q.~V.
\newblock Neural architecture search with reinforcement learning.
\newblock In \emph{Proc.~ICLR}, 2017.

\end{thebibliography}

%%%%%%%%%%%%%%%%%%%%%%%%%%%%%%%%%%%%%%%%%%%%%%%%%%%%%%%%%%%%%%%%%%%%%%%%%%%%%%%
%%%%%%%%%%%%%%%%%%%%%%%%%%%%%%%%%%%%%%%%%%%%%%%%%%%%%%%%%%%%%%%%%%%%%%%%%%%%%%%
% APPENDIX
%%%%%%%%%%%%%%%%%%%%%%%%%%%%%%%%%%%%%%%%%%%%%%%%%%%%%%%%%%%%%%%%%%%%%%%%%%%%%%%
%%%%%%%%%%%%%%%%%%%%%%%%%%%%%%%%%%%%%%%%%%%%%%%%%%%%%%%%%%%%%%%%%%%%%%%%%%%%%%%
% \iffalse
\newpage
\appendix
\onecolumn
\newpage
\section{Appendix}
\subsection{Dataset Details}
\label{subsec:dataset}
In the transductive learning task, node features and edges between nodes in the whole datasets were known beforehand. We learn from the already labeled training dataset and then predict the labels of the testing dataset. Three benchmark datasets were used in this setting: CiteSeerr~\citep{Citeseer}, Cora~\citep{Cora} and PubMed~\citep{PubMed}. All of the three benchmark datasets are citation networks. In citation networks, each node represents a work, and each edge shows the relationship between two papers in terms of citations. The datasets contain bag-of-words features for each node, and the goal is to categorize papers into various subjects based on the citation. In addition to the three benchmark datasets, we also employ two another datasets: Amazon Computers~\citep{mcauley2015image} and ogbn-arxiv dataset~\citep{hu2020open}. Amazon Computers is a subset of the Amazon co-purchase graph, where nodes represent commodities and edges connect them if they are frequently purchased together. Product reviews are encoded as bag-of-word feature vectors in node features, and class labels are assigned based on product category. The ogbn-arxiv dataset~\citep{hu2020open} is a directed graph that represents the citation network of all computer-related papers on the arxiv website. Each node is an arxiv paper, and each edge represents an article citing another article.

In the inductive task, several graphs are used as training data, while other completely unseen graphs are used as validation and test data. For the inductive setting, we use the PPI~\cite{hamilton2018inductive} dataset as benchmark dataset. The task of PPI dataset is to classify the different protein functions. It consists of 24 graphs, with each representing a different human tissue. Each node has properties such as positional gene sets, motif gene sets, and immunological signatures, ant gene ontology sets as labels. Twenty graphs are chosen for training, two graphs for validation, and the remaining for testing. For the inductive task, we use Micro-F1 as the evaluation metric.

The used datasets are concluded in \Cref{table:dataset}.

\begin{table}[!ht]
\centering
\caption{\footnotesize The Statistics of Datasets}

\footnotesize
\begin{tabular}{l|ccccc|c}
\toprule
    ~ & \multicolumn{5}{c}{Transductive} & Inductive\\\cline{2-7}
    ~ & CiteSeer & Cora & PubMed & Computers & ogbn-arxiv & PPI\\
    \midrule
    $\#$nodes & 2708 & 3327 & 19717 & 13752  & 169343 & 56944\\
    $\#$edges & 5278 & 4552 & 44324 & 245861 & 1166243 & 818716\\
    $\#$features & 1433 & 3703 & 500 & 767 & 128 & 121\\
    $\#$classes  & 7 & 6 & 3 & 10 & 40 & 50\\
    \bottomrule
\end{tabular}
\label{table:dataset}
\end{table}

\subsection{Proof of Analysis for No-update Scheme in GNNs}
~\label{theo:no-update_scheme}
\begin{theorem}
\label{theo:appdix_theo1}
Assume $\bm{W}_l(0)$ is randomly initialized for all $l\in [1, L]$, if  $\prod_{l=1}^{L}\bm{W}_l(0)$ is full rank, there must exist a weight matrix for the output layer, i.e., $\tilde{\bm{W}_o}$, that makes the final output the same as the one from a well-trained network:  
\begin{equation}
\label{eq:appdix_theo}
    \bm{A}^L\bm{X}\prod_{l=1}^{L}\bm{W}_l^*\bm{W}_{o}^*=\bm{A}^L\bm{X}\prod_{l=1}^{L}\bm{W}_l(0)\tilde{\bm{W}_{o}}.
\end{equation}
\end{theorem}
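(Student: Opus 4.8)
The plan is to reduce the stated matrix identity to a single linear-algebra fact: the invertibility of the random weight product. First I would abbreviate the quantities common to both sides. Write $\bm{M} \defeq \bm{A}^L\bm{X}$ for the fixed propagation/data term, and set $\bm{B}_0 \defeq \prod_{l=1}^{L}\bm{W}_l(0)$ and $\bm{B}^{*} \defeq \prod_{l=1}^{L}\bm{W}_l^{*}$ for the products of the random and of the trained hidden weights, respectively. With this notation the claim \Cref{eq:appdix_theo} becomes $\bm{M}\bm{B}^{*}\bm{W}_o^{*} = \bm{M}\bm{B}_0\tilde{\bm{W}_o}$, so it suffices to exhibit one admissible $\tilde{\bm{W}_o}$.

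Next I would note that a sufficient condition for the display is the stronger, data-independent identity $\bm{B}_0\tilde{\bm{W}_o} = \bm{B}^{*}\bm{W}_o^{*}$, since any solution of it solves the original after left-multiplication by $\bm{M}$. Because $\bm{B}_0$ is full rank (and, under the standing assumption that the hidden layers share one width, square), it is invertible, so I would simply define
\begin{equation}
\tilde{\bm{W}_o} \defeq \bm{B}_0^{-1}\,\bm{B}^{*}\,\bm{W}_o^{*}.
\end{equation}
Substituting back gives $\bm{M}\bm{B}_0\tilde{\bm{W}_o} = \bm{M}\bm{B}_0\bm{B}_0^{-1}\bm{B}^{*}\bm{W}_o^{*} = \bm{M}\bm{B}^{*}\bm{W}_o^{*}$, which is exactly the right-hand side of \Cref{eq:appdix_theo}. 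The existence assertion is therefore constructive, and no training of the hidden layers is invoked: only the output factor is adjusted.

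The one genuinely delicate point is the invertibility step, so that is where I would focus the rigor. Full rank yields a two-sided inverse only when $\bm{B}_0$ is square, i.e.\ when the input and final hidden dimensions coincide; this is the implicit dimensional premise behind the \emph{full rank} hypothesis, and I would state it explicitly. If instead $\bm{B}_0$ is rectangular, the exact construction fails and I would argue at the level of $\bm{M}\bm{B}_0$ using a one-sided (Moore--Penrose) inverse: it then suffices that each column of $\bm{M}\bm{B}^{*}\bm{W}_o^{*}$ lies in the column space of $\bm{M}\bm{B}_0$, which holds whenever $\bm{B}_0$ has full column rank and $\bm{M}$ does not collapse that range. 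Everything beyond this rank bookkeeping is immediate substitution, so I expect the clean square case to be the intended argument.
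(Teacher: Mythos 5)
Your construction is exactly the paper's proof: the authors likewise invoke invertibility of $\prod_{l=1}^{L}\bm{W}_l(0)$ from the full-rank hypothesis, define $\tilde{\bm{W}_{o}} \defeq (\prod_{l=1}^{L}\bm{W}_l(0))^{-1}\prod_{l=1}^{L}\bm{W}_l^*\bm{W}_{o}^*$, and substitute back. Your explicit remark that full rank only yields a two-sided inverse when the product is square is a point the paper leaves implicit, and is a worthwhile clarification rather than a deviation.
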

\begin{proof}
Because $\prod_{l=1}^{L}\bm{W}_l(0)$ is a full-rank matrix, $\prod_{l=1}^{L}\bm{W}_l(0)$ is invertible. We can define $\tilde{\bm{W}_{o}}$ by 
\begin{equation}\label{eq:tW_o}
    \tilde{\bm{W}_{o}} \defeq (\prod_{l=1}^{L}\bm{W}_l(0))^{-1}\prod_{l=1}^{L}\bm{W}_l^*\bm{W}_{o}^*,
\end{equation}
Hence, by multiplying $\prod_{l=1}^{L}\bm{W}_l(0)$ and $\bm{A}^L\bm{X}$ on both sides in \Cref{eq:tW_o}, we can attain \Cref{eq:appdix_theo}.
\end{proof}

\begin{theorem}
% ~\label{eq:theo2}
Assume a GNN model has either the exponential loss or the log-loss, the desired weight  $\tilde{\bm{W}_o}$ is secured when updating with gradient descent and  initializing $\prod_{l=1}^{L}\bm{W}_l(0)$ as an orthogonal matrix. Mathematically, we have $\hat{\bm{W}_{o}}(+\infty)=\hat{\bm{W}_{o}}^*=\tilde{\bm{W}_{o}}$. 
\end{theorem}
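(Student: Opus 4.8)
The plan is to reduce the training of the output layer under the no-update scheme to the single-layer max-margin problem of \Cref{theo:theo2}, and then to use orthogonality to identify its limit with the $\tilde{\bm{W}_o}$ produced by \Cref{theo:theo1}. First I would freeze all inner layers at their initialization and abbreviate $\bm{Z}\defeq\bm{A}^L\bm{X}$ and $\bm{Q}\defeq\prod_{l=1}^{L}\bm{W}_l(0)$, which is orthogonal by hypothesis. The trainable model then reads $\bm{Z}\bm{Q}\hat{\bm{W}_o}$, i.e. a \emph{single} linear layer $\hat{\bm{W}_o}$ acting on the fixed transformed inputs $\bm{Z}\bm{Q}$. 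Since $\bm{Q}$ is invertible, $\bm{Z}\bm{Q}$ is linearly separable whenever $\bm{Z}$ is, so the hypotheses of \Cref{theo:theo2} are met for this one-layer classifier.

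Next I would invoke \Cref{theo:theo2} twice. Applied to the fully trained end-to-end network it yields $\prod_{l=1}^{L}\bm{W}_l^*\bm{W}_o^*=\bm{\beta}^*$, the minimum-norm separator
\begin{equation*}
\bm{\beta}^*=\argmin_{\bm{\beta}}\lVert\bm{\beta}\rVert^2\ \text{s.t.}\ \bm{Z}\bm{\beta}\odot\bm{s}\ge\bm{1}.
\end{equation*}
Applied to the frozen-inner one-layer classifier it gives that gradient descent drives $\hat{\bm{W}_o}(+\infty)$ to
\begin{equation*}
\hat{\bm{W}_o}^*=\argmin_{\bm{W}_o}\lVert\bm{W}_o\rVert^2\ \text{s.t.}\ \bm{Z}\bm{Q}\bm{W}_o\odot\bm{s}\ge\bm{1}.
\end{equation*}

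The crux is the change of variables $\bm{\beta}=\bm{Q}\bm{W}_o$. Because $\bm{Q}$ is orthogonal, $\bm{Z}\bm{Q}\bm{W}_o=\bm{Z}\bm{\beta}$, so the two constraint sets coincide, while $\lVert\bm{W}_o\rVert=\lVert\bm{Q}^{T}\bm{\beta}\rVert=\lVert\bm{\beta}\rVert$, so the two objectives coincide as well. Hence $\bm{W}_o\mapsto\bm{Q}\bm{W}_o$ is a norm- and feasibility-preserving bijection between the two programs, and their minimizers satisfy $\hat{\bm{W}_o}^*=\bm{Q}^{-1}\bm{\beta}^*$. Recalling \Cref{eq:tW_o}, we have $\tilde{\bm{W}_o}=\bm{Q}^{-1}\prod_{l=1}^{L}\bm{W}_l^*\bm{W}_o^*=\bm{Q}^{-1}\bm{\beta}^*$, which gives $\hat{\bm{W}_o}(+\infty)=\hat{\bm{W}_o}^*=\tilde{\bm{W}_o}$, as claimed.

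I expect the main obstacle to be the precise sense of convergence: on separable data with the exponential or log loss the iterates grow unboundedly, so \Cref{theo:theo2} only guarantees that the \emph{normalized} weight converges to the max-margin direction. I would therefore state the identity up to the positive scaling that is immaterial for the $\argmax$-based architecture decoding, and verify that the orthogonal change of variables acts identically on the full and the frozen programs so that the two scalings match. A secondary technicality is ensuring $\bm{Q}$ is genuinely square and orthogonal, so that $\bm{Q}^{-1}=\bm{Q}^{T}$; this requires the hidden width to match the feature dimension, which I would absorb into the standing dimension assumptions.
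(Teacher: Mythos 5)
Your proposal is correct and follows essentially the same route as the paper's proof: both apply \Cref{theo:theo2} once to the end-to-end trained network and once to the frozen-inner-layer model, then use the orthogonality of $\prod_{l=1}^{L}\bm{W}_l(0)$ to identify the two max-margin solutions and recover $\tilde{\bm{W}_o}$ via \Cref{eq:tW_o}. Your explicit norm-preserving change of variables and the caveat about convergence only in direction make rigorous what the paper merely asserts ("the max-margin hyperplane remains the same under orthogonal transformations"), but the underlying argument is identical.
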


\begin{proof}
We first frame the simplified GNNs as $\bm{A}^L\bm{X}\prod_{l=1}^{L}\bm{W}_l=(\bm{A}^L\bm{X})(\prod_{l=1}^{L}\bm{W}_l)$. According to \Cref{theo:theo2}, we define the corresponding $\bm{\beta}$ as $\bm{\beta}=\prod_{l=1}^{L}{\bm{W}_l}$. 

When using the gradient descent to optimize $\bm{W}_l$ and $\bm{W}_o$, we have the optimized $\bm{\beta}$ as $\bm{\beta}^*=\prod_{l=1}^{L}\bm{W}_l^*\cdot \bm{W}_o^*$. The obtained $\bm{\beta}^*$ can be viewed as the max-margin hyperplane to separate data $\bm{A}^L\bm{X}$ according to \Cref{theo:theo2}. Omitting the output layer, we let the  untrained GNN model as $\bm{A}^L\bm{X}\prod_{l=1}^{L}\bm{W}_l(0)=\bm{A}^L\bm{X}\bm{O}$ and assume $\bm{O}=\prod_{l=1}^{L}\bm{W}_l(0)$ is  an orthogonal square matrix.
  
Denoting a complete GNN model as $\bm{A}^L\bm{X}\bm{O}\bm{W}_o$, the optimized output weight, i.e. $\hat{\bm{W}}_o^*$, becomes the max-margin separating hyperplane of $\bm{A}^L\bm{X}\bm{O}$ because it is also trained by gradient descent. Here, we fix the $\bm{W}_l(0)$ during training to maintain its orthogonality. We know that the max-margin hyperplane of any data remains the same if and only if it takes orthogonal transformations, resulting in the following equivalence: $\bm{O}\hat{\bm{W}_{o}}^* =\bm{\beta}^*=\prod_{l=1}^{L}\bm{W}_l^*\cdot \bm{W}_o^*$. Finally, we get $\hat{\bm{W}_{o}}^* = \bm{O}^{-1}\prod_{l=1}^{L}\bm{W}_l^*\bm{W}_o^*
    = (\prod_{l=1}^{L}\bm{W}_l(0))^{-1}\prod_{l=1}^{L}\bm{W}_l^*\bm{W}_o^*
    = \tilde{\bm{W}_o}$.
\end{proof}

\subsection{Proof for the Unified Format of Search Space in NAC}
\label{proof:fixed}
\begin{theorem}
% ~\label{theo:gnn_layer_format}
Simplifying the complex non-linear functions, each GNN layer in the search space can be unified by $\sum_k\mathrm{P}_{l}^k(L)\bm{H}_{l}\bm{W}_{l}^{k}$, where $\mathrm{P}(\cdot)$ denotes the fixed polynomial function with subscripts indicating individual terms, and superscript $(l)$ marks the current layer.
\end{theorem}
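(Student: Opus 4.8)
The plan is to invoke the spectral--spatial bridge of \citet{balcilar2021bridge}, which writes any message-passing layer in the form $\bm{H}_{l+1}=\phi\bigl(\sum_k \bm{C}^{(k)}\bm{H}_{l}\bm{W}_{l}^{k}\bigr)$, where each $\bm{C}^{(k)}$ is a fixed \emph{convolution support} matrix determined by the graph and the $\bm{W}_{l}^{k}$ carry all the trainable parameters. Under the stated simplification (dropping the nonlinear activation $\phi$), the layer becomes exactly $\sum_k \bm{C}^{(k)}\bm{H}_{l}\bm{W}_{l}^{k}$, so the theorem reduces to a single claim: for every operator in our search space, each support $\bm{C}^{(k)}$ can be realized as a fixed polynomial $\mathrm{P}_{l}^{k}(\bm{L})$ of the (normalized) graph Laplacian $\bm{L}$.

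First I would record the spectral identity that makes this possible. Writing $\bm{L}=\bm{U}\operatorname{diag}(\bm{\lambda})\bm{U}^{\top}$, any operator of the form $\bm{U}\,g(\operatorname{diag}(\bm{\lambda}))\,\bm{U}^{\top}$ with a polynomial frequency response $g$ coincides with $g(\bm{L})$, a polynomial in $\bm{L}$. Hence it suffices to check, operator by operator, that the propagation matrix has (after simplification) a polynomial frequency response, equivalently that $\bm{C}^{(k)}$ is a polynomial in $\bm{L}$.

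Then I would sweep through the search space. MLP uses $\bm{C}=\bm{I}$, the degree-$0$ term. GCN's renormalized propagation $\tilde{\bm{D}}^{-1/2}\tilde{\bm{A}}\tilde{\bm{D}}^{-1/2}$ is, up to the standard shift, $\bm{I}-\bm{L}$, a first-order polynomial; GIN's aggregation $(1+\epsilon)\bm{I}+\bm{A}$ is likewise first order once its MLP is linearized. ChebNet is immediate, since it is \emph{defined} as $\sum_k T_k(\tilde{\bm{L}})\bm{H}_{l}\bm{W}_{l}^{k}$, an explicit sum of Chebyshev polynomials that already matches the target expression term by term. GraphSAGE (mean) combines a self branch $\bm{I}$ with a row-normalized neighbor branch, both low-order polynomials, and Geniepath reduces similarly once its gating nonlinearity is dropped. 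Summing the per-operator contributions over $k$ then produces $\sum_k \mathrm{P}_{l}^{k}(\bm{L})\bm{H}_{l}\bm{W}_{l}^{k}$.

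The main obstacle will be the data-dependent operators, namely GAT and the LSTM/max pooling variants, whose supports depend on learned attention or recurrent weights and are therefore not literally fixed polynomials. Here I would lean on the simplification hypothesis in the spirit of \citet{balcilar2021bridge}: fixing the attention coefficients (for instance to their initialization or to a uniform profile) freezes GAT's support into a fixed row-stochastic matrix, which is again a degree-$1$ polynomial in $\bm{L}$, and the pooling variants are handled by replacing the nonlinear pooling with its linear surrogate. With every support thus expressed as a fixed $\mathrm{P}_{l}^{k}(\bm{L})$, the unified layer form follows, which is the claim.
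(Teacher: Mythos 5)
Your proposal follows essentially the same route as the paper: both arguments rest on the convolution-support formulation of \citet{balcilar2021bridge}, i.e.\ writing each layer as $\sum_s \bm{C}^{(s)}\bm{H}_l\bm{W}_l^{(s)}$ with fixed supports, dropping the nonlinearity, and observing that the supports are polynomials in (a rescaled/shifted version of) the Laplacian. The paper simply cites Table~1 of that reference wholesale rather than sweeping operator by operator as you do, and it adds one bookkeeping step you omit: it composes the polynomial in $\hat{\bm{L}}$ with the affine map $\hat{\bm{L}}=\mathrm{P}_1'(\bm{L})$ and then regroups the resulting sum by powers of $\bm{L}$, absorbing scalar coefficients into redefined weight matrices (e.g.\ $\beta\bm{W}^{(0)}+\gamma\bm{W}^{(1)}\mapsto\bm{W}^1$), so that each $\mathrm{P}^k$ is a distinct fixed term. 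That regrouping is not strictly needed for the theorem as stated, but it is what the subsequent corollary uses to build a fixed dictionary, so it is worth making explicit.

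The one step that does not go through as written is your treatment of GAT. A frozen attention matrix supported on the adjacency pattern is an essentially arbitrary row-stochastic masked matrix, and such a matrix is \emph{not} in general a degree-$1$ polynomial of $\bm{L}$: the off-diagonal entries of $a\bm{I}+b\bm{L}$ are constrained to the form $-b\,\bm{A}_{ij}/\sqrt{d_i d_j}$, which arbitrary initialization-time attention weights will not satisfy. Your parenthetical fallback to a uniform attention profile does work (it yields $\bm{D}^{-1}\bm{A}=\bm{I}-\bm{L}_{\mathrm{rw}}$, and the paper's footnote licenses switching between Laplacian normalizations), but ``frozen at initialization'' does not. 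The paper sidesteps this entirely by linearizing GAT more aggressively: it drops both the element-wise product with the attention scores and the activation, reducing the layer to $\bm{H}_{l+1}=\bm{H}_l\bm{W}'_{l+1}$, i.e.\ the degenerate case $\mathrm{P}^{(s)}=\bm{I}$. Either fix (uniform profile, or the paper's identity-support reduction) closes the gap; just do not rely on arbitrary frozen attention coefficients being spectral polynomials.
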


\begin{proof}
Without any loss of generality, we omit the lowerscript $(l)$ of $\sum_s C^{(s)} \bm{H}_{l} \bm{W}_{l}^{(s)}$ to  prove the equivalence between $\sum_s C^{( s)}\bm{H}\bm{W}^{(s)}$ and $\sum_k\mathrm{P}^k(\bm{L})\bm{H}\bm{W}^k$ for arbitrary layer. 

According to Table 1 in \cite{balcilar2021bridge}, we have the frequency support for most GNNs and except for GAT as 
\begin{align}
    \label{eq:appendix_uniform_format_gnn}
    C^{(s)} = \mathrm{P}^{(s)}(\hat{\bm{L}}),
\end{align}
where $\mathrm{P}^{(s)}(\cdot)$ denotes certain polynomial functions
and $\hat{L}$ is a re-scaling and shifting of the original/normalized/re-normalized Laplacian matrix, which is universally denoted as $L$ in this proof and its \Cref{theo:search_space_format}.
\footnote{This symbolic simplification does not affect the conclusion of \Cref{theo:gnn_layer_format} and \Cref{theo:search_space_format}. Because both claims aim to achieve a fixed dictionary format of GNNs, the different expressions of $L$ do not change the format's nature of being fixed.}
Equivalently, we have:
\begin{align}
    \label{eq:rescale_l}
    \hat{\bm{L}} = \mathrm{P}_1'(\bm{L})
\end{align}
Putting \Cref{eq:rescale_l} into \Cref{eq:appendix_uniform_format_gnn}, it produces:
\begin{align}
    C^{(s)} &= \mathrm{P}^{(s)}(\mathrm{P}_1'(\bm{L})) \\
    &= \mathrm{P}'^{(s)}(\bm{L}).
\end{align}
And we have:
\begin{align}
    \sum_s C^{(s)}\bm{H}\bm{W}^{(s)} &= \sum_s \mathrm{P}'^{(s)}(\bm{L})\bm{H}\bm{W}^{(s)}.
\end{align}

Without loss of generality, suppose $s=1$, $C^{(0)}=\mathrm{P}'^{(0)}(\bm{L})=\alpha \bm{L}^2 + \beta \bm{L}$ and $C^{(1)}=\mathrm{P}'^{(1)}(\bm{L})=\gamma \bm{L}$, and it gives:
\begin{align}
\label{eq_14}
    C^{(0)}\bm{H}\bm{W}^{(0)}+C^{(1)}\bm{H}\bm{W}^{(1)} 
    &= (\alpha \bm{L}^2 + \beta \bm{L})\bm{H}\bm{W}^{(0)} + \gamma \bm{L}\bm{H}\bm{W}^{(1)}\\
    &= \bm{L}^2\bm{H}(\alpha \bm{W}^{(0)}) + \bm{L}\bm{H}(\beta \bm{W}^{(0)} + \gamma \bm{W}^{(1)})
\end{align}
Replacing $\alpha \bm{W}^{(0)}$ with $\bm{W}^0$ and $\beta \bm{W}^{(0)} + \gamma \bm{W}^{(1)}$ with $\bm{W}^1$, \Cref{eq_14} is transformed to
$\bm{L}^2\bm{H}\bm{W}^0 + \bm{L}\bm{H}\bm{W}^1$.
Then, assuming $\mathrm{P}^1=\bm{L}^2$ and $\mathrm{P}^0=\bm{L}^1$, it produces $\mathrm{P}^1\bm{H}\bm{W}^0 + \mathrm{P}^0\bm{H}\bm{W}^1 = \sum_{k=0}^1\mathrm{P}^k(L)\bm{H}\bm{W}^k$.

To this end, we prove the equivalence between $\sum_s C^{( s)}\bm{H}\bm{W}^{(s)}$ and $\sum_k\mathrm{P}^k(\bm{L})\bm{H}\bm{W}^k$. 

Specially, GAT contains complex non-linear functions, and we reduce its expression to $\bm{H}_{(l+1)}=\sigma((\bm{H}_{l}\bm{W}_{l+1}^{att}{\bm{H}_{l}}^T\odot \bm{A}) \bm{H}_{l}\bm{W}_{l+1})$. In this expression, $\bm{W}_{att}$ is a attention function, and $\odot$ denotes element-wise multiplication. 
Omitting $\odot$ and $\sigma$ to obtain the format $\bm{H}_{l+1}=\bm{H}_{l}{\bm{W}_{l+1}}'$, which is identical to the case when $\mathrm{P}^{(s)}=\bm{I}$ and therefore inclusive in the discussion above.
\end{proof}

\begin{corollary}
% ~\label{theo:search_space_format}
The Search space of GNNs can be unified as $\sum_{k=0}^K\mathrm{P}^k(\bm{L})\bm{X}\bm{W}^k=\bm{D}\bm{W}$ when leaving the activation functions, where $\Vert$ stands for concatenating a set of matrices horizontally, $\bm{D}=\Vert\{\mathrm{P}^k(\bm{L})\bm{X}\}$ is the fixed base, and $\bm{W}=\Vert\{\bm{W}^k\}^T$ is the trainable parameters.
\end{corollary}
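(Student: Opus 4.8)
The plan is to obtain the claimed closed form by \emph{unrolling} the per-layer identity of \Cref{theo:gnn_layer_format} across all $L$ layers once the nonlinearities are dropped. After removing the activation, each layer becomes a purely linear recursion $\bm{H}_{l}=\sum_{k}\mathrm{P}_{l}^{k}(\bm{L})\,\bm{H}_{l-1}\,\bm{W}_{l}^{k}$ with base case $\bm{H}_{0}=\bm{X}$. The key structural observation I would exploit is that the graph operators $\mathrm{P}_{l}^{k}(\bm{L})$ act on the left (on the node index) while the learnable matrices $\bm{W}_{l}^{k}$ act on the right (on the feature index), so these two families never interfere when layers are composed.

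First I would expand the recursion by induction on $l$. Substituting $\bm{H}_{l-1}$ into the expression for $\bm{H}_{l}$ produces a double sum whose generic term is $\mathrm{P}_{l}^{k}(\bm{L})\,\mathrm{P}_{l-1}^{k'}(\bm{L})\,\bm{X}\cdots\bm{W}_{l-1}^{k'}\bm{W}_{l}^{k}$. Here I would invoke the central algebraic fact that the polynomials in the single matrix $\bm{L}$ are closed under products (they pairwise commute), so each product $\mathrm{P}_{l}^{k}(\bm{L})\mathrm{P}_{l-1}^{k'}(\bm{L})$ collapses to one polynomial in $\bm{L}$, while simultaneously the right factors multiply into a single matrix $\bm{W}_{l-1}^{k'}\bm{W}_{l}^{k}$. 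Iterating across all layers yields a finite sum of terms of the form $\mathrm{Q}(\bm{L})\,\bm{X}\,\bm{V}$, with $\mathrm{Q}$ a polynomial in $\bm{L}$ and $\bm{V}$ a product of per-layer weights.

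Next I would \emph{regroup} this finite sum by distinct powers of $\bm{L}$. Writing each $\mathrm{Q}(\bm{L})$ in the monomial basis $\{\bm{L}^{0},\dots,\bm{L}^{K}\}$ (the degree $K$ being bounded by the sum of the per-layer polynomial degrees) and absorbing every scalar coefficient into the adjacent weight matrix, all contributions sharing the factor $\bm{L}^{k}$ combine into a single trainable matrix $\bm{W}^{k}$. This gives exactly $\bm{H}_{L}=\sum_{k=0}^{K}\mathrm{P}^{k}(\bm{L})\,\bm{X}\,\bm{W}^{k}$ with $\mathrm{P}^{k}(\bm{L})=\bm{L}^{k}$ fixed and $\bm{W}^{k}$ trainable, which is the unified expression.

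Finally, I would rewrite this sum as one matrix product via the standard block-concatenation identity: if $\bm{D}=\Vert\{\mathrm{P}^{k}(\bm{L})\bm{X}\}$ stacks the $K+1$ bases horizontally and $\bm{W}=\Vert\{\bm{W}^{k}\}^{T}$ stacks the weights vertically, then $\bm{D}\bm{W}=\sum_{k=0}^{K}\mathrm{P}^{k}(\bm{L})\bm{X}\bm{W}^{k}$, giving the desired $\bm{D}\bm{W}$. The hard part will be the bookkeeping in the regrouping step: rigorously tracking how the many multi-indexed cross terms collapse onto a common monomial basis, and confirming that the collected coefficients are genuinely free trainable parameters rather than constrained products. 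I would also note explicitly that each $\mathrm{P}^{k}(\bm{L})\bm{X}$ is determined entirely by the graph and the input features, so that $\bm{D}$ is indeed a fixed dictionary while only $\bm{W}$ varies.
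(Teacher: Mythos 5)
Your proposal follows essentially the same route as the paper: drop the nonlinearity, unroll the linear recursion $\bm{H}_{l}=\sum_{k}\mathrm{P}_{l}^{k}(\bm{L})\bm{H}_{l-1}\bm{W}_{l}^{k}$ from $\bm{H}_0=\bm{X}$, collapse each product of polynomials in $\bm{L}$ into a single polynomial and each product of per-layer weights into a single matrix, and then read off the sum $\sum_k \mathrm{Q}_k(\bm{L})\bm{X}\bm{V}_k$ as a block product $\bm{D}\bm{W}$. The paper carries this out explicitly for a two-layer network with two terms per layer (producing the four cross terms $\mathrm{P}^{(k\times k')}_{1\times 0}(\bm{L})\bm{X}\bm{W}^{(k\times k')}_{0\times 1}$) rather than by a general induction, but the mechanism is identical. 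The one place you diverge is your final regrouping step: you propose to re-expand every $\mathrm{Q}(\bm{L})$ in the monomial basis $\{\bm{L}^0,\dots,\bm{L}^K\}$ and merge all contributions sharing a power of $\bm{L}$ into one matrix $\bm{W}^k$. The paper does not do this --- it keeps each multi-indexed cross term as its own dictionary atom and simply \emph{relabels} the weight product as the new trainable parameter. This sidesteps exactly the difficulty you flag at the end: the merged coefficients in your version are sums of constrained products of per-layer weights (sharing factors across terms and subject to rank constraints), so treating them as free parameters is an over-parametrization of the original network's function class, whereas the corollary only needs the weaker claim that the output lies in the span of a fixed dictionary with some coefficient matrices. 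If you drop the monomial regrouping and keep the cross-term indexing, your argument matches the paper's and avoids the issue entirely.
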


\label{proof:gnn_dict}
\begin{proof}
Suppose the node features as $\bm{H}_{0}=\bm{X}$, and a two-layer GNN consists of $P_0^{(S)},P_1^{(L)}$ and $\bm{W}_0^{(S)}, \bm{W}_1^{(L)}$. 
Here, we assume $S,L=1$ for simplicity.

Based on \Cref{theo:gnn_layer_format}, the first layer outputs $\bm{H}_{1} = \sum_{s=0}^2\mathrm{P}_0^{(s)}(\bm{L})\bm{X}\bm{W}_0^{(s)}=\mathrm{P}_0^{(0)}(\bm{L})\bm{X}\bm{W}_0^{(0)}+\mathrm{P}_0^{(1)}(\bm{L})\bm{X}\bm{W}_0^{(1)}$.
Then, we pass $H_{1}$ to the second layer the below:
\begin{equation}
\label{eq_18}
    \begin{aligned}
    H_{2}
    &= \sum_{l=0}^2 \mathrm{P}_1^{(l)}(\bm{L}) \bm{H}^{(1)} \bm{W}_1^{(l)}\\
    &= \mathrm{P}_1^{(0)}(\bm{L}) \bm{H}_1 \bm{W}_1^{(0)} + \mathrm{P}_1^{(1)}(\bm{L}) \bm{H}_1 \bm{W}_1^{(1)}\\
    &= \mathrm{P}_1^{(0)}(\bm{L}) (\mathrm{P}_0^{(0)}(\bm{L})\bm{X}\bm{W}_0^{(0)}+\mathrm{P}_0^{(1)}(\bm{L})\bm{X}\bm{W}_0^{(1)}) \bm{W}_1^{(0)} \\
    &+ \mathrm{P}_1^{(1)}(\bm{L}) (\mathrm{P}_0^{(0)}(\bm{L})\bm{X}\bm{W}_0^{(0)}+\mathrm{P}_0^{(1)}(\bm{L})\bm{X}\bm{W}_0^{(1)}) \bm{W}_1^{(1)}\\
    &= \mathrm{P}_{1\times0}^{(0\times 0)}(\bm{L})\bm{X}\bm{W}_{0\times 1}^{(0\times 0)} + \mathrm{P}^{(0\times 1)}_{1\times 0}(\bm{L}) X \bm{W}^{(1\times 0)}_{0\times 1}\\
    &+ \mathrm{P}^{(1\times 0)}_{1\times 0}(\bm{L}) \bm{X} \bm{W}^{(0\times 1)}_{0\times 1} +
    \mathrm{P}^{(1\times 1)}_{1\times 0}(\bm{L}) \bm{X} \bm{W}^{(1\times 1)}_{0\times 1} \\
    &= \sum_k\mathrm{P}^k(L) \bm{X} \bm{W}^k.
    \end{aligned}
\end{equation}

In \Cref{eq_18}, we merge the polynomial of a polynomial as $\mathrm{P}_1^{(0)}\mathrm{P}_0^{(0)}=\mathrm{P}_{1\times 0}^{(0\times 0)}$. Similarly, we replace $\bm{W}_0^{(0)}\bm{W}_1^{(0)}$ with $\bm{W}_{0\times 1}^{(0\times 0)}$. 

Ultimately, we expand \Cref{eq_18} by $\bm{D}=\Vert\{\mathrm{P}^{(0\times 0)}_{1\times 0}(\bm{L}) X, \mathrm{P}^{(0\times 1)}_{1\times 0}(\bm{L}) X, \mathrm{P}^{(1\times 0)}_{1\times 0}(\bm{L}) X, \mathrm{P}^{(1\times 1)}_{1\times 0}(\bm{L}) X\}$ and $\bm{W}=\Vert\{\bm{W}_{0\times 1}^{(0\times 0)},\bm{W}_{0\times 1}^{(1\times 0)},\bm{W}_{0\times 1}^{(0\times 1)},\bm{W}_{0\times 1}^{(1\times 1)}\}$, so that $\sum_k\mathrm{P}_k(\bm{\bm{L}}) X \bm{W}_k = \bm{D}\bm{W}$.
\end{proof}

\subsection{Proof for the Optmization of $l2$-norm objective in NAC}
\begin{theorem}
The gradient from Cross-Entropy loss function is feasible to optimize an $l2$-norm objective in a gradient descent manner, such that the initial $z^0$ converges geometrically to the optimal $z^*$ with a tiny error, e.g., 
\begin{equation}
    \lVert z^{s+1}-z^* \rVert \leq (1-2\alpha \eta)^s\lVert z^0-z^* \rVert + 2\gamma/\alpha,
\end{equation}
where $s$ signifies the current iteration, and $\gamma, \alpha,\eta$ define the numerical bound for the error.
\end{theorem}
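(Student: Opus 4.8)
The plan is to read the stated inequality as a standard \emph{inexact (perturbed) gradient descent} guarantee and to produce it in three stages: first, establish that the quadratic $l2$ term of \Cref{nac_obj_ce} induces a contraction under an exact gradient step; second, bound how far the Cross-Entropy gradient deviates from that exact $l2$ gradient; third, combine the two and unroll the resulting recursion. Concretely, I would write the Cross-Entropy--driven update as $z^{s+1}=z^s-\eta\, g_{CE}(z^s)$, where $g_{CE}$ is the gradient of $\mathcal{L}_{CE}(\bm{y},f(z))$ and $\eta$ is the step size, and then treat $g_{CE}$ as a noisy surrogate for the exact gradient of the quadratic term $\psi(z)\defeq\lVert z-\mathbf{h}^{L}\rVert_F^2$.

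First I would exploit the convexity of $\psi$. Since $\psi$ is quadratic in $z$, it is strongly convex with a unique minimizer; identifying its strong-convexity modulus with $2\alpha$, an exact gradient step obeys the contraction
\begin{equation}
    \lVert z^s-\eta\,\nabla\psi(z^s)-z^* \rVert \le (1-2\alpha\eta)\,\lVert z^s-z^* \rVert,
\end{equation}
valid whenever the step size satisfies $2\alpha\eta<1$. Next I would quantify the mismatch between the two gradients. Using the structure of the linear output layer $f$ together with the linear-separability hypothesis inherited from \Cref{theo:theo2}, the Cross-Entropy gradient decomposes as the $l2$ gradient plus a residual whose norm is uniformly controlled, i.e. $\lVert g_{CE}(z^s)-\nabla\psi(z^s)\rVert\le\gamma$. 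This is the precise sense in which minimizing Cross-Entropy ``satisfies a sufficient condition'' for the $l2$ objective of sparse coding.

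With these two ingredients, the one-step recursion follows from the triangle inequality, giving
\begin{equation}
    \lVert z^{s+1}-z^* \rVert \le (1-2\alpha\eta)\,\lVert z^s-z^* \rVert + O(\eta\gamma).
\end{equation}
Unrolling this over $s$ iterations and summing the geometric series $\sum_{i\ge 0}(1-2\alpha\eta)^i\,O(\eta\gamma)$ collapses the accumulated perturbation into a constant residual of order $\gamma/\alpha$ (the fixed point of the perturbed recursion), which the constant bookkeeping renders as $2\gamma/\alpha$. The factor $(1-2\alpha\eta)^s$ then carries the decaying contribution of the initial error $\lVert z^0-z^* \rVert$, yielding the claimed bound.

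The main obstacle is the second stage: rigorously certifying that $g_{CE}$ stays within distance $\gamma$ of $\nabla\psi$. This requires controlling the softmax nonlinearity in $f$ and arguing that, in the linearly separable regime where both losses are minimized at the same $z^*$, the discrepancy between the two gradients is genuinely bounded rather than scaling with $\lVert z^s-z^* \rVert$; otherwise the residual would contaminate the contraction factor. A secondary technical point is verifying the admissible step-size window $2\alpha\eta<1$, so that the contraction factor lies in $(0,1)$ and the geometric decay, together with the finite fixed-point residual $2\gamma/\alpha$, is actually attained.
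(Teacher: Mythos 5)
Your overall template (a contraction from strong convexity plus a bounded perturbation, unrolled into a geometric series) matches the shape of the claimed bound, but the route you take to instantiate it is not the paper's, and the step you yourself flag as ``the main obstacle'' is where the argument genuinely breaks. You propose to certify $\lVert g_{CE}(z^s)-\nabla\psi(z^s)\rVert\le\gamma$ uniformly, with $\psi(z)=\lVert z-\mathbf{h}^L\rVert_F^2$. This cannot hold globally: $\nabla\psi(z)=2(z-\mathbf{h}^L)$ grows linearly in $z$, whereas the Cross-Entropy gradient through a softmax is bounded (the per-sample residual has magnitude at most $(c-1)/c$, a bound the paper itself derives when establishing smoothness), so the discrepancy scales with $\lVert z^s-z^*\rVert$ and contaminates the contraction factor exactly as you feared. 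Worse, the two objectives do not share a minimizer: $\psi$ is minimized precisely at $z=\mathbf{h}^L$, while under the linear-separability hypothesis of \Cref{theo:theo2} the Cross-Entropy loss has no finite minimizer at all --- its infimum is approached at infinity along the max-margin direction, which is the very content of the result you invoke. So the fixed point $2\gamma/\alpha$ of your perturbed recursion has nothing to anchor to, and your stage two cannot be completed as stated.

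The paper avoids this comparison entirely. It never measures the Cross-Entropy gradient against the $\ell_2$ gradient; instead it imports the notion of a direction $g^s$ being $(\alpha,\beta,\gamma_s)$-correlated with the target $z^*$, i.e.\ $\langle g^s,z^s-z^*\rangle\ge\alpha\lVert z^s-z^*\rVert^2+\beta\lVert g^s\rVert^2-\gamma_s$, together with a black-box convergence theorem from the sparse-coding literature stating that any correlated update rule converges geometrically with residual $2\gamma/\alpha$. The substantive work is then to show that the Cross-Entropy gradient itself satisfies this condition with $\gamma_s=0$, by arguing that the loss is $\beta^*$-smooth (via the softmax residual bound above) and $\alpha^*$-strongly convex (via a probabilistic argument assuming Gaussian-distributed weights and features and uniformly distributed labels). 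In other words, the paper's $\gamma$ arises from the generic correlated-descent theorem, not from a gradient-mismatch bound. To salvage your decomposition you would need either to confine the iterates to a compact set on which the mismatch is provably bounded, or to switch to the paper's strategy of verifying the correlation condition for the Cross-Entropy gradient directly.
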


\label{proof:equal}
\begin{proof}
Given an iterative algorithm $\mathcal{A}$ that optimizes for a solution $z^*\in\mathrm{R}^n$ for a function $f(z)$, it yields $z^{s+1} = z^s - \eta g^s$ in each step $s$. Here, $\eta$ is the step size, and $g^s$ is the gradient vector .
In the following, we give the \textit{sufficient conditions} for such $g^s$, quoted from \cite{nguyen2018provable} and \cite{arora2015simple}.

\begin{definition}
A vector $g^s$ at the $s$-th iteration is $(\alpha,\beta,\gamma_s)$-correlated with a desired solution $z^*$ if
\begin{equation}
    \langle g^s,z^s-z^* \rangle \geq \alpha \lVert z^s - z^* \rVert^2 + \beta \lVert g^s \rVert^2 - \gamma_s.
\end{equation}
\label{defi:correlated}
\end{definition}

Under the convex optimization, if $f$ is $2\alpha$-strong convex and $1/2\beta$-smooth, and $g^s$ is set as $\nabla_z f(z)$, then $g^s$ is $(\alpha, \beta, 0)$-correlated with $z^*$.
\begin{theorem}
(Convergence of approximate gradient descent). Suppose that $g^s$ satisfies the conditions described in \Cref{defi:correlated} for $s=1,2,\cdots,T$. Moreover, $0<\eta\leq 2\beta$ and $\gamma=\max_{s=1}^T \gamma_s$. Then, the following holds for all $s$:
\begin{equation}
    \lVert z^{s+1} - z^{s} \rVert^2 \leq (1-2\alpha\eta)\lVert z^{s} - z^{*} \rVert^2 + 2\eta\gamma_s.
\end{equation}
In particular, the above update converges geometrically to $z^*$ with an error:
\begin{equation}
    \lVert z^{s+1} - z^{*} \rVert^2 \leq (1-2\alpha\eta)^s\lVert z^{0} - z^{*} \rVert^2 + 2\gamma/\alpha.
\end{equation}
\end{theorem}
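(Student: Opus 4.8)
The plan is to take the theorem's hypotheses literally. The $(\alpha,\beta,\gamma_s)$-correlation of \Cref{defi:correlated} is \emph{assumed} here, not to be established, so I will not re-derive it for any particular loss; instead I will push the correlation inequality through the update rule once and then unroll. First I would fix an iteration $s$ and expand the squared distance to the optimum after the step $z^{s+1}=z^s-\eta g^s$, writing the exact identity $\lVert z^{s+1}-z^*\rVert^2 = \lVert z^s-z^*\rVert^2 - 2\eta\langle g^s, z^s-z^*\rangle + \eta^2\lVert g^s\rVert^2$. This polarization is the only algebraic input required for the one-step bound.

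The central step is to insert the correlation lower bound $\langle g^s,z^s-z^*\rangle \ge \alpha\lVert z^s-z^*\rVert^2 + \beta\lVert g^s\rVert^2 - \gamma_s$ into the cross term and collect. The coefficient of $\lVert z^s-z^*\rVert^2$ becomes $1-2\alpha\eta$, the additive slack becomes $2\eta\gamma_s$, and the coefficient of $\lVert g^s\rVert^2$ becomes $\eta^2-2\beta\eta = -\eta(2\beta-\eta)$. This is exactly where the stepsize restriction $0<\eta\le 2\beta$ earns its keep: it forces that last coefficient to be non-positive, so the gradient-norm term may be discarded, leaving the first claimed recursion $\lVert z^{s+1}-z^*\rVert^2 \le (1-2\alpha\eta)\lVert z^s-z^*\rVert^2 + 2\eta\gamma_s$. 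I would emphasize that this is the \emph{only} place the smoothness parameter $\beta$ enters, and that dropping $\eta\le 2\beta$ would leave a positive $\lVert g^s\rVert^2$ term that the argument cannot control.

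With the one-step contraction in hand, the geometric bound follows by unrolling. Writing $e_s := \lVert z^s-z^*\rVert^2$, $r := 1-2\alpha\eta$, and replacing each $\gamma_s$ by $\gamma = \max_s \gamma_s$, the recursion reads $e_{s+1}\le r\,e_s + 2\eta\gamma$, and a one-line induction gives $e_{s+1}\le r^{s+1}e_0 + 2\eta\gamma\sum_{j=0}^{s}r^j$. I would then bound the finite geometric sum by the infinite one, $\sum_{j\ge 0}r^j = 1/(1-r) = 1/(2\alpha\eta)$, so that the accumulated error collapses to $2\eta\gamma\cdot\frac{1}{2\alpha\eta}=\gamma/\alpha$, which is dominated by the quoted $2\gamma/\alpha$; and I would use $r^{s+1}\le r^s$ to match the exponent in the displayed inequality. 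Both reductions presuppose $0\le r\le 1$, i.e.\ $2\alpha\eta\le 1$, a mild implicit stepsize restriction that I would state explicitly rather than leave tacit.

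The main obstacle here is bookkeeping rather than insight: the entire argument is elementary algebra plus a geometric series. The delicate points I would guard against are getting every inequality direction correct through the substitution, confirming that $\eta\le 2\beta$ is invoked at precisely the moment $\lVert g^s\rVert^2$ is dropped, and reconciling the sharp constant $\gamma/\alpha$ produced by the summation with the looser $2\gamma/\alpha$ appearing in the statement. Since the quoted bound is the weaker of the two, it holds \emph{a fortiori}, and I would present this constant slack transparently rather than absorb it silently.
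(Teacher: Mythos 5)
Your proof is correct, but note that the paper itself never proves this statement: it is imported verbatim inside the proof of Theorem 3.6 as a known result, quoted from \citep{nguyen2018provable} and \citep{arora2015simple}, and used as a black box. What you have reconstructed is essentially the original source's argument (Arora et al.'s descent analysis): expand $\lVert z^{s+1}-z^*\rVert^2$ exactly, insert the $(\alpha,\beta,\gamma_s)$-correlation bound of \Cref{defi:correlated} into the cross term, use $0<\eta\le 2\beta$ to make the coefficient $\eta^2-2\beta\eta$ of $\lVert g^s\rVert^2$ non-positive and discard it, then unroll the resulting recursion with a geometric series. So relative to the paper your contribution is to supply the omitted proof, and you supply the right one.

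Three remarks. First, the first display of the statement as printed has $\lVert z^{s+1}-z^{s}\rVert^2$ on the left-hand side; this is a typo (in the source literature it reads $\lVert z^{s+1}-z^{*}\rVert^2$), and your one-step bound proves the corrected form, which is also the form needed for the unrolling. Second, the condition $2\alpha\eta\le 1$ that you rightly insist on stating is in fact essentially automatic: applying Cauchy--Schwarz and AM--GM to the correlation inequality gives $(2\sqrt{\alpha\beta}-1)\lVert g^s\rVert\,\lVert z^s-z^*\rVert\le\gamma_s$, so up to the $\gamma_s$ slack one has $\alpha\beta\le 1/4$, and then $2\alpha\eta\le 4\alpha\beta\le 1$ follows from $\eta\le 2\beta$; this is why the original statement does not list it separately, though making it explicit as you do is the cleaner practice. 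Third, your summed error constant $\gamma/\alpha$ is sharper than the quoted $2\gamma/\alpha$, so the stated bound holds a fortiori, exactly as you say; the factor $2$ in the literature simply absorbs this slack. No gaps.
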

So, if the cross-entropy loss function satisfies the conditions of convexity and smoothness in \Cref{defi:correlated}, it meets the \textit{sufficient conditions} for convergence.

Recall that the cross-entropy loss is defined as:
\begin{equation}
    \begin{aligned}
    g(W)&=-\frac{1}{m}\sum_{i=1}^m\sum_{j=1}^c Y_{ij}\log ( \frac{\exp F_i W_j}{\sum_{k=1}^c \exp F_i W_k} )\\
    &=-\frac{1}{m}\sum_{i=1}^m\sum_{j=1}^c Y_{ij} F_i W_j + \frac{1}{m}\sum_{i=1}^m \log \sum_{k=1}^c \exp F_i W_k
    \end{aligned}
\end{equation}
where $Y$ is one-hot matrix with $c$ classes. Then, we give the gradient w.r.t $W_j$:
\begin{equation}
    \begin{aligned}
    \nabla_{W_j} g = -\frac{1}{m}\sum_{i=1}^m [Y_{ij}=1]F_i + \frac{1}{m}\sum_{i=1}^m \frac{\exp F_i W_j}{\sum_{k=1}^c \exp F_i W_k} F_i,
    \end{aligned}
\end{equation}
where we symbolize the softmax function $s_i = \frac{\exp F_i W_j}{\sum_{k=1}^c \exp F_i W_k}$, and thus:
\begin{equation}
    \begin{aligned}
    \nabla_{W_j} g = \frac{1}{m}\sum_{i=1}^m ( s_i-[Y_{ij}=1] ) F_i.
    \end{aligned}
\end{equation}

{\bf Smoothness:} In the extreme case where
all the softmax values are equal, i.e., $s_i = \frac{1}{c}$ and $Y_{ij}=1$, we can develop an upper bound as follows, % of the absolute value:
\begin{equation}
    |s_i-[Y_{ij}=1]|\leq\frac{c-1}{c}.
\end{equation}
Based on this, we get:
\begin{equation}
\label{eq_27}
    \begin{aligned}
    |\nabla_{W_j} g| &\leq \frac{c-1}{mc} |\sum_{i=1}^m F_i|
    \end{aligned}
\end{equation}
This gives us the Lipschitz constant that $L=\frac{c-1}{mc}\lVert F \rVert$. Sufficiently, there exists $\beta^*$ that the cross-entropy loss function is $\beta^*$-smooth.

{\bf Convexity:}
Given $W_k,W_j,F_i\in\mathbb{R}^d$, we assume each element sampled from a certain normal distribution. Without loss of generality, we give $W_k\sim \mathcal{N}(\mu_1, \sigma_1^2), W_k\sim \mathcal{N}(\mu_2, \sigma_2^2)$, and $F_i\sim \mathcal{N}(\mu, \sigma^2)$. 

In particular, we take $F_i$ as an example and the other two follow the same rule:
\begin{equation}
\label{eq_28}
    \begin{aligned}
    \mathrm{E}[F_i^2]&=\mathrm{E}[(F_i-\mu)^2+2F_i\mu-\mu^2]\\
    &= \sigma^2 + \mu^2,
    \end{aligned}
\end{equation}
which equals to $\frac{1}{d}$, and thus, $\mu^2 = \frac{1}{d} - \sigma^2$. For $W_k,W_j$, we have  $\mu_1^2 = \frac{1}{d} - \sigma_1^2$, and $\mu_2^2 = \frac{1}{d} - \sigma_2^2$. For strong convexity, we need to demonstrate there exists $\alpha\in\mathbb{R}$ supporting 
$\langle\nabla_{W_k}g - \nabla_{W_k}g, W_k - W_j\rangle \geq \alpha \langle  W_k - W_j,  W_k - W_j \rangle$. 
By taking the \Cref{eq_27}, we can analyze the left-hand side as:
\begin{equation}
\label{eq_29}
    \begin{aligned}
    \langle\nabla_{W_k}g - \nabla_{W_k}g, W_k - W_j\rangle 
    &= \frac{1}{m}\sum_{i=1}^m [Y_{ij}\neq Y_{ik}] \langle F_i, W_k - W_j\rangle \\
    & \geq \langle \frac{1}{m}\sum_{i=1}^m [Y_{ij}\neq Y_{ik}]  F_i, W_k - W_j\rangle \\
    & \geq \frac{2(c-1)}{mc^2} \sum_{i=1}^m \langle F_i, W_k - W_j \rangle .
    \end{aligned}
\end{equation}

In \Cref{eq_29}, we assume $Y_i$ obeys uniform distribution that $P(Y_{ik}=1) = \frac{1}{c}$, then we replace $[Y_{ij}\neq Y_{ik}]$ by its average $\frac{2(c-1)}{c^2}$, identical to the probability, $P([Y_{ij}\neq Y_{ik}])$. 

Next, we suppose $\frac{1}{d}\sum_{h=1}{d} F_i$ is an unbiased estimation of its expectation, and same for $W_j, W_k$. Here, we replace $W_j-W_k$ with a new variable as $W'\sim \mathcal{N}(\mu_1-\mu_2, \sigma^2_1+\sigma^2_2)$. With the help of \Cref{eq_29}, we get,
\begin{equation}
    \begin{aligned}
    \langle\nabla_{W_k}g - \nabla_{W_k}g, W_k - W_j\rangle 
    &\geq \frac{2(c-1)}{c^2}  d \mu(\mu_1 - \mu_2).
    \end{aligned}
\end{equation}

By taking these into the right-hand side $\langle  W_k - W_j,  W_k - W_j \rangle$, we have: % $d \mathrm{E}[W'^2]$:
\begin{equation}
    \begin{aligned}
    d \mathrm{E}[W'^2]
    &= d \mathrm{E}[ ( W'- (\mu_1-\mu_2))^2 -(\mu_1-\mu_2)^2 + 2W'(\mu_1-\mu_2) ]\\
    &= d ( \mathrm{E} [ ( W'- (\mu_1-\mu_2)) ^2]-(\mu_1-\mu_2)^2 + 2(\mu_1-\mu_2)\mathrm{E}[W'] ) \\
    &= d( \sigma_1^2 + \sigma_2^2 + (\mu_1 - \mu_2)^2 )
    \end{aligned}     
\end{equation}

The solution above is always positive. Finally, we put  and  in the required inequality, we reach the following inequality
: $\langle\nabla_{W_k}g - \nabla_{W_k}g, W_k - W_j\rangle \geq \alpha \langle  W_k - W_j,  W_k - W_j \rangle$ only if
\begin{equation}
    \label{eq_32}
    \begin{aligned}
        \alpha \leq \frac{2(c-1)\mu(\mu_1 - \mu_2)}{c^2( \sigma_1^2 + \sigma_2^2 + (\mu_1 - \mu_2)^2 )}.
    \end{aligned}     
\end{equation}
Under a mild assumption that $\alpha^*\in\mathbb{R}$ satisfies \Cref{eq_32}, we conclude that the cross-entropy loss function is $\alpha^*$-strongly convex and $\beta^*$-smooth, such that it is $(\frac{\alpha^*}{2}, \frac{1}{2\beta^*},0)$-correlated with $z^*$. Thus, it meets the \textit{sufficient condition} of convergence.
\end{proof}

\subsection{Proof of Theorem~\ref{theo_linear} for the Dictionary Orthogonality in NAC}
~\label{theo_linear_proof}
\begin{theorem}
Let the neural weights of each operator in a deep neural network be an atom and stack them column-wisely, we can guarantee the existence of an orthogonal dictionary.
\end{theorem}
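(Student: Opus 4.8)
The plan is to exploit the high dimensionality of the weight space: when the atoms are the randomly initialized operator weights, concentration of measure forces distinct normalized atoms to be nearly orthogonal, and the fact that the number of operators is tiny compared to the ambient dimension then guarantees that an \emph{exactly} orthogonal dictionary exists. In effect, the theorem is the sparse-coding counterpart of the orthogonality hypothesis used in \Cref{eq:theo2}, and I would prove it by quantifying the mutual coherence of the stacked weights and driving it to zero.

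First I would make the construction explicit. For each operator I vectorize its randomly initialized weight matrix into a column $\bm{d}_i \in \mathbb{R}^n$, where $n$ is the (large) number of weight entries, and stack the $m$ such columns to form the dictionary $\bm{D} = [\bm{d}_1, \dots, \bm{d}_m]$; after the atom-normalization required by the second constraint in \Cref{eq:sparsecoding} each column has unit $\ell_2$ norm. The quantity that controls dictionary quality is the mutual coherence $\mu(\bm{D}) = \max_{i \neq j} |\langle \bm{d}_i/\lVert \bm{d}_i\rVert,\, \bm{d}_j/\lVert \bm{d}_j\rVert\rangle|$, i.e. the largest off-diagonal entry of the Gram matrix $\bm{D}^{\top}\bm{D}$, so the goal reduces to showing this can be made arbitrarily small.

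Next, modelling each entry as an i.i.d. mean-zero random variable (as produced by standard initialization), I would compute the first two moments of the un-normalized inner product $\langle \bm{d}_i, \bm{d}_j \rangle = \sum_{t=1}^{n} d_{it} d_{jt}$: by independence its mean is zero and its variance is $O(n)$, while $\lVert \bm{d}_i\rVert^2$ concentrates around $\Theta(n)$. Hence the normalized inner product has magnitude $O(1/\sqrt{n})$. A concentration inequality (Hoeffding or a sub-Gaussian tail bound) then controls $\Pr[\,|\langle \bm{d}_i, \bm{d}_j\rangle| > \epsilon\,\lVert \bm{d}_i\rVert\,\lVert \bm{d}_j\rVert\,]$, and a union bound over the $\binom{m}{2}$ pairs gives $\mu(\bm{D}) \to 0$ with high probability as $n \to \infty$, establishing the built-in near-orthogonality claimed in the theorem.

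Finally, to upgrade ``near-orthogonal'' to the existence of an exactly orthogonal dictionary, I would use that the number of atoms $m$ (operators across layers) is far smaller than the ambient dimension $n$ in deep networks, so $m \le n$; any set of $m$ nearly-orthogonal, linearly independent columns can then be replaced by an exactly orthonormal set spanning the same subspace, e.g. via QR / Gram--Schmidt, or by drawing the columns from the orthogonal-initialization ensemble used in our experiments. The main obstacle I anticipate is not the algebra but pinning down the probabilistic model: the statement is only meaningful once the initialization distribution and the regime $m \ll n$ are fixed, and making the exact-orthogonality claim rigorous requires arguing that the orthogonalization step does not compromise the role these atoms play as a dictionary spanning the relevant weight subspace.
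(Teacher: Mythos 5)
Your proposal follows essentially the same route as the paper's own proof: both argue that the normalized inner product of two randomly initialized atoms concentrates at zero as the ambient dimension grows, so the mutual coherence of the stacked weight vectors vanishes and the dictionary is (near-)orthogonal. Your version is in fact more careful than the paper's --- the paper invokes the central limit theorem and the weak law of large numbers without a union bound over atom pairs and concludes only ``nearly orthogonal,'' whereas you add the Hoeffding/union-bound step and explicitly address the upgrade from near-orthogonality to an exactly orthogonal dictionary, a gap the paper's proof leaves implicit.
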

\begin{proof}
Given a dictionary $\boldsymbol{H}\in R^{n\times K}$, where $n$ is the number of nodes and $K$ is the number of opearators in each layer, we have its mutual coherence computed as follows,
\begin{equation}
    \varphi=\max _{\boldsymbol{h}_{i}, \boldsymbol{h}_{j} \in \boldsymbol{H}, i \neq j}\left|\left\langle\frac{\boldsymbol{h}_{i}}{\left\|\boldsymbol{h}_{i}\right\|_{2}}, \frac{\boldsymbol{h}_{j}}{\left\|\boldsymbol{h}_{j}\right\|_{2}}\right\rangle\right|,
    %= \max _{i \neq j}\left|\cos \theta_{i j}\right|
\end{equation}
where  $\varphi \in [0,1]$, and $\left \langle \cdot  \right \rangle$ denotes inner product.
Here, each atom, $\boldsymbol{h}_{i}$, is the weights from an operator.
The minimum of $ \varphi$ is $0$ and is attained when there is an orthogonal dictionary, while the maximum is $1$ and it attained when there are at least two collinear atoms (columns) in a dictionary.
\eat{The minimum of $ \varphi$ is reached for an orthogonal dictionary, and the maximum for a dictionary containing at least two collinear atoms (columns), which are $0$ and $1$, respectively.}

Let $E_i =  \frac{\boldsymbol{h}_{i}}{\left\|\boldsymbol{h}_{i}\right\|_{2}}$ and $E_j =  \frac{\boldsymbol{h}_{j}}{\left\|\boldsymbol{h}_{j}\right\|_{2}}$,
by Central Limit Theorem~\citep{Fischer2011}, we know that 
$  \left \langle E_i,E_j \right \rangle/\sqrt{n}$ 
converges to a normal distribution, i.e.,
\begin{equation}
    \begin{aligned}
     \left \langle E_i,E_j \right \rangle=\lim _{n \rightarrow \infty} \sqrt{n}Z,
    \end{aligned}
\end{equation}
where  $Z$ is  a standard normal distribution.
Consider  $\bar{E}$ as the mean value of 
all $ \left \langle E_i,E_j \right \rangle$.
With weak law of large numbers (a.k.a. Khinchin's law)~\citep{Khinchin},
for any positive number $\varepsilon$,
the probability that sample average $\bar{E}$ greater than  $\varepsilon$ converges $0$ is written as 
\begin{equation}
    \begin{aligned}
        \lim_{n \rightarrow \infty} \operatorname{Pr} \left(\left|\bar{E}\right|\geq\varepsilon\right)=0
     \end{aligned}
\end{equation}
This implies that the probability that  the inner product of $E_i$ and $E_j$ is greater than $\varepsilon $
close to zero when $n\rightarrow \infty$.  In other words,
the probability that $E_i$ and $E_j$ are nearly orthogonal goes to $1$ when their dimensionality is high. Therefore, the coherence of this dictionary reaches the minimum at a high dimensionality that holds for deep neural networks naturally.
\end{proof}

\subsection{Realization and Details of NAC and NAC-updating}
\label{subsec:complexity}
\begin{figure}[!ht]
    \footnotesize
    \centering
    \includegraphics[trim=0cm 7cm 4cm 0cm,clip=true,width=0.65\textwidth]{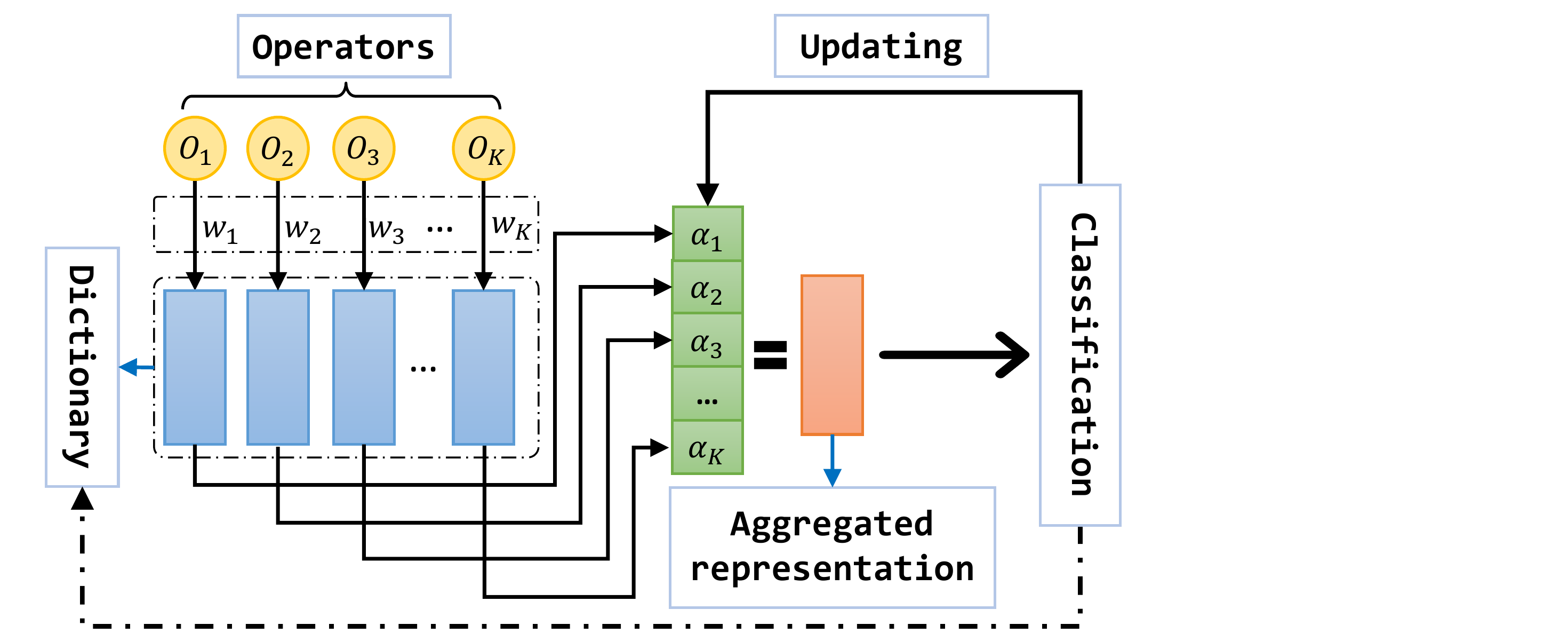}
    \hspace{-2in}
    \caption{
    %\footnotesize
     The framework of the proposed NAC in one layer. NAC  directly learns the architecture $\alpha$ with a fixed dictionary. NAC-updating updates the dictionary in training, showing the additional process with a dash line.}
    \label{fig_nac_framework}
\end{figure}

\begin{center}
\begin{algorithm}[!ht]
\caption{The NAC algorithm}
\begin{algorithmic}[1]
\footnotesize
% \scriptsize
\REQUIRE The search space $\mathcal{A}$;
\ENSURE The architecture $\boldsymbol{\alpha}$\\
Randomly initializing $\bm{W}^l$, for $l=1, \ldots, L$;
set $\boldsymbol{\alpha}=\bf{1}$;
\WHILE{$t = 1, \dots, T$}
    \STATE Performing the feature aggregation at each layer as $\bar{o}^{l}(x) = \bm{o}^{l}\hat{\boldsymbol{\alpha}_{l}} = \sum_{k=1}^{K} \frac{\alpha_{lk}}{|| \bm{\alpha_{l}}||_{2}}o^{lk}(x)$;
    \STATE Computing  $\mathbf{h}_{v}^{l}=\phi\left[ {\bm{W}}^{l} \cdot \bar{o}^{l}\left(\left\{\mathbf{h}_{u}^{l-1}, \forall u \in {N}(v)\right\}\right)\right]$;
    \STATE Optimizing $\boldsymbol{\alpha}$ based on the objective function in \Cref{nac_obj_ce} w.r.t. $\boldsymbol{\alpha}$ under the fixed dictionary $\bm{o}$ ;% as in Eq.7; 
    \STATE Updating $\bm{W}_o$ based on the objective function in \Cref{nac_obj_ce} w.r.t.  $\bm{W}_o$ under fixed $\boldsymbol{\alpha}$;
\ENDWHILE
\STATE Obtain the final architecture $\{\boldsymbol{\alpha}^{*}\}$ from the trained $\boldsymbol{\alpha}$ via an argmax operation at each layer;
\end{algorithmic}
\end{algorithm}
\end{center}

In this work, we propose two realizations of NAC, namely NAC and NAC-updating, where \Cref{fig_nac_framework} outlines the core idea of both methods in one layer. The objectives in \Cref{nac_obj_ce} are optimized by gradient descent, and we therefore omit the details but present the basic pipeline in \Cref{alg_nac1}. The computation of NAC has two major parts: the forward pass and the backward pass. Given the search space, the computation of the forward is then fixed, regarding as a constant. Therefore, the computational complexity mainly focuses on the backward pass in the NAC algorithms.

\textbf{NAC}. The main version of our work has no need to update weights, but only to update architectural parameter $\bm{\alpha}$ during the training process. Therefore, the algorithmic complexity is as  $O(T*\left \| \bm{\alpha} \right \|)$, which is a \textit{linear} function w.r.t $\bm{\alpha}$. The dimension of $\bm{\alpha}$ is often small, which makes the model easy to scale to large datasets and high search space. We call it NAC. %\vsa 

\textbf{NAC-updating}. When updating weights, similar to DARTS, the complexity is estimated as $O(T*(\left \| \bm{\alpha} \right \|  +\left \| w \right \|))$. The dimension of $w$ is often much larger than $\bm{\alpha}$, therefore, the complexity is dominated by updating $w$, where the complexity is  $O(T*\left \| w \right \|)$. Since the dimension of $\bm{\alpha}$ is much smaller than $w$, the complexity of NAC is much less than this version.

\subsection{Experimental Implementation Details}
\label{subsec:impl}
\eat{
\textcolor{red}{\textbf{Environments}. Add a brief description of the software env and hardware. For instance, All the experiments were performed on an Amazon EC2 cluster with an Intel Xeon E5-2686 v4 CPU, 61GB DRAM, and an NVIDIA Tesla V100 GPU}
}

\textbf{Searching configuration.} 
% \todo{We follow the settings in SANE~\citep{zhao2021search} as ours.}
In our experiments we adopt $3$-layer GNN as the backbone.
Unless specified, our experiments follow the same settings for searching architectures as SANE~\citep{zhao2021search} :
\begin{compactitem}[-]
    \item $Architecture\ optimizer.$ We use Adam for training the architecture parameters $\bm{\alpha}$. We set the learning rate as $0.0003$ and  the weight decay as $0.001$.   Also, the $\beta_{1}$ and $\beta_{2}$  are fixed as  $0.5$ and $0.999$, respectively.
    All runs a  constant schedule for training, such as $100$ epochs.
    % The learning rate has a constant schedule for $100$ epochs for all. 

    \item $Weight\ optimizer.$ We use SGD to update models' parameters, i.e., $w$. The learning rate and SGD momentum are given as $0.025$ and $0$, respectively, where the learning rate has a cosine decay schedule for $100$ epochs. We fix the weight decay value, i.e. set $\rho_1=0.0005$.
    
    \item $Batch \ size.$ For transductive tasks, we adopt  in-memory datasets, and the $batch\_size$ is fixed as the size of the dataset themselves. 
\end{compactitem}

\eat{
\begin{table}[!hbt]
    \centering
    \caption{ The performance comparison of our methods under different initialization strategies, including normal, uniform and orthogonal initialization. All three initializations yield close performance (i.e., below 1\% difference), showing that NAC is robust to initialization. Among all, orthogonal initialization shows the best performance and considered as the default choice of NAC.}
    \label{Table_different_inital_app}
    \footnotesize
    \begin{tabular}{cl|cccccc}
        \toprule
        \multirow{1}{*}{} &~&
        \multicolumn{1}{c}{CiteSeer} &
        \multicolumn{1}{c}{Cora} &
        \multicolumn{1}{c}{PubMed} \\
        ~ & ~& {Acc(\%)} & {Acc(\%)}  & {Acc(\%)}  \\
        \midrule
        \multirow{2}{*}{Normal} & LNAC          & \textbf{74.70} &\textbf{87.96}& \textbf{88.56}\\
        & XNAC          & 74.09 & 87.59& 85.50\\
        \midrule 
        \multirow{2}{*}{Uniform} & LNAC    & 73.64 &\textbf{88.52}& \textbf{89.01}\\
        & XNAC          & \textbf{74.40} &87.04& 88.20 \\
        \midrule
        \multirow{2}{*}{Orthogonal} & LNAC          & \textbf{75.00} &\textbf{88.33}& \textbf{89.01}\\
        & XNAC          & 74.40 &87.59& 88.94 \\
        \bottomrule
    \end{tabular}
\end{table}
}

\noindent \textbf{The configuration for retraining phase.}
At the retraining state, we adopt Adam as the optimizer and set the scheduler with cosine decay to adjust the learning rate. The total number of epochs is fixed  $400$ for all methods for fairness. Please refer to the setting of SANE~\citep{zhao2021search} and EGAN~\citep{zhao2021efficient} for more details as we follow this in our experiment.

For CiteSeer dataset, we set the  initial learning rate as $0.005937$ and weight decay as $0.00002007$. The configuration for models is as follows: $hidden\_size=512$, $dropout=0.5$, and using $ReLU$ as the activation function.

For Cora dataset, we set the initial learning rate as $0.0004150$, and weight decay as $0.0001125$. In model, we set $hidden\_size = 256$ , $dropout=0.6$, and use $ReLU$ as the activation function.

For PubMed dataset, we set the initial learning rate as $0.002408$ and weight decay as $0.00008850$. As for the model, we have $hidden\_size=64$ and $dropout=0.5$, and use $ReLU$ as the activation function. %\vspace{-0.3cm}

For Amazon dataset, we set the initial learning rate as $0.002111$ and weight decay as $0.000331$. As for the model, we have $hidden\_size=64$ and $dropout=0.5$, and use $elu$ as the activation function. %\vspace{-0.3cm}

For PPI dataset, we set the initial learning rate as $0.00102$ and weight decay as $0$. As for the model, we have $hidden\_size=512$ and $dropout=0.5$, and use $Relu$ as the activation function.

For ognb-arXiv data, due to the limitation of computational resources, we use the NAS-Bench-Graph\cite{qin2022bench} to get the performance of searched architecture and make some adjustments to align with the supporting search space in NAS-Bench-Graph.

\noindent \textbf{Solving $\normlone$ regularization.}
The $\normlone$ regularization, also known as \textbf{Lasso Regression} (Least Absolute Shrinkage and Selection Operator), adds an absolute value of the magnitude of the coefficient as a penalty term to the loss function \citep{ranstam2018lasso}.
% The $\normlone$ regularization is also called \textbf{Lasso Regression} (Least Absolute Shrinkage and Selection Operator), which adds an absolute value of magnitude of coefficient as penalty term to the loss function. 
Using the $\normlone$ regularization, the coefficient of the less important feature is usually decreased to zero, sparsifying the parameters. It should be noted that since $||\bm{\alpha}||_1$ is not differentiable at $\bm{\alpha} = \bm{0}$, the standard gradient descent approach cannot be used.
% The coefficient of the less significant feature is often shrunk to zero using the $\normlone$ regularization, sparsifying the parameters. It should be noted that the typical gradient descent approach cannot be used since $||\bm{\alpha}||_1$ is not differentiable at $\bm{\alpha} = \bm{0}$.
\\
Despite the fact that the loss function of the Lasso Regression cannot be differentiated, many approaches to problems of this kind, such as \citep{schmidt2009optimization}, have been proposed in the literature. These methods can be broadly divided into three groups: constrained optimization methods, unconstrained approximations, and sub-gradient methods.
\\
Since subgradient methods are a natural generalization of gradient descent, this type of methods can be easily implemented in Pytorch's framework. Lasso Regression can be solved using a variety of subgradient techniques; details on their implementation can be found in \citep{fu1998penalized} and \citep{shevade2003simple}.
% There are a number of Subgradient methods aimed to solving Lasso Regression, and specific implementation details can be found in \citep{fu1998penalized} and \citep{shevade2003simple}.

\noindent \textbf{Computational Complexity Estimation of NAC.}
The computation of NAC has two major parts: the forward pass and the backward pass. Given the search space, the computation of the forward is then fixed and regarded as a constant. Therefore, the computational complexity mainly focuses on the backward pass in the NAC algorithms.\\
The main version of our work does not need to update weights, but only to update architectural parameter $\bm{\alpha}$ during the training process. 
Therefore, the algorithmic complexity is as  $O(T*\left \| \bm{\alpha} \right \|)$, which is a \textit{linear} function w.r.t $\bm{\alpha}$. The dimension of $\bm{\alpha}$ is often small, which makes the model easy to scale to large datasets and high search space. When updating weights of the linear layer, the complexity is estimated as $O(T*(\left \| \bm{\alpha} \right \|  +\left \| \bm{W}_{o} \right \|))$. The dimension of $\bm{W}_{o}$ is a constant number, that equals the number of classes. Therefore, the complexity is almost the same as the main version of NAC, where the complexity is  $O(T*\left \| \bm{\alpha} \right\| + \left \| \bm{W}_{o} \right\|)$. \\
When updating weights, similar to DARTS, the complexity is estimated as $O(T*(\left \| \bm{\alpha} \right \|  +\left \| \bm{w} \right \|))$. The dimension of $\bm{w}$ is often much larger than $\bm{\alpha}$, therefore, the complexity is dominated by updating $\bm{w}$, where the complexity is  $O(T*\left \| \bm{w} \right \|)$. Since the dimension of $\bm{\alpha}$ is much smaller than $\bm{w}$, the complexity of NAC is much less than this type of methods. 

\noindent \textbf{Approximate Architecture Gradient.}
Our proposed theorems imply an optimization problem with $\bm{\alpha}$ as the upper-level variable and $\bm{W}_{o}$ as the lower-level variable:
\begin{equation}~\label{nac_obj_new}
\begin{aligned}
\begin{cases}
\bm{\alpha}^{*} =\underset{\bm{\alpha}}{\operatorname{argmax}}\mathcal{M} \left(\bm{W}_{o}^{*}(\bm{\alpha}), \bm{\alpha}\right)\\ 
\bm{W}_{o}^{*}(\bm{\alpha}) =\underset{\bm{W}_{o}}{\operatorname{argmin}} \mathcal{L}(\bm{\alpha}, \bm{W}_{o}),
\end{cases}
\end{aligned}
\end{equation}
Following ~\citep{liu2019darts}, we can adopt a First-order Approximation to avoid the the expensive inner optimization, which allows us to give the implementation in the \Cref{alg_nac1}.

\subsection{Ablation Studies}
\label{subsec:ablation}

\subsubsection{The Effect of Sparsity}~\label{subsubsec:theeffectofsparsity}
Our model uses the hyperparameter $\rho$ to control the sparsity of the architecture parameter $\bm{\alpha}$, where a large sparsity is to enforce more elements to be zero. 
We investigate the effect of this hyperparameter by varying its value in a considerably large range, such as $\left [ 0.001,10 \right ]$.
We first present the accuracy of different sparse setting in \Cref{fig_sparse_sensitivity}. We find that the results vary little in a considerably wide range, this indicates our models are insensitive to sparsity hyperparameter in general.
\eat{
In \Cref{fig_sparse_nac_searchspace}, we present the search quality under different sparsity levels for each dictionary. The resulting distribution is largely overlapped, which provides another perspective to understand the robustness of our model of this parameter.
This property leaves us decent flexibility in setting this hyperparameter without comprising the performance.
}

\begin{figure*}[!ht]
	\centering
	\subfigure[\small{CiteSeer}]{\includegraphics[scale=0.35]{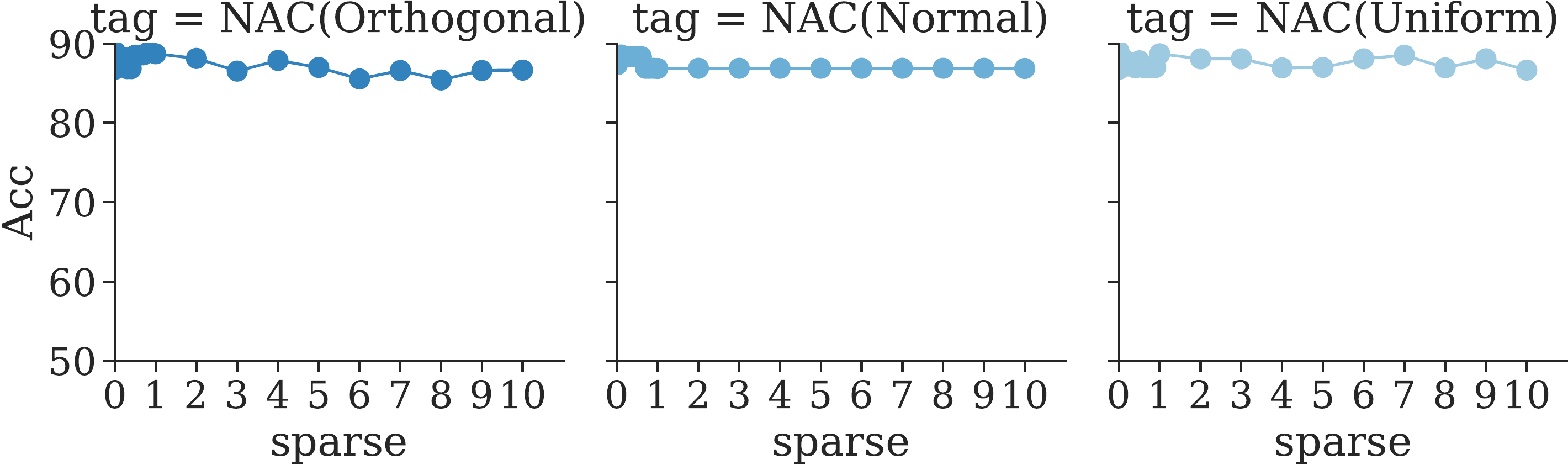}}
	\subfigure[\small{Cora}]{\includegraphics[scale=0.35]{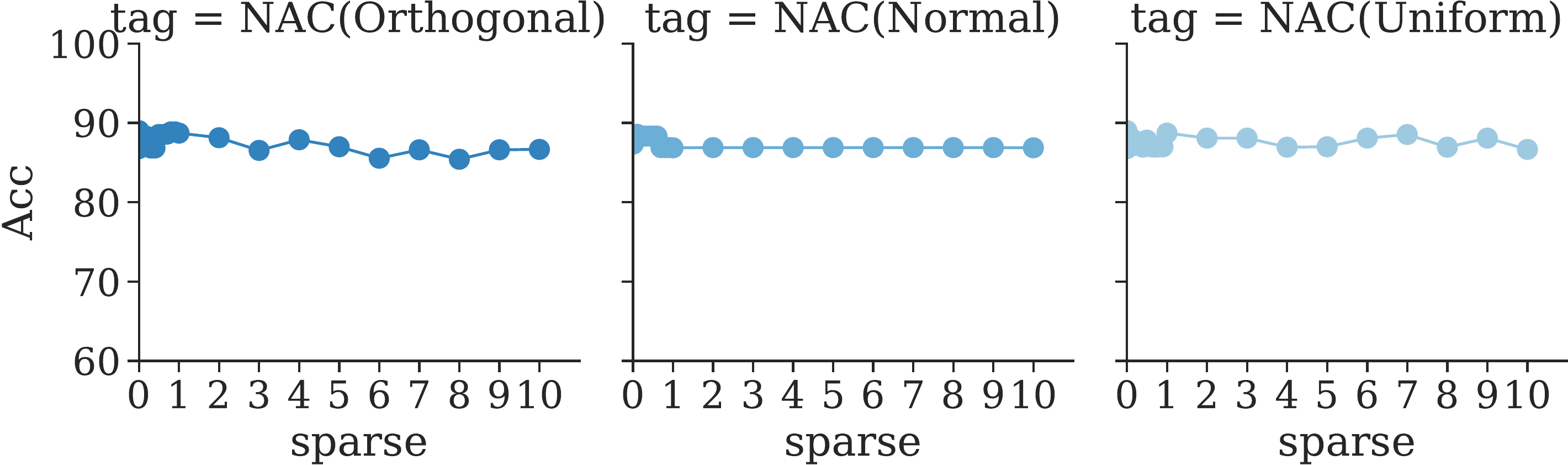}}
	\subfigure[\small{PubMed}]{\includegraphics[scale=0.35]{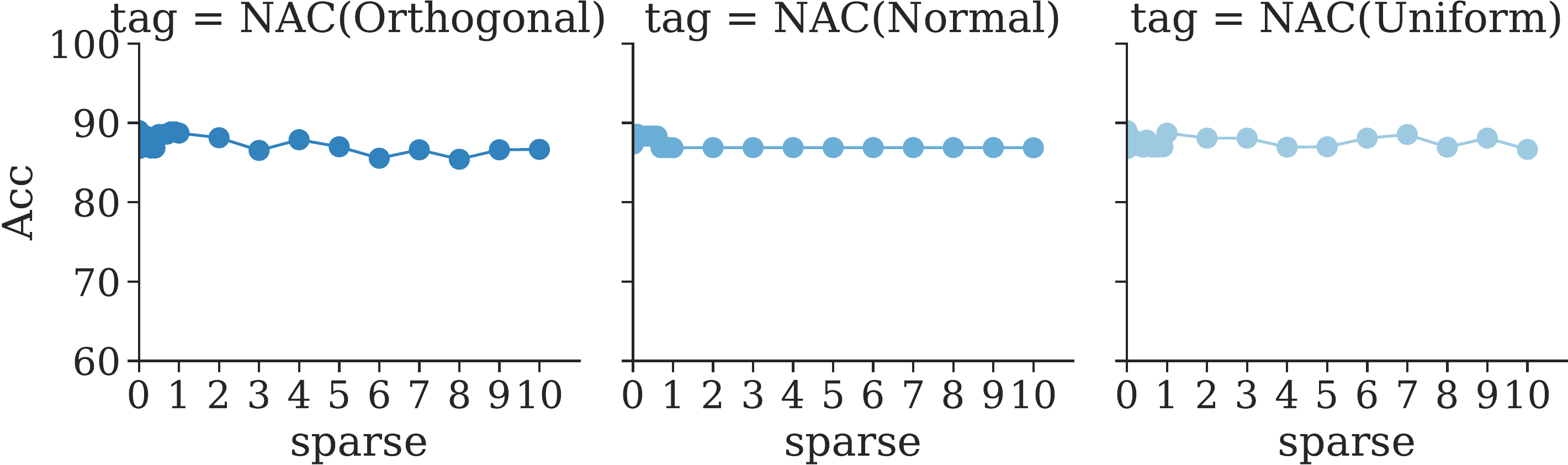}}
	\caption{
	Sensitivity study of the sparsity. Results are with varying sparsity (x-axis) and different initialization NAC methods (i.e., normal, uniform and orthogonal). The variation is small, showing the robustness of our model w.r.t the sparsity.
	}
	\label{fig_sparse_sensitivity}
\end{figure*}

\subsubsection{The Effect of Random Seeds}~\label{subsubsec:randomSeeds}
Random seeds often play an importance role in traditional NAS methods as it affects the initialization significantly. 
People often report the average or the best results under different random seeds, this may lead to poor reproducibility.
To the best our knowledge, this is for the first we explicitly demonstrate the effect of random seeds in this subject.
We run experiments on several random seeds and report the results of NAC on Pubmed dataset, as shown in \Cref{fig:random_seeds}. 
In particular, we implement multiple combinations of random seeds and sparsity to observe the variation on performance. Note that we round the values to integer to fit the table.
In all these combinations, we have the average and variance as $87.32\%$ and $0.9\%$, respectively. 
The average performance is comparable to the best results from all competitive results, which indicates the stability of NAC.
% \todo{ check numbers }

\begin{figure}[!ht]
    \centering
    \includegraphics[width=.35\textwidth]{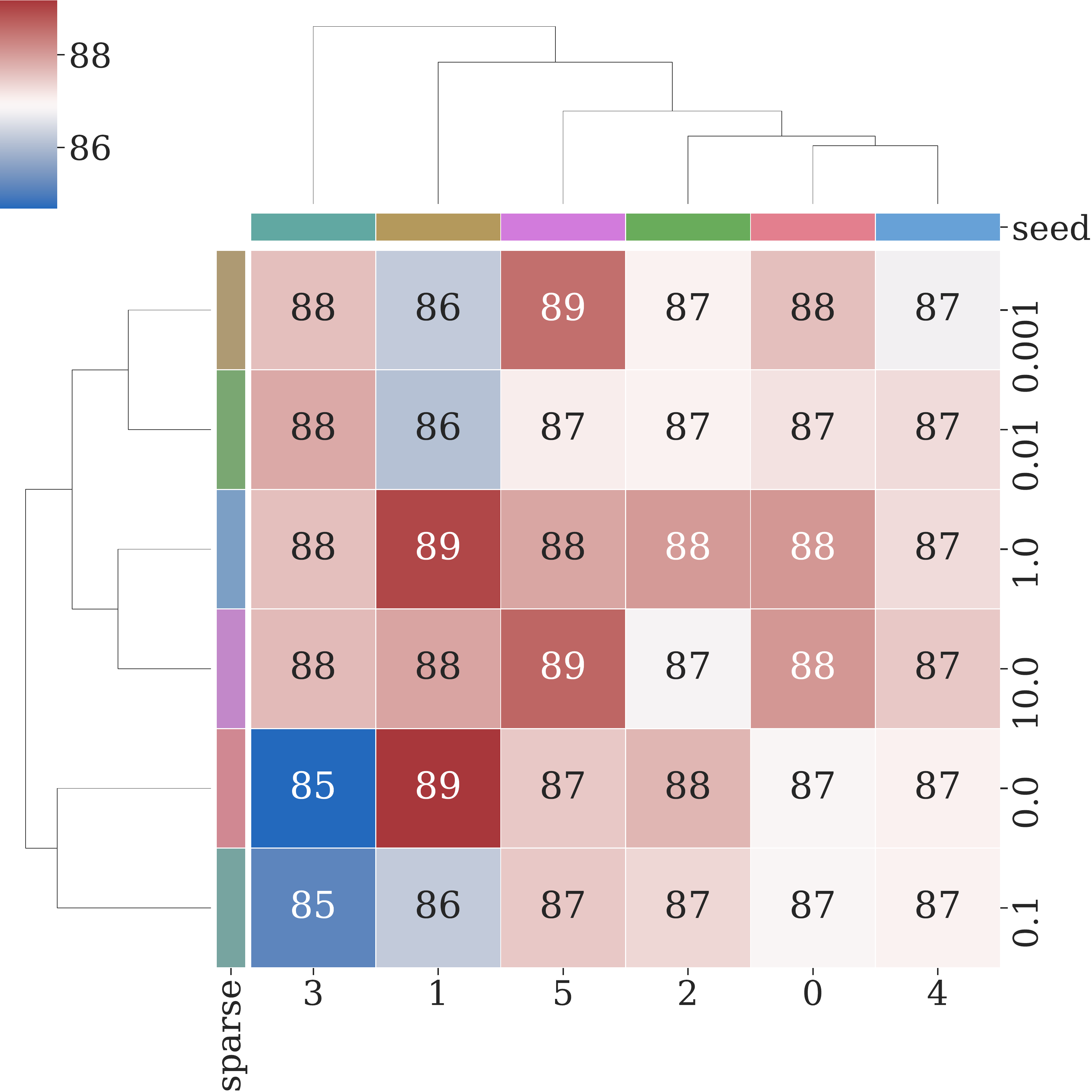}
    \caption{The effects of random seeds of NAC on the accuracy, where x-axis denotes the random seeds and y-axis denotes the sparsity. NAC performs stably with random seeds.}
    \label{fig:random_seeds}
\end{figure}

\subsubsection{The Effect of Training on The Final Linear Layer}
\label{subsubsec:traininglinear}
Our proposed theorems prove that a GNN with randomly initialized weights can make the final output as good as a well-trained network when initializing networks with orthogonal weights and updating the total network using gradient descent. In practice, we find it difficult to determine at what training epoch the optimal weight parameters can be obtained through training linear layer. We noticed that most of the time, the untrained weights in the initial state can often already exceed the accuracy that can be obtained from the weights after multiple epochs of training the final linear layer,  as shown in \Cref{fig:random_linear_updating}.  Therefore, we further omit the training of the final linear layer. It is important to note that this approximation is based on our proposed theorems in which most of the intermediate layers do not require training.

\begin{figure}[!ht]
    \centering
\includegraphics[width=.45\textwidth]{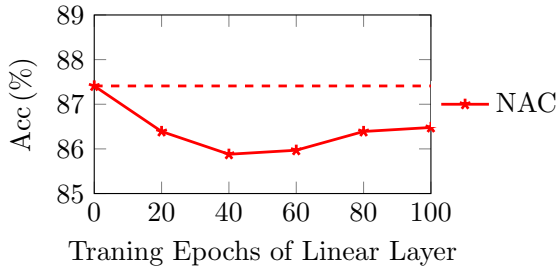}
    \caption{
        The effects of training final linear layer of NAC on the accuracy, where x-axis denotes the training epochs of the final linear layer and y-axis denotes the averaged accuracy of acquired architecture $\bm{\alpha}$ using the corresponding weights.
    }
    \label{fig:random_linear_updating}
\end{figure}

\subsection{Runtime of each method on a single GPU Server}
\eat{To align the platforms used for the speed measurements, we performed all of the speed measurement experiments for the compared methods and our proposed method on the same GPU platform. Speed measurement experiments were measured using PyTorch (version 1.10.2+gpu) on a GPU server with four NVIDIA 3090 GPUs (24G). The experimental results are concluded in \Cref{Table_Speed_GPU}.}
{Apart from the running time on the CPU, we also measure the running time for all methods on a GPU platform, where we use PyTorch (version 1.10.2+gpu) on a GPU server with four NVIDIA 3090 GPUs (24G), as shown in Table \Cref{Table_Speed_GPU}.
These results are consistent with the ones in Table 1, demonstrating our advantage in speed.
}

\begin{table}[!ht]
\centering
\caption{\footnotesize Timing results of the compared methods and our NAC method in one the same GPU server. NAC attains superior performance in efficiency (in seconds).}
% \todo{change hours to mints}.  

\footnotesize
\begin{tabular}{l|cccc}
\toprule
\multirow{1}{*}{} &
\multicolumn{1}{c}{CiteSeer} &
\multicolumn{1}{c}{Cora} &
\multicolumn{1}{c}{PubMed} &
\multicolumn{1}{c}{ Computers}
\\
\midrule
RS & 196.00 & 328.00 & 461.00 & 900.00\\ 
BO & 225.00 & 355.00 & 462.00 & 878.00\\
GraphNAS & 6193.00 & 6207.00 & 6553.00 & 8969.00\\
GraphNAS-WS  & 947.00 & 1741.00 & 2325.00 & 4343.00  \\
SANE & 35.00 & 41.00 & 43.00 & 43.00\\ \midrule
NAC & \textbf{14.00} & \textbf{14.00} & \textbf{15.00} & \textbf{14.00} \\
NAC-updating & 42.00 & 31.00 & 36.00 & 42.00\\ 
\bottomrule
\end{tabular}
\label{Table_Speed_GPU}
\end{table}

\subsection{Performance on Inductive Tasks}
~\label{appendix:exp_ppi}
To validate the effectiveness of our approach to the Inductive task, we performed a set of experiments on the PPI dataset. The experimental results are concluded in \Cref{Table_PPI}. The experimental results show that our proposed method effectively outperforms the best baseline method by about 4\% in terms of Micro-F1 score. Besides, our method achieves an $8\times$ speedup in terms of running time than SANE. Also, the non-updating scheme of the NAC approach exceeds the NAC-updating method effectively.\\
Experiments on the PPI dataset further validate the effectiveness and superiority of our proposed method.

\begin{table}[!ht]
\centering
\caption{\footnotesize Experimental results on the compared methods: our NAC attains superior performance on PPI dataset in both Micro-F1 score (\%) and efficiency (in hours).}

\footnotesize
\begin{tabular}{l|cccc}
\toprule
\multirow{1}{*}{} &
\multicolumn{1}{c}{PPI(Micro-F1(\%))} &
\multicolumn{1}{c}{PPI(Time(h))} &
\\
\midrule
SANE & 91.01${_{\pm6.83}}$ & 2.50 \\
NAC & \textbf{95.16${_{\pm0.03}}$} & \textbf{0.31}  \\
NAC-updating & 94.47${_{\pm7.09}}$ & 4.68 \\ 
\bottomrule
\end{tabular}
\label{Table_PPI}
\end{table}

\subsection{Performance on ogbn-arXiv}~\label{appendix:exp_ognb}
To validate the effectiveness of our approach to the Inductive task, we performed a set of experiments on the ogbn-arXiv dataset. The experimental results are concluded in \Cref{Table_ogbn}. The experimental results show that our proposed method effectively outperforms the best baseline method by about 4\% in terms of Micro-F1 score. Besides, our method achieves an $3\times$ speedup in terms of running time than SANE. Also, the non-updating scheme of the NAC approach exceeds the NAC-updating method effectively.\\
Experiments on the ogbn-arXiv dataset further validate the effectiveness and superiority of our proposed method.
\begin{table}[!ht]
\centering
\caption{Experimental results on the compared methods: our NAC attains superior performance on ogbn-arXiv dataset in Accuracy (\%).}
\footnotesize
\begin{tabular}{l|cccc}
\toprule
\multirow{1}{*}{} &
\multicolumn{1}{c}{ogbn-arXiv(Accuracy(\%))} &
\multicolumn{1}{c}{ogbn-arXiv(Time(min))}
\\
\midrule
SANE & 70.94${_{\pm0.85}}$ & 1.53 \\
NAC & \textbf{71.13${_{\pm0.59}}$} & \textbf{0.50} \\
NAC-updating & 71.10${_{\pm0.47}}$ & 2.20\\ 
\bottomrule
\end{tabular}
\label{Table_ogbn}
\end{table}

\subsection{Searched Architectures for Each Dataset of our method}
We visualize the searched architectures (top-1) by NAC on different datasets in \Cref{fig_architecutres}.
\begin{itemize}
    \item For Citeseer dataset, the searched result is [GAT, GCN, Chebconv];
    \item For Cora dataset, the searched result is [GIN, GIN, GCN];
    \item For Pubmed dataset, the searched result is [GCN, GAT\_Linear, Geniepath;
    \item For Amazon Computers dataset, the searched result is [Geniepath, GCN, SAGE];
    \item For PPI dataset, the searched result is [Chebconv, GAT\_COS, Chebconv];
\end{itemize}

\begin{figure*}[!hb]
    \centering
    \begin{minipage}[b]{0.19\textwidth}
     \captionsetup{font={small}}
		\centering
		\includegraphics[scale=0.70]{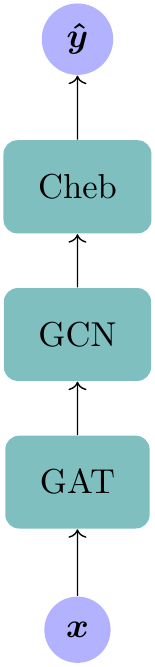}
		\centerline{\small{(a) CiteSeer}}
	\end{minipage}
    % \hspace{.08in}
	\begin{minipage}[b]{0.19\textwidth}
        \captionsetup{font={small}}
		\centering
		\includegraphics[scale=0.70]{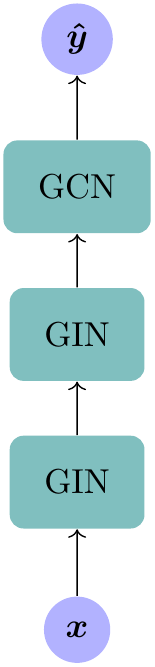}
		\centerline{\small{(b) Cora}}
	\end{minipage}
    % \hspace{.08in}
	\begin{minipage}[b]{0.19\textwidth}
        \captionsetup{font={small}}
		\centering
		\includegraphics[scale=0.70]{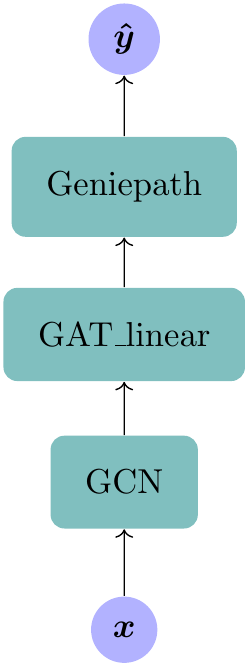}
		\centerline{\small{(c) PubMed}}
	\end{minipage}
	\begin{minipage}[b]{0.19\textwidth}
        \captionsetup{font={small}}
		\centering
		\includegraphics[scale=0.70]{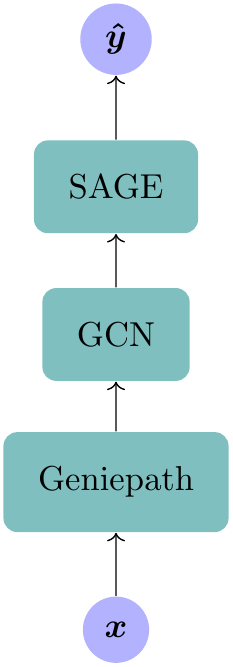}
		\centerline{\small{(d) Amazon Computers}}
	\end{minipage}
	\begin{minipage}[b]{0.19\textwidth}
        \captionsetup{font={small}}
		\centering
		\includegraphics[scale=0.70]{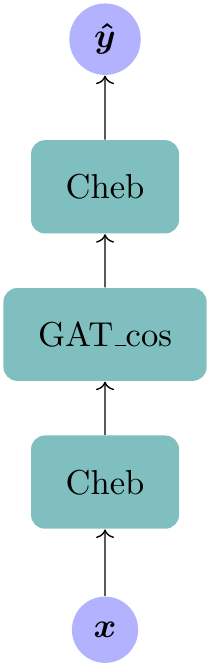}
		\centerline{\small{(e) PPI}}
	\end{minipage}
	\caption{The searched architectures of NAC on benchmark datasets.}
	\label{fig_architecutres}
\end{figure*}

\subsection{Related Work}
\label{appendix:related_work}
Another line of work~\citep{chen2021neural}~\citep{shu2021nasi}  uses neural tangent kernels (NTK) to search the network structure, called training-free NAS and focuses on  CNN architectures. 
These hold strong assumptions when analyzing due to the need for infinite width of networks, and thus far from reality. In contrast, we do not have such an assumption by taking advantage of the built-in linearity in GNNs to get untrained GNNs to work.

% \subsection{REPRODUCIBILITY STATEMENT}
% The supplemental material includes the code for our experiments.

% \section{You \emph{can} have an appendix here.}
% You can have as much text here as you want. The main body must be at most $8$ pages long.
% For the final version, one more page can be added.
% If you want, you can use an appendix like this one, even using the one-column format.
% %%%%%%%%%%%%%%%%%%%%%%%%%%%%%%%%%%%%%%%%%%%%%%%%%%%%%%%%%%%%%%%%%%%%%%%%%%%%%%%
% %%%%%%%%%%%%%%%%%%%%%%%%%%%%%%%%%%%%%%%%%%%%%%%%%%%%%%%%%%%%%%%%%%%%%%%%%%%%%%%
% \fi

\end{document}